\newtheorem{theorem}{Theorem}
\newtheorem{lemma}[theorem]{Lemma}
\newtheorem{proposition}[theorem]{Proposition}
\newtheorem{corollary}[theorem]{Corollary}
\newtheorem{longtheorem}{Theorem}
\newtheorem{longlemma}[longtheorem]{Lemma}
\newtheorem{longcorollary}[longtheorem]{Corollary}
\theoremstyle{definition}
\newtheorem{definition}{Definition}
\newtheorem{example}{Example}
\newtheorem{longdefinition}{Definition}
\newtheorem{longexample}{Example}
\newcommand{\N}{\mathbb{N}}
\newcommand{\commnt}[1]{}
\def\And{\wedge}                
\def\AND{\bigwedge}             
\def\Or{\vee}                   
\def\PC{\mathcal P}
\def\KC{\mathcal K}
\def\KCt{\mathcal{K}_{\geq 2}}
\def\KC{\mathcal K}
\def\And{\wedge}                
\def\AND{\bigwedge}             
\def\Or{\vee}                   
\def\OR{\bigvee}                
\def\sm{\setminus}
\def\red#1#2{#1{\downarrow}_{#2}}
\def\max{\text{max}}
\tikzstyle{arg}=[draw, thick, circle, fill=gray!15,inner sep=3pt]
\newcommand{\Args}{\textit{Arg}}
\newcommand{\Pairs}{\textit{Pairs}}
\def\noPairs#1{\overline{\textit{Pairs}_#1}}
\def\noPairsCl#1{\left(\overline{\textit{Pairs}_#1}\right)^*}
\newcommand{\cf}{\textit{cf}}
\newcommand{\adm}{\textit{adm}}
\newcommand{\naive}{\textit{naive}}
\newcommand{\stb}{\textit{stb}}
\newcommand{\prf}{\textit{pref}}
\newcommand{\pref}{\textit{pref}}
\newcommand{\stage}{\textit{stage}}
\newcommand{\sem}{\textit{sem}}
\newcommand{\semi}{\textit{sem}}
\newcommand{\CAF}{\textit{CAF}}
\newcommand{\AF}{\textit{argumentation framework}}
\newcommand{\Aa}{\mathfrak{A}}
\newcommand{\Ss}{\mathbb{S}}
\newcommand{\Tt}{\mathbb{T}}
\newcommand{\Ee}{\mathbb{E}}
\newcommand{\RX}{\mathfrak{R}}
\newcommand{\ff}{\mathfrak{f}}
\newcommand{\at}{\mapsto}
\newcommand{\card}[1]{\left|#1\right|}
\newcommand{\dcl}{\textit{dcl}}
\newcommand{\irr}{\textit{irr}}
\newcommand{\sym}{\textit{sym}}
\newcommand{\und}{\textit{und}}
\newcommand{\mis}{\textit{MIS}}
\newcommand{\allafs}{\textit{AF}_{\Aa}}
\newcommand{\af}{{\sc af}}
\newcommand{\afs}{{\sc af}s}
\newcommand{\sigmamax}[1]{\sigma_{\max}^{#1}}
\newcommand{\sigmamaxcon}{\sigmamax{\text{con}}}
\newcommand{\presection}{\vspace*{-5mm}}
\newcommand{\mysection}[1]{\iflong{\section{#1}}\else{{\presection\section{#1}}}\fi}
\newcommand{\presubsection}{\vspace*{-4mm}}
\newcommand{\mysubsection}[1]{\iflong{\subsection{#1}}\else{{\presubsection\subsection{#1}}}\fi}
\newif\iflong
\newcommand{\pregather}{\vspace*{-2mm}}
\newcommand{\postgather}{\vspace*{-2mm}}
\newcommand{\lvspace}[1]{{\iflong\else\vspace*{#1}\fi}}
\newif\ifnmr 
\begin{document}

\nocopyright

\title{Compact Argumentation Frameworks\thanks{This research has been supported by DFG (project BR~1817/7-1) and FWF (projects I1102 and P25518).}}

\author{
Ringo Baumann \and Hannes Strass\\
Leipzig University, Germany 
\AND
Wolfgang Dvo\v{r}{\'a}k\\
\smaller University of Vienna, Austria 
\AND 
Thomas Linsbichler 
\and Stefan Woltran\\
\smaller Vienna University of Technology, Austria 
}

\maketitle

\begin{abstract}
Abstract argumentation frameworks (AFs) are one of the most studied
formalisms in AI.  In this work, we introduce a certain subclass of AFs
which we call compact. Given an extension-based semantics, 
the corresponding compact AFs are characterized by the feature 
that each argument of the AF occurs in at least one extension.
This not only guarantees a certain notion of fairness; 
compact AFs are thus also minimal in the sense that no argument can be 
removed without changing the outcome.
We address the following questions in the paper: 
(1) How are the classes of compact AFs related for different semantics?
(2) Under which circumstances can AFs be transformed into equivalent compact ones?
(3) Finally, we show that compact AFs are indeed a non-trivial subclass, 
since the verification problem remains $\coNP$-hard for certain semantics.
\end{abstract}

\lvspace{2mm}

\mysection{Introduction}
\label{sec:intro}
\lvspace{-2mm}
In recent years, \emph{argumentation} has become a major concept
in AI research \cite{DBLP:journals/ai/Bench-CaponD07,argu-book}.
In particular, Dung's well-studied \emph{abstract argumentation frameworks} (\afs) \cite{dung95}
are a simple, yet powerful formalism for modeling and deciding argumentation problems.
Over the years, various \emph{semantics} have been proposed,
which may yield different results (so called \emph{extensions})
when evaluating an \af\ \cite{dung95,Verheij96,CaminadaCD12,baro11}.
Also, some subclasses of \afs\, such as
acyclic, symmetric, odd-cycle-free or bipartite \afs,
have been considered,
where for some of these classes different semantics collapse~\cite{Coste-MarquisDM05,Dunne07}.

In this work we introduce a further class, which
to the best of our knowledge has not received attention in the literature,
albeit the idea is simple.
We will call an \af\ \emph{compact}
(with respect to a semantics $\sigma$),
if each of its arguments appears in at least one extension under $\sigma$.
Thus, compact \afs\ yield a ``semantic'' subclass since its
definition is based on the notion of extensions.
Another example of such a semantic subclass are coherent \afs\ \cite{DunneB02};
there are further examples in~\cite{agree,DvorakJWW14}.

Importance of compact {\sc af}s mainly stems from the following two aspects.
\ifnmr{
First, compact \afs\ possess a certain fairness behavior
in the sense that each argument has the chance to be accepted.
This might be a desired feature in some of the application areas
such as decision support \cite{AmgoudDM08},
where \afs\ are employed for a comparative evaluation of different options.
Given that each argument appears in some extension
ensures that the model is well-formed in the sense that
it does not contain impossible options.
}
\else{
First, compact \afs\ possess a certain fairness behavior
in the sense that each argument has the chance to be accepted,
which might be a desired feature in some of the application areas
\afs\ are currently employed in, such as decision support \cite{AmgoudDM08}.
}
\fi
The second and more concrete aspect is the issue of normal-forms of \afs.
Indeed, compact \afs\ are attractive for such a normal-form,
since none of the arguments can be removed without changing the extensions.

Following this idea we are interested in the question
whether an arbitrary \af\ can be transformed into a compact \af\
without changing the outcome under the considered semantics.
It is rather easy to see that under the \emph{naive} semantics,
which is defined as maximal conflict-free sets,
any \af\ can be transformed into an equivalent compact \af.
However, as has already been observed by \citeauthor{DunneDLW13}~(\citeyear{DunneDLW13}),
this is not true for other semantics.
As an example consider the following \af\ $F_1$,
where nodes represent arguments and directed edges represent attacks.
\begin{center}
\iflong{
%
\begin{tikzpicture}[scale=1,>=stealth]
 
		\path   (1,0)   node[arg](x){$x$}
			++(-1.2,-0.2)  node[arg](a){$a$}
			++(-1,0) node[arg, inner sep=2pt](aa){$a'$}
			++(1.7,-0.6) node[arg](b){$b$}
			++(1,0) node[arg, inner sep=2pt](bb){$b'$}
			++(0.7,+0.6) node[arg](c){$c$}
			++(1,0) node[arg, inner sep=2pt](cc){$c'$}
			;
                \path [<->, thick] 
		        (a) edge (aa)
			(b) edge (bb)
                        (c) edge (cc)
			 ;
                \path [->, thick]
                        (a) edge (x)
                        (b) edge (x)
                        (c) edge (x)
                        ;
                \draw[loop left,thick, distance=0.4cm, out=50, in=130,->] (x) edge (x);
\end{tikzpicture}
}\else{
\begin{tikzpicture}[xscale=1.2,>=stealth]
 
		\path   (1,0)   node[arg](x){$x$}
			++(-1,0.3)  node[arg](a){$a$}
			++(-1,0) node[arg, inner sep=2pt](aa){$a'$}
			++(1,-0.6) node[arg](b){$b$}
			++(-1,0) node[arg, inner sep=2pt](bb){$b'$}
			++(3,0.3) node[arg](c){$c$}
			++(1,0) node[arg, inner sep=2pt](cc){$c'$}
			;
                \path [<->, thick] 
		        (a) edge (aa)
			(b) edge (bb)
                        (c) edge (cc)
			 ;
                \path [->, thick]
                        (a) edge (x)
                        (b) edge (x)
                        (c) edge (x)
                        ;
                \draw[loop left,thick, distance=0.4cm, out=50, in=130,->] (x) edge (x);
\end{tikzpicture}
\vspace{-7pt}
}\fi
\end{center}
The \emph{stable} extensions (conflict-free sets attacking all other arguments)
of $F_1$ are $\{a,b,c\}$,
$\{a,b',c'\}$,
$\{a',b,c'\}$,
$\{a',b',c\}$,
$\{a,b,c'\}$,
$\{a',b,c\}$, and
$\{a,b',c\}$.
It was shown in \cite{DunneDLW13} that there is no compact \af\
(in this case an $F_1'$ not using argument $x$)
which yields the same stable extensions as $F_1$.
By the necessity of conflict-freeness any such compact \af\
would only allow conflicts between arguments
$a$ and $a'$, $b$ and $b'$, and $c$ and $c'$, respectively.
Moreover, there must be attacks in both directions for each of these conflicts
in order to ensure stability.
Hence any compact \af\ having the same stable extensions as $F_1$
necessarily yields $\{a',b',c'\}$ in addition.
As we will see,
all semantics under consideration share certain criteria
which guarantee impossibility of a translation to a compact \af.

Like other subclasses, compact \afs\ decrease complexity of certain decision
problems. This is obvious by the definition for credulous acceptance (does an 
argument occur in at least one extension). For skeptical acceptance (does an 
argument $a$ occur in all extensions) in compact \afs\ this problem reduces to checking
whether $a$ is isolated. If yes, it is skeptically accepted; if no, $a$ is 
connected to at least one further argument which has to be credulously accepted
by the definition of compact \afs. But then, it is the case for any semantics 
which is based on conflict-free sets
that $a$ cannot be skeptically accepted, since it will not appear together with $b$ in 
an extension.
However, as we will see, the problem of verification (does a given set of arguments form an extension)
remains $\coNP$-hard for certain semantics,
hence enumerating all extensions of an \af\ remains non-trivial.

An exact characterization of the collection of all sets of extensions
which can be achieved by a compact \af\ under a given semantics $\sigma$
seems rather challenging.
We illustrate this on the example of stable semantics.
Interestingly, we can provide an exact characterization under the condition that
a certain conjecture holds:
Given an \af\ $F$ and two arguments
which do not appear jointly in an extension of $F$,
one can always add an attack between these two arguments
(and potentially adapt other attacks in the \af)
without changing the stable extensions.
This conjecture is important for our work, but also an interesting question in and of itself.

To summarize, the main contributions of our work are:
\lvspace{-2mm}
\begin{itemize}
\item
We define the classes of compact \afs\ for some of the most prominent
semantics (namely naive, stable, stage, semi-stable and preferred)
and provide a full picture of the relations between these classes.
Then we show that
the verification problem is still intractable for stage, semi-stable and preferred semantics.
\item
Moreover we use and extend recent results on maximal numbers of extensions \cite{BaumannS13}
to give some impossibility-results for \emph{compact realizability}.
That is, we provide conditions under which for an \af\ with a certain number of extensions
no translation to an equivalent (in terms of extensions) compact \af\ exists.
\item
Finally, we study \emph{signatures}
   \cite{DunneDLW14}
for compact  \af s
exemplified on the stable semantics.
An exact characterization relies on the open explicit-conflict conjecture
mentioned above. However,
we give some sufficient conditions for an extension-set to be expressed
as a stable-compact \af.
For example, 
it holds
that any \af\ with at most
three stable extensions possesses an equivalent compact \af.
\end{itemize}

\iflong{\ifnmr{}\else{
The ambivalent numbering in this version of the paper
aims at maximizing convenience for readers of the short version.
}\fi}
\else{%
A longer version of this paper with all proofs can be found at 
\url{http://www.dbai.tuwien.ac.at/research/project/adf/BaumannDLSW14_longversion.pdf}
}\fi

\lvspace{2mm}
\mysection{Preliminaries}
\label{sec:prelim}

\ifnmr{
In what follows, we recall the necessary background on 
abstract argumentation. For an excellent overview, we refer to
\cite{baro11}.
}
\else{
In what follows, we briefly recall the necessary background on 
abstract argumentation. For an excellent overview, we refer to
\cite{baro11}.
}
\fi

Throughout the paper we assume a countably infinite domain $\Aa$ of arguments.
An \AF\ ({\sc af}) is a pair $F=(A,R)$ where $A\subseteq \Aa$ is a non-empty, finite set of arguments 
and $R \subseteq A \times A$ is the attack relation. 
The collection of all {\sc af}s is given as $\allafs$.
For an \af\ $F=(B,S)$ we use $A_F$ and $R_F$ to refer to $B$ and $S$, respectively.
We write $a \at_F b$ 
for $(a,b) \in R_F$ and 
$S \at_F a$ 
(resp.\ $a \at_F S$)
if $\exists s \in S$ such that $s \at_F a$ (resp.\ 
$a \at_F s$).
For $S \subseteq A$, the \emph{range} of $S$ (wrt.\ $F$), denoted $S_F^+$, is the set 
$S \cup \{b \mid S \at_F b\}$. 
%
%

Given $F=(A,R)$, 
an argument $a \in A$ is \emph{defended} (in $F$) by 
$S \subseteq A$ if for each $b \in A$, 
such that $b\at_F a$, also $S \at_F b$.
A set $T$ of arguments is defended (in $F$) by $S$ if each $a\in T$ is 
defended by $S$ (in $F$).
%
A set $S \subseteq A$ is \emph{conflict-free} (in $F$), if there are no arguments $a,b \in S$, 
such that $(a,b) \in R$.
\ifnmr{
$\cf(F)$ denotes the set of all conflict-free sets in $F$.
}
\else{
We denote the set of all conflict-free sets in $F$ as $\cf(F)$.
}\fi
$S\in\cf(F)$ is called \emph{admissible} (in $F$) if $S$ defends itself.
\ifnmr{
$\adm(F)$ denotes the set of admissible sets in $F$.
}
\else{
We denote the set of admissible sets in $F$ as $\adm(F)$.  
}\fi

The semantics we study in this work are the naive, stable, 
preferred, stage, and semi-stable extensions. Given $F=(A,R)$ they are 
defined as subsets of $\cf(F)$ as follows:
%
\ifnmr{
\begin{itemize}
	\item $S \in \naive (F)$, if there is no $T \in \cf (F)$ with $T \supset S$
	\item $S \in \stb(F)$, if 
		$S\at_F a$ for all $a\in A\sm S$
	\item $S \in \prf(F)$, if $S \in \adm(F)$ and 
		$\nexists T \in \adm(F)$
			s.t.\ 
		$T {\supset} S$ 
	\item $S \in \stage(F)$, if 
		$\nexists T \in \cf(F)$ with $T_F^+ \supset S_F^+$
	\item $S \in \sem(F)$, if $S \in \adm(F)$ and 
		$\nexists T \in \adm(F)$
			s.t.\ 
		$T_F^+ \supset S_F^+$
\end{itemize}
}
\else{
\begin{itemize}
	\item $S \in \naive (F)$, if there is no $T \in \cf (F)$ with $T \supset S$
	\item $S \in \stb(F)$, if 
		$S\at_F a$ for all $a\in A\sm S$
	\item $S \in \prf(F)$, if $S \in \adm(F)$ and 
		$\nexists T \in \adm(F)$
			s.t.\ 
		$T \supset S$ 
	\item $S \in \stage(F)$, if 
		$\nexists T \in \cf(F)$ with $T_F^+ \supset S_F^+$
	\item $S \in \sem(F)$, if $S \in \adm(F)$ and 
		$\nexists T \in \adm(F)$
			s.t.\ 
		$T_F^+ \supset S_F^+$
\end{itemize}
}
\fi

\iflong{}
\else{
\postgather
}
\fi

We will make frequent use of the 
following concepts.

\begin{definition}
\label{def:argspairs}
Given $\Ss \subseteq 2^{\Aa}$,
$\Args_\Ss$ denotes $\bigcup_{S\in\Ss} S$ and
$\Pairs_\Ss$ denotes $\{ (a,b) \mid 
\exists S\in \Ss: \{a,b\}\subseteq S\}$.
$\Ss$ is called an \emph{extension-set} (over $\Aa$) if
$\Args_\Ss$ is finite.
\end{definition}

{ 
\iflong
As is easily observed, for all considered semantics $\sigma$,
$\sigma(F)$ is an extension-set for any \af\ $F$.
\fi
} 

\mysection{Compact Argumentation Frameworks}
\label{sec:compact_afs}


\begin{definition}
\label{def:caf}
Given a semantics $\sigma$
the set of
\emph{compact argumentation frameworks}
under $\sigma$ is defined as
$
\CAF_\sigma = \{F \in \allafs \mid 
\Args_{\sigma(F)} = A_F \}
$.
We call an \af\ $F \in \CAF_\sigma$ just $\sigma$-compact.
\end{definition}


\noindent

Of course the contents of $\CAF_\sigma$
differ with respect to the semantics $\sigma$.
Concerning relations between the classes of compact \afs\
note that if for two semantics $\sigma$ and $\theta$ it holds that
$\sigma(F) \subseteq \theta(F)$ for any \af\ $F$, 
then also $\CAF_\sigma \subseteq \CAF_\theta$.
Our first important result provides a full picture of
the relations between classes of compact \afs\ under the semantics we consider.

\begin{proposition}
\label{prop:sup_naive_stage_stb}
\begin{enumerate}
\item
$\CAF_\semi \subset \CAF_\pref$;
\item
$\CAF_\stb \subset \CAF_\sigma \subset \CAF_\naive$
for $\sigma \in \{\pref,\semi,\stage\}$;
\item
$\CAF_\theta \not\subseteq \CAF_\stage$ and $\CAF_\stage \not\subseteq \CAF_\theta$
for $\theta \in \{\pref,\semi\}$.
\end{enumerate}
\end{proposition}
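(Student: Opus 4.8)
The strategy is to establish the non-strict inclusions via the general observation already recorded before the statement (if $\sigma(F)\subseteq\theta(F)$ for all $F$, then $\CAF_\sigma\subseteq\CAF_\theta$), and then to exhibit separating examples for strictness and for the incomparabilities in item~(3). For the non-strict parts: $\semi(F)\subseteq\prf(F)$ holds for every $F$ (semi-stable extensions are in particular preferred), giving $\CAF_\semi\subseteq\CAF_\prf$ for item~(1); $\stb(F)\subseteq\sigma(F)$ for $\sigma\in\{\prf,\semi,\stage\}$ (every stable extension is preferred, semi-stable and stage), giving $\CAF_\stb\subseteq\CAF_\sigma$; and $\sigma(F)\subseteq\naive(F)$ for $\sigma\in\{\prf,\semi,\stage\}$ since each of these is a conflict-free set that is $\subseteq$-maximal or range-$\subseteq$-maximal, hence also naive (a range-maximal conflict-free set is $\subseteq$-maximal conflict-free, and an admissible $\subseteq$-maximal-among-admissible set is, after a short argument, maximal conflict-free too — actually one must be slightly careful here: preferred extensions need not be naive in general, so for the $\prf\subseteq\naive$ direction at the level of \emph{compact} frameworks I would instead argue directly that a $\prf$-compact AF is $\naive$-compact, or route through $\CAF_\prf\subseteq\CAF_\naive$ via the chain $\semi\subseteq\prf$ is not enough; the cleanest route is to note every preferred extension extends to a naive one with the same arguments plus possibly more, so in a $\prf$-compact AF every argument already sits in a preferred extension and hence in some conflict-free set, but that does not immediately give it sits in a \emph{naive} one — so here the honest move is: every conflict-free set extends to a naive set, and a preferred extension is conflict-free, so it extends to a naive superset; since arguments only grow, $\Args_{\prf(F)}\subseteq\Args_{\naive(F)}$, giving the inclusion of compact classes).

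For the strict separations I would produce concrete small AFs: (i) a framework that is $\prf$-compact but not $\semi$-compact — one where some argument lies in a preferred extension but in no semi-stable one (a standard pattern: an argument whose only preferred extension has strictly smaller range than a competing one); (ii) for item~(2), frameworks witnessing $\CAF_\stb\subsetneq\CAF_\sigma$ — e.g. an AF with no stable extension at all but with $\sigma$-extensions covering all arguments (so it is $\sigma$-compact but trivially not $\stb$-compact, as $\Args_{\stb(F)}=\emptyset\neq A_F$); and $\CAF_\sigma\subsetneq\CAF_\naive$ — an AF where some argument appears in a naive set but in no $\sigma$-extension (easy for $\prf,\semi$ using an argument only defendable-against in a non-admissible conflict-free set; for $\stage$ one needs an argument in a naive set whose range is dominated, which requires slightly more care).

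For item~(3), the incomparability of $\CAF_\stage$ with $\CAF_\prf$ and $\CAF_\semi$, I would give two examples: one AF that is $\stage$-compact but not $\prf$-compact (hence not $\semi$-compact, since $\semi\subseteq\prf$) — typically an AF with odd cycles where stage semantics reaches arguments that admissibility-based semantics cannot defend; and one AF that is $\semi$-compact (hence $\prf$-compact) but not $\stage$-compact — an AF where an argument lies in a semi-stable extension but every conflict-free set containing it has a range strictly dominated by another. I expect the main obstacle to be the $\stage$ side of all separations: stage extensions are not monotone-friendly (they are not built from admissibility, and "range-maximal conflict-free" behaves subtly under adding arguments), so verifying that a candidate argument genuinely appears in \emph{no} stage extension — and that the chosen frameworks really have the claimed stage extensions — is the delicate bookkeeping. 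I would therefore pick the stage-relevant examples as small as possible (a handful of arguments, built around a single odd cycle attached to a symmetric gadget) and verify their stage extensions exhaustively by hand.
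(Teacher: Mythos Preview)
Your proposal is correct and follows essentially the same approach as the paper: the inclusions come from the general observation that $\sigma(F)\subseteq\theta(F)$ (resp.\ that every $\sigma$-extension, being conflict-free, extends to a naive one) forces $\CAF_\sigma\subseteq\CAF_\theta$, and the strict separations and incomparabilities are witnessed by small hand-crafted \afs. In particular, your self-correction on the $\CAF_\prf\subseteq\CAF_\naive$ step---noting that preferred extensions need not be naive, but that conflict-free sets extend to naive ones so $\Args_{\prf(F)}\subseteq\Args_{\naive(F)}$---is exactly the paper's argument; and the paper's separating examples (the $3$-cycle for $\CAF_\stage\not\subseteq\CAF_\theta$, and somewhat larger gadgets built around odd cycles and symmetric cliques for the other directions) are precisely of the kinds you anticipate, with the $\stage$-side verifications indeed being the fiddly part.
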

\begin{proof}
\noindent (1)
$\CAF_\semi \subseteq \CAF_\pref$ is by the fact
that, in any {\sc af} $F$, $\semi(F) \subseteq \pref(F)$.
Properness follows from the \af\ $F'$ in Figure~\ref{fig:caf_relations} (including the dotted part)\footnote{
The construct in the lower part of the figure
represents symmetric attacks between each pair of arguments.}.
Here $\pref(F') = \{\{z\},$
$\{x_1,a_1\},$
$\{x_2,a_2\},$
$\{x_3,a_3\},$
$\{y_1,b_1\},$
$\{y_2,b_2\},$
$\{y_3,b_3\}\}$, but
$\semi(F') = (\pref(F') \sm \{\{z\}\})$,
hence $F' \in \CAF_\prf$, but $F' \notin \CAF_\semi$.

\begin{figure}[t]
\centering
%
%
\begin{tikzpicture}[scale=0.9,>=stealth]
	\tikzstyle{arg}=[draw, thick, circle, fill=gray!15,inner sep=2pt]
		\path (0,0)     node[arg](a3){$a_3$}
			++(1.2,-0.2)	node[arg](a1){$a_1$}
			++(1.2,0.2)	node[arg](a2){$a_2$}
			++(2,0)	node[arg](b3){$b_3$}
			++(1.2,-0.2)	node[arg](b1){$b_1$}
			++(1.2,0.2)	node[arg](b2){$b_2$}
			;
		\path (0,-1.4)  node[arg](x1){$x_1$}
			++(1.2,0) node[arg](x2){$x_2$}
			++(1.2,0) node[arg](x3){$x_3$}
			++(2,0) node[arg](y1){$y_1$}
            ++(1.2,0) node[arg](y2){$y_2$}
            ++(1.2,0) node[arg](y3){$y_3$}
			;
                \path (3.4,-0.4)node[arg,dotted,inner sep=3pt](z){$z$};
		\path [->, thick]
		    (x1) edge (a3)
			(x2) edge (a1)
			(x3) edge (a2)
			(y1) edge (b3)
			(y2) edge (b1)
			(y3) edge (b2)
            (a3) edge (a1)
            (a1) edge (a2)
            (b3) edge (b1)
            (b1) edge (b2)
            [bend right, out=-20, in=-160](a2) edge (a3)
            (b2) edge (b3)
			 ;
                \draw[<->,rounded corners=4pt, thick]
				(x1) -- (0,-2) -- (6.8,-2) -- (y3);
                \draw[->,thick] (1.2,-2) -- (x2);
                \draw[->,thick] (2.4,-2) -- (x3);
                \draw[->,thick] (4.4,-2) -- (y1);
                \draw[->,thick] (5.6,-2) -- (y2);

                \path[<->,thick,dotted,out=-160,in=30] (z) edge (x1);
                \path[<->,thick,dotted,out=-135,in=30] (z) edge (x2);
                \path[<->,thick,dotted,out=-110,in=20] (z) edge (x3);
                \path[<->,thick,dotted,out=-70,in=160] (z) edge (y1);
                \path[<->,thick,dotted,out=-45,in=150] (z) edge (y2);
                \path[<->,thick,dotted,out=-20,in=150] (z) edge (y3);
                
\end{tikzpicture}
\vspace{-5pt}
\caption{\afs\ illustrating the relations between various semantics.}
\label{fig:caf_relations}
\vspace{-0.1cm}
\end{figure}
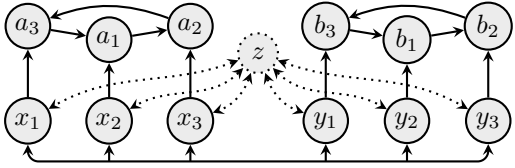

\noindent (2)
%
Let $\sigma \in \{\pref,\semi,\stage\}$.
The $\subseteq$-relations follow from the fact
that, in any {\sc af} $F$,
$\stb(F) \subseteq \sigma(F)$
and each $\sigma$-extension is, by being conflict-free,
part of some naive extension.
The {\sc af} $(\{a,b\},\{(a,b)\})$, which is compact under naive but not under $\sigma$,
and \af\ $F$ from Figure~\ref{fig:caf_relations} (now without the dotted part),
which is compact under $\sigma$ but not under stable,
show that the relations are proper.

\noindent (3)
The fact that $F'$ from Figure~\ref{fig:caf_relations}
(again including the dotted part)
is also not $\stage$-compact
shows $\CAF_\pref \not\subseteq \CAF_\stage$.
{ 
\iflong
Likewise, the \af\ $G$ 
depicted below
is $\sem$-compact, but not $\stage$-compact.

\begin{center}
\vspace{-5pt}
\begin{tikzpicture}[scale=0.85,>=stealth]
	\tikzstyle{arg}=[draw, thick, circle, fill=gray!15,inner sep=2pt]
		\path (0,0)    node[arg,inner sep=3pt](c){$c$}
                      ++(2.6,0)node[arg,inner sep=3pt](a){$a$}
                      ++(2.6,0)  node[arg,inner sep=3pt](b){$b$}
                      ;
		\path (0,-1.3)    node[arg](s1){$s_3$}
		       ++(1,0)  node[arg](s2){$s_1$}
                       ++(1,0)  node[arg](s3){$s_2$}
                       ++(1.2,0)node[arg](t1){$t_3$}
		       ++(1,0)  node[arg](t2){$t_1$}
                       ++(1,0)  node[arg](t3){$t_2$}
                       ++(1.2,0)node[arg](u1){$u_3$}
		       ++(1,0)  node[arg](u2){$u_1$}
                       ++(1,0)  node[arg](u3){$u_2$}
                       ;
                 \path (0.2,-2.8) node[arg](x1){$x_1$}
                       ++(1.6,0)node[arg](x2){$x_2$}
                       ++(1.6,0)node[arg](x3){$x_3$}
                       ++(1.6,0)node[arg](x4){$x_4$}
                       ++(1.4,0)node[arg](x5){$x_5$}
                       ++(1,0)  node[arg](x6){$x_6$}
                       ++(1,0)  node[arg](x7){$x_7$}
                       ;
              \path[->,thick,in=-60,out=90] (x1) edge (s1);
              \path[->,thick,bend right,out=0,in=-160] (x2) edge (t2)
                                                       (x1) edge (t1);
   
              \path[->,thick]
                     (x2) edge (s2)
                     (x3) edge (s3)
                     (x3) edge (t3)

                     (x5) edge (u1)
                     (x6) edge (u2)
                     (x7) edge (u3)

                     
                     (s1) edge (c)
                     (s2) edge (c)
                     (s3) edge (c)
                     (b) edge (t1)
                     (b) edge (t2)
                     (b) edge (t3)

                     (a) edge (c)

                     (s1) edge (s2)
                     (s2) edge (s3)
                     (t1) edge (t2)
                     (t2) edge (t3)
                     (u1) edge (u2)
                     (u2) edge (u3)
                     [bend right,out=-50,in=-130]
                     (s3) edge (s1)
                     (t3) edge (t1)
                     (u3) edge (u1)
                     ;
                     
                 \path[<->,thick]
                     (a) edge (b)
                     ;

                \draw[<->,rounded corners=4pt, thick]
				(x1) -- (0.2,-3.5) -- (8.4,-3.5) -- (x7);
                \draw[->,thick] (1.8,-3.5) -- (x2);
                \draw[->,thick] (3.4,-3.5) -- (x3);
                \draw[->,thick] (5.0,-3.5) -- (x4);
                \draw[->,thick] (6.4,-3.5) -- (x5);
                \draw[->,thick] (7.4,-3.5) -- (x6);
                
                \foreach \brg / \bx in {x1/0.0, x2/1.8, x3/3.4}
                \foreach \arg / \x in {c/0, a/2.6, b/5.2}
                \draw[->,rounded corners=4pt, thick]
				(\brg) -- (\bx,-2.1) -- (-0.5,-2.1) -- (-0.5,0.6) -- (\x,0.6) -- (\arg);
                
                \foreach \arg / \x in {a/2.6, b/5.5}
                \draw[->,rounded corners=4pt, thick]
                                (x4) -- (5.8,-2.5) -- (5.8,-0.6) -- (\x,-0.6) -- (\arg);
                \foreach \arg / \x in {s1/0, s2/1, s3/2, t1/3.2, t2/4.2, t3/5.2}
                \draw[->,rounded corners=4pt, thick]
                                (x4) -- (5.8,-2.5) -- (5.8,-1.94) -- (\x,-1.95) -- (\arg); 
\end{tikzpicture}
\vspace{-5pt}
\end{center}

The reason for this is that argument $a$ does not occur in any stage extension.
Although $\{a,u_1,x_5\},$ $\{a,u_2,x_6\},$ $\{a,u_3,x_7\} \in \semi(G)$,
the range of any conflict-free set containing $a$ is a proper subset of
the range of every stable extension of $G$.
$\stage(G)= \{\{c,u_i,x_4\} \mid i \in \{1,2,3\}\} \cup 
            \{\{b,u_i,s_j,x_{i+4}\} \mid i,j \in \{1,2,3\}\} \cup
            \{\{t_i,u_j,s_i,x_i\} \mid i,j \in \{1,2,3\}\}$.
Hence $\CAF_\semi \not\subseteq \CAF_\stage$.

\noindent Finally, the \af\ $(\{a,b,c\}, \{(a,b),(b,c),(c,a)\})$ 
shows $\CAF_\stage \not\subseteq \CAF_\theta$ for $\theta \in \{\pref,\semi\}$.
\else
Likewise, there is an \af\
(to be found in the long version)
which is
$\sem$-compact, but not $\stage$-compact.
Finally, the \af\ $(\{a,b,c\}, \{(a,b),(b,c),(c,a)\})$ 
shows $\CAF_\stage \not\subseteq \CAF_\theta$ for $\theta \in \{\pref,\semi\}$.
\fi
} 
\end{proof}

 %
%

Considering compact \afs\ obviously has effects on the computational complexity of reasoning.
While credulous and skeptical acceptance are now easy (as discussed in the introduction) the next theorem shows that verifying extensions
is still as hard as in general \afs.

\begin{theorem}\label{thm:complexity}
For $\sigma \in \{\pref, \semi, \stage\}$, {\sc af} $F=(A,R)\in \CAF_\sigma$ and $E \subseteq A$, 
it is $\coNP$-complete to decide whether $E \in \sigma(F)$.
\lvspace{-2mm}
\end{theorem}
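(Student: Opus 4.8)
The plan is to establish membership in $\coNP$ and then $\coNP$-hardness, the latter being the interesting direction since compactness could a priori make the problem easier.

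For membership, note that for $\sigma \in \{\pref,\semi,\stage\}$, deciding $E \in \sigma(F)$ in a \emph{general} \af\ is already in $\coNP$: one checks in polynomial time that $E$ is conflict-free (and, for $\pref,\semi$, admissible), then the only remaining obstacle is maximality, whose violation is witnessed by a single set $T$ with $T \supset E$ (resp.\ $T_F^+ \supset E_F^+$) of the appropriate type. Restricting to $F \in \CAF_\sigma$ only shrinks the instance space, so membership is inherited for free.

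For hardness, the natural strategy is a reduction from the known $\coNP$-hard general verification problem: given an arbitrary \af\ $F$ and candidate $E$, build a $\sigma$-compact \af\ $F'$ together with a set $E'$ such that $E \in \sigma(F)$ iff $E' \in \sigma(F')$. The obstacle is that $F$ itself typically has arguments appearing in no $\sigma$-extension. So first I would take $F = (A,R)$ and the target $E$; without loss of generality one may assume $E$ is conflict-free (otherwise reject immediately), and one may further massage $F$ so that the only candidate extensions of interest are ``large'' ones — e.g.\ by adding for each argument $a \in A$ a fresh self-loop-protected gadget that makes $a$ defensible, or more simply by working with the known reductions for these semantics that already produce instances where the problematic argument sits in a particular local configuration. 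Then, for \emph{every} argument $b$ of $F$ that fails to occur in some $\sigma$-extension, I would attach a small ``padding'' gadget — a pair or triple of fresh mutually-attacking arguments $b_1,b_2,(b_3)$ each attacking all original arguments except via a controlled pattern — analogous to the $x_i/a_i$ and $s_i/t_i/u_i$ constructions already used in the proof of Proposition~\ref{prop:sup_naive_stage_stb}. These gadgets are designed so that (i) each fresh argument lands in some $\sigma$-extension of $F'$, making $F'$ compact, (ii) each original $\sigma$-extension of $F$ extends to a $\sigma$-extension of $F'$, and (iii) conversely any $\sigma$-extension of $F'$ restricts to one of $F$, and in particular $E' := E \cup (\text{canonical gadget choices})$ is a $\sigma$-extension of $F'$ exactly when $E$ is one of $F$.

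The delicate point — and the main obstacle — is making the gadgets simultaneously (a) compact-inducing, (b) semantics-faithful for all three of $\pref$, $\semi$, $\stage$ at once, and (c) not creating spurious maximal (admissible or range-maximal) sets that would either destroy faithfulness or silently fix up $E'$. For $\pref$ and $\semi$ one must ensure the gadgets do not enlarge the admissible sets containing $E$; for $\stage$ and $\semi$ one must control ranges carefully, since adding attackers changes $S_F^+$ globally — this is precisely the subtlety exploited in the $\stage$ vs.\ $\semi$ separations above. I would handle this by making each gadget's arguments attack a fixed ``sink'' region whose range behaviour is uniform, so that the range of any extension of $F'$ decomposes as the range within $F$ plus a fixed gadget contribution, reducing range-maximality in $F'$ to range-maximality in $F$. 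Once the gadget is shown to have these three properties, the correctness equivalence $E \in \sigma(F) \Leftrightarrow E' \in \sigma(F')$ is a routine case check over the semantics, the reduction is clearly polynomial, and $\coNP$-hardness — hence $\coNP$-completeness — follows.
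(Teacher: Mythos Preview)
Your membership argument is fine and matches the paper.

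Your hardness plan has a genuine gap. You propose to reduce from the general verification problem and, for an arbitrary input \af\ $F$, attach a padding gadget to ``every argument $b$ of $F$ that fails to occur in some $\sigma$-extension''. But deciding which arguments these are is itself hard: for $\pref$ it is the complement of credulous acceptance (hence $\coNP$-hard), and for $\semi$ and $\stage$ credulous acceptance is $\Sigma_2^p$-complete. So your reduction is not polynomial-time computable as stated. Padding \emph{all} arguments uniformly might rescue this, but then you have to actually exhibit a gadget with properties (i)--(iii), and you have not done so; the references to ``analogous to'' constructions in Proposition~\ref{prop:sup_naive_stage_stb} are not close enough to constitute a proof, particularly for range-based semantics where the global effect of added attacks on $S_F^+$ is exactly what makes this delicate.

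The paper avoids this entirely by reducing directly from \textsc{unsat} rather than from general verification. It starts from the \emph{specific} known \af\ $F_\varphi$ encoding a CNF $\varphi$, whose structure is explicit, so one knows in advance exactly which arguments fail to appear in extensions and why. One then adds a small fixed set of fresh arguments (e.g.\ a mutual-attack clique $t_1,\ldots,t_4$ for $\pref$) tailored so that (a) every argument now lies in some extension and (b) a designated singleton such as $\{t_4\}$ is a $\sigma$-extension iff $F_\varphi$ had only the empty extension, i.e.\ iff $\varphi$ is unsatisfiable. The three semantics are handled by separate, semantics-specific gadgets (in particular $\stage$ needs a rather different base construction), not by a single uniform device as you hoped. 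You do gesture at this route in your parenthetical about ``working with the known reductions'', and that is in fact the line that works; the surrounding general-reduction plan does not.
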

\begin{short-or-nmr-proof}
For all three semantics the problem is known to be in $\coNP$~\cite{CaminadaCD12,DimopoulosT96,dvorwolt11}.
For hardness we only give a (prototypical) proof for $\pref$.
%
We use a standard reduction from CNF formulas $\varphi(X) = \bigwedge_{c\in C} c$ with each clause $c\in C$ 
a disjunction of literals from $X$ to an {\sc af} $F_{\varphi}$ 
with arguments $A_{\varphi} = \{\varphi, \bar{\varphi}_1,\bar{\varphi}_2,\bar{\varphi}_3\} \cup C \cup X \cup \bar{X}$ and
attacks 
(i) $\{ (c, \varphi) \mid c\!\in\!C \}$, 
(ii) $\{ (x , \bar{x}) , (\bar{x},x)\!\mid x\!\in\!X\}$,
(iii) $\{ (x ,c) \mid x \mbox{ occurs in } c \} \cup \{ (\bar{x}, c) \mid \neg x \mbox{ occurs in } c\}$,
(iv) $\{(\varphi,\bar{\varphi}_1),(\bar{\varphi}_1, \bar{\varphi}_2),(\bar{\varphi}_2, \bar{\varphi}_3),(\bar{\varphi}_3, \bar{\varphi}_1)\}$, and
(v) $\{(\bar{\varphi}_1,x),(\bar{\varphi}_1,\bar{x}) \mid x \in X\}$.
It holds that $\varphi$ is satisfiable iff 
there is an $S \neq \emptyset$ in $\sigma_1(F_\varphi)$~\cite{DimopoulosT96}.
We extend $F_{\varphi}$ with four new arguments $\{t_1,t_2,t_3,t_4\}$ and the following attacks:
(a) $\{(t_i,t_j), (t_j,t_i) \mid 1 \leq i<j \leq 4\}$,
(b) $\{(t_1,c) \mid c \in C\}$,
(c) $\{(t_2,c), (t_2, \bar{\varphi}_2) \mid c \in C\}$ and
(d) $\{(t_3,\bar{\varphi}_3)\}$.
This extended \af\ is in $\CAF_\pref$ and moreover 
$\{t_4\}$ is a preferred extension thereof iff 
$\pref(F_{\varphi})=\{\emptyset\}$ iff
$\varphi$ is unsatisfiable. 
\end{short-or-nmr-proof}
\begin{long-and-not-nmr-proof}
For all three semantics the problem is known to be in $\coNP$~\cite{CaminadaCD12,DimopoulosT96,dvorwolt11}.
The main idea of our proofs is to adapt the $\coNP$-hardness proofs for general \afs\ such that the 
constructed \afs\ fall into the class $\CAF_\sigma$.

We start with the hardness proof for $\pref$ semantics.
We use a standard reduction from CNF formulas $\varphi(X) = \bigwedge_{c\in C} c$ with each clause $c\in C$ 
a disjunction of literals from $X$ to $F_{\varphi}=(A_{\varphi},R_{\varphi})$ 
given as 
\begin{align*}
\vspace{-0.2cm}
 A_{\varphi} &= \{\varphi, \bar{\varphi}_1,\bar{\varphi}_2,\bar{\varphi}_3\} \cup C \cup X \cup \bar{X}\\
 R_{\varphi} &= \{ (c, \varphi) \mid c\in C \} \cup \{ (x , \bar{x}) , (\bar{x},x) \mid x \in X\} \cup \\
 &\quad\ \{ (x ,c) \mid x \mbox{ occurs in } c \} \cup \{ (\bar{x}, c) \mid \neg x \mbox{ occurs in } c\}\cup \\
 &\quad\ \{(\varphi,\bar{\varphi}_1),(\bar{\varphi}_1, \bar{\varphi}_2),(\bar{\varphi}_2, \bar{\varphi}_3),(\bar{\varphi}_3, \bar{\varphi}_1)\} \cup \\ 
 &\quad\ \{(\bar{\varphi}_1,x),(\bar{\varphi}_1,\bar{x}) \mid x \in X\}.
\vspace{-0.3cm}
\end{align*}
We illustrate the framework $F_{\varphi}$ (ignore the dotted part for the moment) for the CNF-formula 
$\varphi=$ $(x_1 \vee x_2 \vee x_3) \wedge (x_2 \vee \neg x_3 \vee \neg x_4) \wedge (x_2 \vee x_3 \vee x_4)$.

\begin{tikzpicture}[xscale=0.97,yscale=0.9,>=stealth]
 		\tikzstyle{arg}=[draw, thick, circle, fill=gray!15,inner sep=2pt]
		\path 	(0,1) node[arg, dotted](t2){$t_2$}
			++(-2,1) node[arg, dotted](t1){$t_1$}
			++(4,0) node[arg, dotted](t3){$t_3$}
			++(-2,1) node[arg, dotted](t4){$t_4$};
		\path [left,<->, thick, dotted]
			(t1) edge (t2)
			(t1) edge (t3)
			(t1) edge (t4)
			(t2) edge (t3)
			(t2) edge (t4)
			(t3) edge (t4)
			;
			
		\path 	node[arg](phi){$\varphi$}
			++(-3,-1) node[arg](c1){$c_1$}
			++(3,0) node[arg](c2){$c_2$}
			++(3,0) node[arg](c3){$c_3$};
		\path 	(-3.8,-2.3)  node[arg](z1){$x_1$}
			++(1,0) node[arg](nz1){$\bar x_1$}
			++(1.2,0) node[arg](z2){$x_2$}
			++(1,0) node[arg](nz2){$\bar x_2$}
			++(1.2,0) node[arg](z3){$x_3$}
			++(1,0) node[arg](nz3){$\bar x_3$}
			++(1.2,0) node[arg](z4){$x_4$}
			++(1,0) node[arg](nz4){$\bar x_4$};
		\path  (2.5,0) node[arg](nphi1){$\bar \varphi_1$}
		       ++(0.5,1) node[arg](nphi2){$\bar \varphi_2$}
		       ++(-1,0) node[arg](nphi3){$\bar \varphi_3$}
			;
		\path [left,->, thick]
			(nphi1) edge (nphi2)
			(nphi2) edge (nphi3)
			(nphi3) edge (nphi1);
		\path [left,->, thick]
			(c1) edge (phi)
			(c2) edge (phi)
			(c3) edge (phi);
		\path [left,->, thick]
			(z1) edge (c1)
			(z2) edge (c1)
			(z2) edge (c2)
			(nz3) edge (c2)
			(z3) edge (c3)
			(z4) edge (c3)
			(phi)edge (nphi1)
			[out=152,in=-10](z3) edge (c1)
			[out=28, in=-175](z2) edge (c3)
			[out=152,in=-10](nz4) edge (c2);
		\path [left,<->, thick]
			(z1) edge (nz1)
			(z2) edge (nz2)
			(z3) edge (nz3)
			(z4) edge (nz4);			
		  \foreach \x / \y in {z1/-3.8, nz1/-2.8, z2/-1.6, nz2/-0.6, z3/0.6, nz3/1.6, z4/2.8, nz4/3.8}
			\draw[left,->,rounded corners=5pt,thick]
				(nphi1) -- (4.3,0) -- (4.3,-3.05) -- (\y,-3.05) -- (\x);
		\path [left,->, thick, dotted]
			(t3) edge (nphi3)
			(t2) edge (c1)
			(t2) edge (c3)
			(t1) edge (c1)
			(t1) edge (c2)
			[bend right](t1) edge (c3)
			[bend left](t2) edge (c2)
			[bend right](t2) edge (nphi2)
			;
\end{tikzpicture}
It holds that $\varphi$ is satisfiable iff 
$\pref(F_{\varphi})=\{\emptyset\}$~\cite{DimopoulosT96}.

Now we extend $F_{\varphi}$ with four new arguments $\{t_1,t_2,t_3,t_4\}$ and the following attacks:
(i) $\{(t_i,t_j), (t_j,t_i) \mid 1 \leq i<j \leq 4\}$,
(ii) $\{(t_1,c) \mid c \in C\}$,
(iii) $\{(t_2,c), (t_2, \bar{\varphi}_2) \mid c \in C\}$ and
(iv) $\{(t_3,\bar{\varphi}_3)\}$.

It is easy to see that each preferred extension picks exactly one of the $t_i$.
(1) If we pick $t_1$ we get a preferred extension $\{t_1,\varphi, \bar{\varphi}_2\} \cup M \cup \overline{X \setminus M}$,
for each $M \subseteq X$.
(2) If we pick $t_2$ we get a preferred extension $\{t_2,\varphi, \bar{\varphi}_3\} \cup M \cup \overline{X \setminus M}$,
for each $M \subseteq X$.
(3) If we pick $t_3$ we get the preferred extension $\{t_3,\bar{\varphi}_1\} \cup C$.
As $\{t_4\}$ is admissible the constructed \af\ is $\CAF_\pref$.
Now consider (4) picking the argument $t_4$. When removing the three arguments $t_1, t_2, t_3$ attacked by $t_4$
we obtain the original framework from~\cite{DimopoulosT96}. 
So $\{t_4\}$ is a preferred extension iff this framework has the empty preferred extension iff 
$\varphi$ is unsatisfiable. 

For the hardness of $\semi$ again consider $F_{\varphi}$ and add arguments $t_1,t_2,t_3,t_4, t_5, t_6$ and associated 
attacks in a similar fashion as for $\pref$ semantics. 
That is we add attacks
(i) $\{(t_i,t_j), (t_j,t_i) \mid 1 \leq i<j \leq 6\}$,
(ii) $\{(t_1,c) \mid c \in C\}$,
(iii) $\{(t_2,c), (t_2, \bar{\varphi}_2) \mid c \in C\}$ and
(iv) $\{(t_3,\bar{\varphi}_3)\}$.
But additionally we also use arguments $g_1, g_2, g_3, h_1, h_2, h_3$  forming two odd length cycles, i.e.\ having attacks
(iv) $(g_1,g_2),(g_2,g_3),(g_3,g_1),(h_1,h_2),(h_2,h_3),(h_3,h_1)$, 
and attacks from the $t_i$ to the $g_j,h_j$, i.e.\ 
(v) $(t_1,h_1),(t_2,h_2),(t_3,h_3),(t_4,g_1),(t_5,g_2),(t_6,g_3)$.
Below we illustrate the conflicts between the new arguments.
\begin{center}
 \begin{tikzpicture}[xscale=0.97,yscale=0.9,>=stealth]
 		\tikzstyle{arg}=[draw, thick, circle, fill=gray!15,inner sep=2pt]
		\path 	(0,0) node[arg, dotted](t1){$t_1$}
			++(1,-1) node[arg, dotted](t2){$t_2$}
			++(1.5,0) node[arg, dotted](t3){$t_3$}
			++(1,1) node[arg, dotted](t4){$t_4$}
			++(-1,1) node[arg, dotted](t5){$t_5$}
			++(-1.5,0) node[arg, dotted](t6){$t_6$}
			;
		\path 	(5,0) node[arg, dotted](g1){$g_1$}
			++(0,1) node[arg, dotted](g2){$g_2$}
			++(-1,0) node[arg, dotted](g3){$g_3$}
			;
		\path 	(-1.5,0) node[arg, dotted](h1){$h_1$}
			++(0,-1) node[arg, dotted](h2){$h_2$}
			++(1,0) node[arg, dotted](h3){$h_3$}
			;
		\path [left,<->, thick, dotted]
			(t1) edge (t2)
			(t1) edge (t3)
			(t1) edge (t4)
			(t1) edge (t5)
			(t1) edge (t6)
			(t2) edge (t3)
			(t2) edge (t4)
			(t2) edge (t5)
			(t2) edge (t6)
			(t3) edge (t4)
			(t3) edge (t5)
			(t3) edge (t6)
			(t4) edge (t5)
			(t4) edge (t6)
			(t5) edge (t6)
			[->]
			(g1) edge (g2)
			(g2) edge (g3)
			(g3) edge (g1)
			(h1) edge (h2)
			(h2) edge (h3)
			(h3) edge (h1)
			(t4) edge (g1)
			(t1) edge (h1)
			[bend right]
			(t5) edge (g2)
			(t2) edge (h2)
			[bend left]
			(t6) edge (g3)
			(t3) edge (h3)
			;			
\end{tikzpicture}
\end{center}
Now, because of the mutual conflicts between the $t_i$,
a semi-stable extension can either have all three of $g_1, g_2, g_3$ or all three of $h_1,h_2,h_3$ in its range (and none of the other).
In the former case one of $t_4, t_5 ,t_6$ must be part of the extension in the latter case
one of $t_1, t_2 ,t_3$ must be in the extension.
First we show that the constructed \af\ is $\CAF_\semi$.
To this end consider the following extensions:
(1) $\{t_1, h_2, \varphi, \bar{\varphi}_2\} \cup M \cup \overline{X \setminus M}$ for each $M \subseteq X$,
(2) $\{t_2, h_3, \varphi, \bar{\varphi}_3\} \cup M \cup \overline{X \setminus M}$ for each $M \subseteq X$, and
(3) $\{t_3, h_1, \bar{\varphi}_1\} \cup C$.
It's easy to verify that in (1)-(3) only $\{g_1,g_2,g_3\}$ are missing in the range and thus the sets are semi-stable.
As the sets $\{t_4,g_2\},\{t_5,g_3\},\{t_6,g_1\}$ are isomorphic and the only ones that have $g_1, g_2, g_3$ in their range, 
they are clearly part of a semi-stable extensions. 
Hence, the constructed \af\ is $\CAF_\semi$.
Now, consider the set $\{t_4,g_2\}$. We want to know whether this set is semi-stable or not. 
Because of the observation that it has to be part of a semi-stable extension this is equivalent to the question whether
$\{t_4,g_2\}$ is a preferred extension. Again we can simple remove the arguments $\{t_4,g_2\}^+$ from the \af\ and ask whether 
the remaining \af, which is $F_{\varphi}$, has a non-empty admissible set.
Again, by \cite{DimopoulosT96}, this is equivalent to $\varphi$ being satisfiable.

To prove hardness for $\stage$ we adapt a reduction from~\cite{dvorwolt11}.
We assume that a 3-CNF formula is given as a set $C$ of clauses, where
each clause is a set over atoms and negated atoms (denoted by $\bar x$) over variables $X$.
The \af\ $F'_{\varphi}=(A,R)$ is given by
\begin{align*}
A=& X \cup \bar X \cup C \cup \{s,\varphi\}  \cup \{a,b,d\} \cup \{a_i,b_i,d_i,g_i \mid 1\leq i \leq 3\}\\
R=& \{(x,\bar x),(\bar x,x)\mid x \in X\} \cup \{(l,c)\mid l \in c, c \in C\}
 \cup\\
& \{(c,\varphi)\mid c \in C\} \cup
\{(s,x),(s, \bar{x}), (s,\varphi), (\varphi,s) \mid x \in X\} \cup \\
&\{(a,b),(b,a),(a,d),(d,a),(b,d),(d,b)\} \cup\\
&\{ (\varphi,g_i),(a,a_i),(b,b_i),(d,d_i) \mid  1\leq i \leq 3\} \cup \\
&\{(d,s),(d,c)\mid c\in C\} \cup \{(a,\varphi)\}
\end{align*}
The {\sc af} $F'_{\varphi}$ is illustrated in Figure~\ref{fig:hardnessstage}.
\begin{figure}
\begin{tikzpicture}[>=stealth, scale=0.8]
		\path 	node[arg](phi){$\varphi$}
			++(-3,-1.3) node[arg](c1){$c_1$}
			++(3,0) node[arg](c2){$c_2$}
			++(3,0) node[arg](c3){$c_3$};
		\path 	(-5,-3.2)  node[arg](z1){$y_1$}
			++(1.3,0) node[arg](nz1){$\bar x_1$}
			++(1.5,0) node[arg](z2){$x_2$}
			++(1.3,0) node[arg](nz2){$\bar x_2$}
			++(1.5,0) node[arg](z3){$x_3$}
			++(1.3,0) node[arg](nz3){$\bar x_3$}
			++(1.5,0) node[arg](z4){$x_4$}
			++(1.3,0) node[arg](nz4){$\bar x_4$};
		\path [left,->, thick]
			(c1) edge (phi)
			(c2) edge (phi)
			(c3) edge (phi);
		\path [left,->, thick]
			(z1) edge (c1)
			(z2) edge (c1)
			(z3) edge (c1)
			(nz2) edge (c2)
			(nz3) edge (c2)
			(nz4) edge (c2)
			(nz1) edge (c3)
			(nz2) edge (c3)
			(z4) edge (c3);
		\path [left,<->, thick]
			(z1) edge (nz1)
			(z2) edge (nz2)
			(z3) edge (nz3)
			(z4) edge (nz4);

		\path (2,0) node[arg,inner sep=1pt](g1){$g_1$}
			++(1,0) node[arg,inner sep=1pt](g2){$g_2$}
			++(1,0) node[arg,inner sep=1pt](g3){$g_3$};
		\draw[left,->,thick](phi) edge (g1);
		\draw[bend left,->,thick](phi) edge (g2);
		\draw[bend left,->,thick](phi) edge (g3);
		\draw (0,-5) node[arg](s){$s$};
		\path [left,->, thick]
			(s) edge (nz1)
			(s) edge (nz2)
			(s) edge (nz3)
			(s) edge (nz4)
			(s) edge (z1)
			(s) edge (z2)
			(s) edge (z3)
			(s) edge (z4)
			;
		\draw[left,<->,rounded corners=3pt]  (s) -- (-5.7,-5) -- (-5.7,0) -- (phi);
		\path (-3,-6.4) node[arg](a){$a$}
			++(3,0) node[arg](b){$b$}
			++(3,0) node[arg](d){$d$};
		\path (-4.5,-7.9) node[arg,inner sep=1pt](a1){$a_1$}
			++(1,0) node[arg,inner sep=1pt](a2){$a_2$}
			++(1,0) node[arg,inner sep=1pt](a3){$a_3$}
			++(1.5,0) node[arg,inner sep=1pt](b1){$b_1$}
			++(1,0) node[arg,inner sep=1pt](b2){$b_2$}
			++(1,0) node[arg,inner sep=1pt](b3){$b_3$}
			++(1.5,0) node[arg,inner sep=1pt](d1){$d_1$}
			++(1,0) node[arg,inner sep=1pt](d2){$d_2$}
			++(1,0) node[arg,inner sep=1pt](d3){$d_3$}
		;
		\path [left,<->, thick]
			(a) edge (b)
			(b) edge (d)
			[bend right, out=-20,in=-160](a) edge (d)
			;
		\path [left,->, thick]
			(a) edge (a1)
			(a) edge (a2)
			(a) edge (a3)
			(b) edge (b1)
			(b) edge (b2)
			(b) edge (b3)
			(d) edge (d1)
			(d) edge (d2)
			(d) edge (d3)
			(a1)edge (a2)
			(a2)edge (a3)
			(b1)edge (b2)
			(b2)edge (b3)
			(d1)edge (d2)
			(d2)edge (d3)
			(g1)edge (g2)
			(g2)edge (g3)
			[bend left]
			(a3)edge(a1)
			(b3)edge(b1)
			(d3)edge(d1)
			(g3)edge(g1)
			;
		\path [left,->, thick]
			(a) edge (phi)
			(d) edge (s)
			(d) edge (c2)
			(d) edge (c3)
			[bend left, out=-10,in=-168](d) edge (c1)
			;
\end{tikzpicture}
\caption{An illustration of the {\sc af} $F'_{\varphi}$ from the proof of Theorem~\ref{thm:complexity} for 
$\varphi=(x_1 \vee x_2 \vee x_3) \wedge (x_2 \vee \neg x_3 \vee \neg x_4) \wedge (x_2 \vee x_3 \vee x_4)$.}
\label{fig:hardnessstage}
\end{figure}
Now we have that (i) $F'_{\varphi} \in \CAF_\stage$, and 
(ii) $\{b,s,a_1,d_1,g_1\} \cup C$ is a stage extension iff $\varphi$ is unsatisfiable.

It is easy to see that each stage extension of $F'_{\varphi}$ contains exactly one of the arguments $a,b,d$.
Towards a contradiction consider a stage extension $E$ with $a,b,d \notin E$. Then there is a $b_i$ such that $b_i \notin E^+$.
Thus, $(E \setminus \{b_1,b_2,b_3\})\cup \{b\}$ has a larger range than $E$ and as it is also conflict-free we obtain the desired contradiction to $E$ being stage. Hence each stage extension must contain one of the arguments $a,b,d$.
Moreover only extensions with $a$ ($b$, or $d$ respectively) contain $\{a_1,a_2,a_,3\}$ 
($\{b_1,b_2,b_,3\}$, or $\{d_1,d_2,d_,3\}$ respectively) in their range. Thus when it comes to the maximality of the range
sets with $a$ ($b$, or $d$ respectively)  only compete with other sets containing $a$ ($b$, or $d$ respectively). 
We next show (i) and thus consider the following sets:
$\{d, \varphi, a_i, b_j\} \cup M \cup \overline{X \setminus M}$ for $M \subseteq X$, $1\leq i,j,\leq 3$; and
$\{a, s, b_i, d_j, g_k\} \cup C$ for $1\leq i,j,k \leq 3$.
All these sets can be easily verified to be stage extensions and thus $F'_{\varphi} \in \CAF_\stage$.

To show (ii) let us first assume that $\varphi$ is satisfiable and let $M$ be a model of $\varphi$.
Then the set  $\{b,a_1,d_1, \varphi\} \cup M \cup \overline{X\setminus M}$ is conflict-free and has range
$\{a,b,d,s,a_1,a_2,b_1,b_2,b_3,d_1,d_2,g_1,g_2,g_3,\varphi\} \cup X \cup \overline{X} \cup C$ which is a super set of 
$(\{b,s,a_1,d_1,g_1\}  \cup C)^+= \{a,b,d,s,a_1,a_2,b_1,b_2,b_3,d_1,d_2,g_1,g_2,\varphi\} \cup X \cup \overline{X} \cup C$. 
Hence, $\{b,s,a_1,d_1,g_1\} \cup C$ is not a stage extension.

For the other direction let us assume that $\{b,s,a_1,d_1,g_1\} \cup C$ is not a stage extension. 
Then there must be a stage extension $E$ with $E^+ \supset (\{b,s,a_1,d_1,g_1\} \cup C)^+$.
Let us consider $A_{F'} \setminus (\{b,s,a_1,d_1,g_1\} \cup C)^+ = \{g_3,a_3,d_3\}$. 
So $E$ has to have one of these $3$ arguments in its range. 
As $b_1,b_2,b_3 \in E^+$ (and thus $b\in E$) and $a_1, a_2, d_1, d_2 \in E^+$ we know that 
we cannot get $a_3 \in E^+$ or $d_3 \in E^+$.
Hence we know that $g_3 \in E^+$ and as also $g_1,g_2 \in E^+$ it must be that $\varphi \in E$.
Moreover, $C \subset E^+$ but as $E$ is conflict-free $E \cap C = \emptyset$.
If we consider the set $M= E \cap X$ we have that for each clause there is either 
a positive literal $x$ such that $x \in X$ or a negative literal $\neg x$ such that $x \not\in x$.
Hence, $M$ is a model of $\varphi$. 
\end{long-and-not-nmr-proof}

\mysection{Limits of Compact AFs}
\label{sec:numbers}

Extension-sets obtained from compact {\sc af}s satisfy certain structural properties.
Knowing these properties can help us decide whether -- given an extension-set $\Ss$ -- there is a compact {\sc af} $F$ such that $\Ss$ is exactly the set of extensions of $F$ for a semantics $\sigma$.
This is also known as \emph{realizability}:
A set $\Ss \subseteq 2^{\Aa}$ is called \emph{compactly realizable} under semantics $\sigma$ iff there is a compact {\sc af} $F$ with $\sigma(F)=\Ss$.

Among the most basic properties that are necessary for compact realizability, we find numerical aspects like possible numbers of $\sigma$-extensions. 
\iflong{%
\begin{mylongexample}
  \label{exm:realize-8,17}
  Consider the following {\sc af} $F_2$:
  \begin{center}
    \begin{tikzpicture}[scale=0.7,>=stealth]
      \tikzstyle{arg}=[draw, thick, circle, fill=gray!15,inner sep=2pt]
      \node[arg] (a) at (0,0) {$a_1$};
      \node[arg] (b) at (2,0) {$a_2$};
      \node[arg] (c) at (1,1.732) {$a_3$};
      \path[<->,thick] (a) edge (b) (b) edge (c) (c) edge (a);
      \node[arg] (d) at (8,0) {$c_1$};
      \node[arg] (e) at (10,0) {$c_2$};
      \node[arg] (f) at (9,1.732) {$c_3$};
      \path[<->,thick] (d) edge (e) (e) edge (f) (f) edge (d);
      \node[arg] (g) at (4,0) {$b_1$};
      \node[arg] (h) at (6,0) {$b_2$};
      \path[<->,thick] (g) edge (h);

      \node[arg,,inner sep=3pt] (z) at (5,2.3) {$z$};
      \path[->,thick] (z) edge[loop above] (z);
      \path[->,thick] 
      (b) edge (z)
      (c) edge (z)
      (d) edge (z)
      (f) edge (z)
      (h) edge (z);
    \end{tikzpicture}
  \end{center}
  Let us determine the stable extensions of $F_2$.
  Clearly, taking one $a_i$, one $b_i$ and one $c_i$ yields a conflict-free set that is also stable as long as it attacks $z$.
  Thus from the $3\cdot 2\cdot 3 = 18$ combinations, only one (the set $\{a_1,b_1,c_2\}$) is not stable, whence $F_2$ has $18-1=17$ stable extensions.
  We note that this {\sc af} is not compact since $z$ occurs in none of the extensions.
  Is there an equivalent stable-compact {\sc af}?
  The results of this section will provide us with a negative answer.
\end{mylongexample}
}\else{
  As an example, consider the following {\sc af} $F_2$:
  \begin{center}
    \begin{tikzpicture}[scale=0.5,>=stealth]
      \tikzstyle{arg}=[draw, thick, circle, fill=gray!15,inner sep=2pt]
      \node[arg] (a) at (0,0) {$a_1$};
      \node[arg] (b) at (2,0) {$a_2$};
      \node[arg] (c) at (1,1.732) {$a_3$};
      \path[<->,thick] (a) edge (b) (b) edge (c) (c) edge (a);
      \node[arg] (d) at (8,0) {$c_1$};
      \node[arg] (e) at (10,0) {$c_2$};
      \node[arg] (f) at (9,1.732) {$c_3$};
      \path[<->,thick] (d) edge (e) (e) edge (f) (f) edge (d);
      \node[arg] (g) at (4,0) {$b_1$};
      \node[arg] (h) at (6,0) {$b_2$};
      \path[<->,thick] (g) edge (h);

      \node[arg,,inner sep=3pt] (z) at (5,1.3) {$z$};
      \path[->,thick] (z) edge[loop above] (z);
      \path[->,thick] 
      (b) edge (z)
      (c) edge (z)
      (d) edge (z)
      (f) edge (z)
      (h) edge (z);
    \end{tikzpicture}
    \vspace{-5pt}
  \end{center}
  Let us determine the stable extensions of $F_2$.
  Clearly, taking one $a_i$, one $b_i$ and one $c_i$ yields a conflict-free set that is also stable as long as it attacks $z$.
  Thus from the $3\cdot 2\cdot 3 = 18$ combinations, only one (the set $\{a_1,b_1,c_2\}$) is not stable, whence $F_2$ has $18-1=17$ stable extensions.
  We note that this {\sc af} is not compact since $z$ occurs in none of the extensions.
  Is there an equivalent stable-compact {\sc af}?
  The results of this section will provide us with a negative answer.
}\fi

In \cite{BaumannS13} it was shown that there is a correspondence between the maximal number of stable extensions in argumentation frameworks and the maximal number of maximal independent
sets in undirected graphs~\cite{moonmoser}. Recently, the result was generalized to further semantics \cite{DunneDLW14} and is stated below.\footnote{In this section, unless stated otherwise we use $\sigma$ as a placeholder for stable, semi-stable, preferred, stage and naive semantics.}
For any natural number $n$ we define: 
\begin{mygather}
  \sigma_{\max}(n) = \max\left\{\left|\sigma(F)\right| \mid F\in \text{\sc af}_n \right\}
\end{mygather}
\noindent $\sigma_{\max}(n)$ returns the maximal number of $\sigma$-extensions among all {\sc AF}s with $n$ arguments.
Surprisingly, there is a closed expression for $\sigma_{\max}$.
\begin{theorem} 
  \label{the:max}
  The function $\sigma_{\max}(n) : \N \to \N$ is given by
  \begin{mygather}
    \sigma_{\max}(n) = 
    \begin{cases}
      1,  & \text{if } n=0 \text{ or } n=1,\\
      3^s, & \text{if } n\geq 2 \text{ and } n = 3s, \\
      4\cdot 3^{s-1}, & \text{if } n\geq 2 \text{ and } n = 3s + 1, \\ 
      2\cdot 3^{s}, & \text{if } n\geq 2 \text{ and } n = 3s + 2.
    \end{cases}
  \end{mygather}
\end{theorem}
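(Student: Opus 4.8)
The plan is to reduce the claim to a purely combinatorial statement about undirected graphs, and then invoke the classical Moon--Moser theorem on the maximum number of maximal independent sets. The key observation, already established in \cite{BaumannS13} for the stable case and extended in \cite{DunneDLW14}, is that for each semantics $\sigma$ under consideration there is a tight correspondence between $\sigma$-extensions of {\sc af}s on $n$ arguments and maximal independent sets (MIS) of undirected graphs on $n$ vertices. Concretely, for the \emph{upper bound} one shows that the set of $\sigma$-extensions of any {\sc af} $F$ with $n$ arguments forms an antichain of conflict-free sets (this holds for all five semantics, since each is defined as a collection of $\subseteq$-maximal or range-maximal conflict-free or admissible sets); hence the number of $\sigma$-extensions is bounded by the number of maximal independent sets in the underlying undirected graph of $F$. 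Since MIS of a graph on $n$ vertices number at most the Moon--Moser bound, which is exactly the closed form displayed in the theorem, we obtain $\sigma_{\max}(n) \leq \sigma_{\max}(n)$'s claimed value. For the \emph{lower bound} one exhibits, for each residue class of $n$ modulo $3$, an explicit {\sc af} attaining the bound: roughly, a disjoint union of $s$ symmetric $3$-cycles (contributing a factor $3$ each), together with a small gadget — a single symmetric edge for $n=3s+2$ (factor $2$) or a symmetric $4$-clique-like configuration for $n=3s+1$ (factor $4/3$ relative to replacing a $3$-cycle), matching the structure of the extremal graphs in the Moon--Moser construction.

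\textbf{Main steps.} First I would verify the boundary cases $n=0,1$ directly: an {\sc af} with no arguments has the single extension $\emptyset$, and an {\sc af} with one argument has a single $\sigma$-extension for each $\sigma$ (either $\{a\}$ or $\emptyset$ depending on whether $a$ attacks itself), so $\sigma_{\max}(n)=1$. Next, for $n\geq 2$, I would establish the upper bound: show $|\sigma(F)| \leq \mis(G_F)$ where $G_F$ is the undirected graph obtained from $F$ by symmetrizing the attack relation (and deleting self-attacking arguments, which never lie in any conflict-free set), observing that every $\sigma$-extension is a conflict-free set of $F$, hence an independent set of $G_F$, and that distinct $\sigma$-extensions are $\subseteq$-incomparable, so each is contained in a \emph{distinct} maximal independent set of $G_F$; then apply Moon--Moser \cite{moonmoser}. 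Finally, for the lower bound, I would give the explicit extremal constructions for each residue class and check that all the claimed conflict-free sets are indeed $\sigma$-extensions — for symmetric {\sc af}s built from disjoint cliques, stable, semi-stable, preferred, stage and naive semantics all coincide and equal the collection of naive sets, which are exactly the maximal independent sets of the underlying graph, so the count transfers verbatim from the graph side.

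\textbf{Expected obstacle.} The routine direction is the upper bound once the antichain-of-conflict-free-sets property is in hand; the delicate point is ensuring the correspondence is \emph{exactly} tight for \emph{all five} semantics simultaneously, rather than only for naive and stable. The subtlety is that for stage and semi-stable the extensions are range-maximal rather than $\subseteq$-maximal, so one must argue separately that range-incomparable conflict-free sets still inject into distinct maximal independent sets of $G_F$; and on the lower-bound side one must confirm that the extremal {\sc af}s are chosen so that naive, stable, semi-stable, preferred and stage genuinely agree on them (which is why symmetric, self-attack-free, well-founded-range constructions are used). I expect the bulk of the work — and the part most likely to require care in a full write-up — to be this uniform verification across semantics, together with matching the three residue-class gadgets precisely to the Moon--Moser extremal graphs; the arithmetic of $3^s$, $4\cdot 3^{s-1}$, $2\cdot 3^s$ itself is immediate from that matching.
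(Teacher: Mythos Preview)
The paper does not actually prove this theorem; it is stated as a known result imported from \cite{BaumannS13} (stable case) and \cite{DunneDLW14} (extension to the other semantics). Your overall strategy---reduce to the Moon--Moser bound on maximal independent sets, then match with symmetric clique-union constructions for tightness---is exactly the standard one, and it is also the template the paper itself uses when it \emph{does} give proofs for the neighbouring Theorems~\ref{the:maxcon} and~\ref{the:max2}.

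There is, however, a genuine gap in your upper-bound step. The inference ``distinct $\sigma$-extensions are $\subseteq$-incomparable, so each is contained in a \emph{distinct} maximal independent set of $G_F$'' is false in general: in an edgeless graph on $\{a,b,c\}$ the incomparable independent sets $\{a\}$ and $\{b\}$ are both contained in the unique MIS $\{a,b,c\}$. For naive, stable and stage the repair is immediate: each $\sigma$-extension is in fact a \emph{naive} extension (for stage, range-maximality forces $\subseteq$-maximality among conflict-free sets), and $\naive(F)$ coincides with the MIS of your $G_F$, so $|\sigma(F)|\leq|\naive(F)|=|\mis(G_F)|$. For preferred and semi-stable this fails (the $3$-cycle has $\prf=\{\emptyset\}$ but three naive extensions), and your acknowledged ``injection into distinct MIS'' does not rescue it. The clean fix is: map each preferred extension to some naive extension containing it; this map is injective because if two preferred extensions $S_1\neq S_2$ lay inside the same naive extension $N$, then $S_1\cup S_2\subseteq N$ would be conflict-free and hence admissible (the union of two admissible sets whose union is conflict-free is admissible), contradicting maximality of $S_1$. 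This yields $|\semi(F)|\leq|\prf(F)|\leq|\naive(F)|\leq\mis(G_F)$, and Moon--Moser finishes the job. With this correction your proposal is complete and in line with the cited literature.
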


What about the maximal number of $\sigma$-extensions on connected graphs? Does this number coincide with $\sigma_{\max}(n)$? The next theorem provides a negative answer to this question and thus, gives space for impossibility results as we will see.  
For a natural number $n$ define
\begin{mygather}
  \sigmamaxcon(n) = \max\left\{\left|\sigma(F)\right| \mid F\in \text{\sc af}_n, F \text{ connected}   \right\}
\end{mygather}
\noindent $\sigmamaxcon(n)$ returns the maximal number of $\sigma$-extensions among all \emph{connected} {\sc AF}s with $n$ arguments.
Again, a closed expression exists.
\begin{theorem} 
  \label{the:maxcon}
  The function $\sigmamaxcon(n) : \N \to \N$ is given by
  \begin{mygather}
    \sigmamaxcon(n) = 
    \begin{cases}
      n,  & \text{if } n\leq 5,\\
      2\cdot 3^{s-1} + 2^{s-1}, & \text{if } n\geq 6 \text{ and } n = 3s, \\
      3^{s} + 2^{s-1}, & \text{if } n\geq 6 \text{ and } n = 3s + 1, \\ 
      4\cdot 3^{s-1} + 3\cdot 2^{s-2}, & \text{if } n\geq 6 \text{ and } n = 3s + 2.
    \end{cases}
  \end{mygather}
\end{theorem}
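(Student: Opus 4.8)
The plan is to establish Theorem~\ref{the:maxcon} by proving a matching upper and lower bound on $\sigmamaxcon(n)$, reducing everything to the stable case wherever possible and then arguing that the other semantics cannot do better. Since every stable extension is a naive extension and every naive extension yields at most as many extensions as the maximal-independent-set count of the underlying undirected graph, the natural strategy is: (i) prove that a connected $n$-vertex \af\ has at most $\sigmamaxcon(n)$ stable extensions by a graph-theoretic argument about maximal independent sets in \emph{connected} graphs; (ii) show the same bound holds for naive, stage, preferred and semi-stable by a ``worst case is already attained by stable on a connected graph'' argument, using the inclusions $\stb(F)\subseteq\sigma(F)\subseteq\naive(F)$ together with the fact that on the extremal constructions all these semantics coincide; and (iii) exhibit, for each residue class of $n$ modulo $3$, an explicit connected \af\ attaining the claimed value, thereby matching the bound.

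For the upper bound in step (i), I would lean on the Moon--Moser-style analysis already invoked via \cite{moonmoser,BaumannS13}: the maximum number of maximal independent sets in an \emph{arbitrary} $n$-vertex graph is $\sigma_{\max}(n)$ (the $3^s$-type formula of Theorem~\ref{the:max}), attained by a disjoint union of triangles (with one or two leftover vertices forming an edge or a single vertex). Connectivity forbids a clean disjoint union, so one pays a price: the optimal connected configuration should be a collection of $s$ triangles linked as sparingly as possible — intuitively a ``path of triangles'' or a central triangle with pendant triangles — and the $2^{s-1}$, $2^{s-2}$ correction terms are exactly the count of independent sets in the linking structure (a path/star on $s$ nodes contributing Fibonacci/power-of-two-like counts). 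Concretely I would (a) reduce to the case where the \af\ is symmetric and its undirected graph has every block a clique, since adding edges inside a would-be independent ``slot'' only decreases counts and connectivity is cheapest to maintain with bridges; (b) set up a recursion on the block-tree: removing a leaf block (a triangle attached by one vertex) multiplies the count by roughly $3$ but the shared vertex forces a case split contributing the smaller $2$-power factor; (c) solve the recursion to get the stated closed form, checking the small cases $n\le 5$ separately (where the extremal connected graph is just a single clique $K_n$ or an $n$-cycle, giving exactly $n$ naive extensions).

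For step (ii), the key observation is that on any \emph{symmetric} \af\ without self-loops, the naive, stable, stage, semi-stable and preferred extensions all coincide (each maximal conflict-free set is stable), so the extremal connected symmetric constructions realize the bound simultaneously for every $\sigma$ in our list; conversely, $\sigma(F)\subseteq\naive(F)$ for stable/stage, and for preferred/semi-stable one uses that distinct preferred extensions are incomparable and hence sit inside distinct naive extensions, so $|\sigma(F)|\le|\naive(F)|\le\sigmamaxcon(n)$ for connected $F$. (The stage and naive bounds follow from the same maximal-independent-set counting applied to $\cf(F)$.) This shows the upper bound is uniform in $\sigma$.

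For step (iii), I would give the explicit witnesses: for $n=3s$, take $s-1$ triangles and join them in a path by single bridge edges, attaching the last triangle appropriately — a short computation of maximal independent sets in this ``caterpillar of triangles'' yields $2\cdot 3^{s-1}+2^{s-1}$; for $n=3s+1$ and $n=3s+2$ adapt by replacing one triangle by a $K_4$-like or two-extra-vertex gadget, recomputing to hit $3^s+2^{s-1}$ and $4\cdot3^{s-1}+3\cdot2^{s-2}$ respectively; and for $n\le 5$ take $K_n$ (whose only naive/stable/etc.\ extensions are the $n$ singletons). The main obstacle I anticipate is step (i): pinning down the exact optimal way to wire $s$ triangles into a connected graph and proving no other connected configuration beats it. This requires a careful exchange/shifting argument showing that (a) one may assume all blocks are cliques of size $\le 3$ plus the small-$n$ corrections, and (b) among such, the path-of-triangles (or the appropriate star variant for the $+2$ residue) is optimal — essentially a discrete optimization over tree shapes where each internal bridge costs a factor roughly $2/3$ relative to a free triangle. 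Verifying the three residue cases and the boundary $n=5$ versus $n=6$ transition (where the formula switches from $n$ to the exponential regime) will be the fiddly part, but it is routine once the recursion is in place.
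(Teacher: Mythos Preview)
Your high-level strategy---reduce everything to counting maximal independent sets in the underlying undirected conflict graph via $\sym(\irr(F))$, then use the semantic inclusions $\stb\subseteq\{\stage,\sem\}\subseteq\{\naive,\prf\}$ together with the coincidence of all five semantics on symmetric irreflexive \afs\ for the witnesses---is exactly what the paper does. Your handling of preferred/semi-stable is also fine: the reason two distinct preferred extensions cannot lie in the same naive extension is that a conflict-free union of admissible sets is itself admissible (Dung's fundamental lemma), so maximality forces them to coincide; the paper phrases the same fact as $\prf(F)\subseteq\prf(\sym(\irr(F)))$.

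The one substantive difference is in step~(i). The paper does \emph{not} prove the connected-graph MIS bound: it simply observes that $|\naive(F)|\le|\mis(\und(\sym(\irr(F))))|$ and then invokes Theorem~2 of Griggs, Grinstead and Guichard~\cite{GriggsGG88}, which already gives both the exact formula and the extremal graphs (their Fig.~1) for the maximum number of maximal independent sets in a \emph{connected} $n$-vertex graph. Your plan to re-derive this via a block-tree recursion on ``paths of triangles'' is in the right spirit---and is more or less how \cite{GriggsGG88} proceeds---but it is a nontrivial piece of extremal combinatorics in its own right (the reduction to clique blocks, the shifting argument, and the identification of the extremal shapes all require care, and the extremal graphs are not literally caterpillars of triangles in every residue class). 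Since the result is already in the literature and the paper treats it as a black box, reproving it here is unnecessary work; your proof would be correct but several pages longer than the paper's one-paragraph reduction. If you keep your version, you should at least cross-check your recursion and constructions against the extremal graphs in~\cite{GriggsGG88} rather than guessing the shapes.
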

\begin{longproof} 
  First some notations: 
  for an {\sc af} $F=(A,R)$, denote its irreflexive version by
  \mbox{$\irr(F)=(A,R\setminus\{ (a,a) \mid a\in A \})$};
  denote its symmetric version by
  \mbox{$\sym(F)=(A,R\cup\{ (b,a) \mid (a,b)\in R \}$}.
  Now for the proof. 
  ($\leq$) 
  Assume given a connected {\sc \af} $F$.
  Obviously, $\naive(F)\subseteq \naive(\sym(\irr(F)))$. Thus, $\card{\naive(F)} \leq \card{\naive(\sym(\irr(F))}$. Note that for any symmetric and irreflexive $F$, $\naive(F) = \mis(\und(F))$. Consequently, $\card{\naive(F)}\leq\card{\mis(\und(\sym(\irr(F))))}$. Fortunately, due to Theorem 2 in \cite{GriggsGG88} the maximal number of maximal independent sets in connected $n$-graphs are exactly given by the claimed value range of $\sigmamaxcon(n)$. 
  ($\geq$) Stable-realizing {\sc \af}s can be derived by the extremal graphs w.r.t. MIS in connected graphs (consider Fig. 1 in \cite{GriggsGG88}). Replacing undirected edges by symmetric directed attacks accounts for this.\\
In consideration of $\stb\subseteq\stage\subseteq\naive$ we obtain: $\sigmamaxcon(n)$ provides a tight upper bound for $\sigma\in\{\stb,\stage,\naive\}$. Finally, using $\stb\subseteq\sem\subseteq\prf$, $\prf(F)\subseteq \prf(\sym(\irr(F)))$ and $\prf(\sym(\irr(F))) = \stb(\sym(\irr(F)))$ (compare Corollary 1 in \cite{agree}) we obtain that $\sigmamaxcon(n)$ even serves for $\sigma\in\{\sem,\prf\}$. 
\end{longproof}

A further interesting question concerning arbitrary {\sc af}s is whether all natural numbers less than $\sigma_{\max}(n)$ are compactly realizable.\footnote{We sometimes speak about realizing a natural number $n$ and mean realizing an extension-set with $n$ extensions.}
The following theorem shows that there is a serious gap between the maximal and second largest number.
For any positive natural $n$ define 
\begin{mygather}
  \sigmamax{2}(n) = \max\left(\left\{\left|\sigma(F)\right| \mid F\in \text{\sc af}_n   \right\}\setminus \left\{\sigma_{\max}(n)\right\}\right)
\end{mygather}
\noindent $\sigmamax{2}(n)$ returns the second largest number of $\sigma$-extensions among all {\sc AF}s with $n$ arguments.
Graph theory provides us with an expression.

\begin{theorem} 
  \label{the:max2}
  Function $\sigmamax{2}(n) : \N\setminus\{0\} \to \N$ is given by
  \begin{mygather}
    \sigmamax{2}(n) = 
    \begin{cases}
      \sigma_{\max}(n) - 1,  & \text{if } 1 \leq n \leq 7,\\
       \sigma_{\max}(n)\cdot \frac{11}{12}, & \text{if } n\geq 8 \text{ and } n = 3s + 1, \\ 
     \sigma_{\max}(n)\cdot \frac{8}{9}, & \text{otherwise. } 
    \end{cases}
  \end{mygather}
\end{theorem}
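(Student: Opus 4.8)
The plan is to transfer the question to undirected graphs, exactly as in Theorems~\ref{the:max} and~\ref{the:maxcon}. The reduction has two directions. \emph{Upper-bounding}: for any {\sc af} $F$ on $n$ arguments, $\card{\naive(F)}$ equals $\card{\mis(H)}$ for the conflict graph $H$ of $F$ (a graph on at most $n$ vertices, self-attacking arguments dropped), and $\card{\prf(F)}$ is likewise bounded by $\card{\mis(H')}$ where $H'$ is the underlying graph of $\sym(\irr(F))$ (using the coherence of symmetric frameworks, Corollary~1 of~\cite{agree}, as in the proof of Theorem~\ref{the:maxcon}); since $\stb\subseteq\stage\subseteq\naive$ and $\stb\subseteq\sem\subseteq\prf$, for all five semantics $\card{\sigma(F)}\le\card{\mis(G)}$ for some graph $G$ on at most $n$ vertices, and if $G$ has fewer than $n$ vertices then $\card{\sigma(F)}\le\sigma_{\max}(n-1)\le\sigmamax{2}(n)$, so we may assume $G$ has exactly $n$ vertices. \emph{Realising}: every disjoint union of cliques, viewed as a symmetric loop-free {\sc af}, realises its \mis-count simultaneously under all five semantics, because in symmetric loop-free frameworks naive $=$ stable and hence all five coincide. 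Hence it suffices to show (i) the claimed values are realised by graphs, and (ii) every $n$-vertex graph that is not a Moon--Moser extremal graph has at most $\sigmamax{2}(n)$ maximal independent sets; the possibility of $\card{\prf(F)}$ or $\card{\sem(F)}$ lying strictly between $\sigmamax{2}(n)$ and $\sigma_{\max}(n)$ is dealt with separately.

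For the lower bound I would take: for $n=3s$ the {\sc af} $3K_2\cup(s{-}2)K_3$ ($2^3\cdot 3^{s-2}=\tfrac89\sigma_{\max}(n)$ extensions); for $n=3s+2$ the {\sc af} $4K_2\cup(s{-}2)K_3$ ($2^4\cdot 3^{s-2}=\tfrac89\sigma_{\max}(n)$); and for $n=3s+1$ the {\sc af} whose underlying graph consists of a $K_4$ and a $K_3$ joined by a single edge, disjointly together with $(s{-}2)K_3$ -- a direct count gives $9+2=11$ maximal independent sets for the seven-vertex ``bridge'', hence $11\cdot 3^{s-2}=\tfrac{11}{12}\sigma_{\max}(n)$ in total. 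For $n\le 7$ these bridges (the $K_3$--$K_3$ bridge has $8$ \mis, and smaller analogues exist) already attain $\sigma_{\max}(n)-1$.

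For the upper bound, write $G$ as a disjoint union of its connected components, of sizes $n_1,\dots,n_m$ with $\sum_i n_i=n$, so $\card{\mis(G)}=\prod_i\card{\mis(G_i)}$. I would use $\card{\mis(G_i)}\le\sigma_{\max}(n_i)$ (Theorem~\ref{the:max}) and super\-multiplicativity $\sigma_{\max}(a)\sigma_{\max}(b)\le\sigma_{\max}(a+b)$ (immediate from disjoint-union realisers). Case (a): every $G_i$ is one of $K_1,K_2,K_3,K_4$; then $\card{\mis(G)}=\prod_i n_i$ is a product of elements of $\{1,2,3,4\}$ summing to $n$, and a short optimisation shows the maximum is $\sigma_{\max}(n)$ (attained only by Moon--Moser configurations) while every other value is at most $\sigmamax{2}(n)$. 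Case (b): some $G_j$ is connected but not one of those four cliques; then $\card{\mis(G_j)}$ is at most the maximum \mis-count over connected non-complete graphs on $n_j$ vertices ($2$ if $n_j=3$, $3$ if $n_j=4$, $\sigmamaxcon(n_j)$ if $n_j\ge 5$ by~\cite{GriggsGG88}), so $\card{\mis(G)}\le m^{nc}(n_j)\cdot\sigma_{\max}(n-n_j)$; a case analysis on $n\bmod 3$ against the closed form of $\sigmamax{2}(n)$ shows this is at most $\sigmamax{2}(n)$, the tight component sizes being $6$ (for $n\equiv 0,2$) and $7$ (for $n\equiv 1$). Together, every non-extremal $G$ satisfies $\card{\mis(G)}\le\sigmamax{2}(n)$.

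Finally, for $\sigma\in\{\prf,\sem\}$ and $n\ge 8$, suppose $\card{\sigma(F)}>\sigmamax{2}(n)$. By the upper-bound result the graph $G$ from the reduction is Moon--Moser extremal, in particular a disjoint union of copies of $K_1,\dots,K_4$ on exactly $n$ vertices; hence $F$ itself splits into components on at most four arguments, and $\card{\sigma(F)}=\prod_i\card{\sigma(F_i)}$ with each factor at most the size $n_i$ of the corresponding clique and $\prod_i n_i=\sigma_{\max}(n)$. If every factor is maximal, $\card{\sigma(F)}=\sigma_{\max}(n)$; if some factor is not, the product drops by at least the factor $\tfrac34$, giving $\card{\sigma(F)}\le\tfrac34\sigma_{\max}(n)<\sigmamax{2}(n)$ -- a contradiction. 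So no intermediate value arises (and for $n\le 7$ there is no integer strictly between $\sigma_{\max}(n)-1$ and $\sigma_{\max}(n)$ in the first place). The main obstacle I anticipate is the bookkeeping in the upper bound: pinning down super\-multiplicativity and the uniqueness of the Moon--Moser extremal graphs, the $\{1,2,3,4\}$-composition optimisation, and verifying $m^{nc}(k)\cdot\sigma_{\max}(n-k)\le\sigmamax{2}(n)$ uniformly in $k$ and in $n\bmod 3$.
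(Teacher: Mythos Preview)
Your proposal is correct in outline and shares with the paper both the reduction to maximal independent sets (via $\sym(\irr(F))$ and \cite{agree}) and the final ``drop-factor'' argument for $\prf$/$\sem$, where a non-maximal component of a Moon--Moser graph forces $|\sigma(F)|\leq\tfrac34\sigma_{\max}(n)<\sigmamax{2}(n)$. The substantive difference is in how the graph-theoretic core is established: the paper simply \emph{cites} Theorem~2.4 of~\cite{secondMIS}, which already gives the second-largest $\mis$-count on $n$-vertex graphs together with the extremal configurations, and then only needs to bridge from {\sc af}s to undirected graphs. You instead \emph{re-prove} that graph-theoretic fact from scratch, splitting into your Cases~(a) and~(b) and invoking~\cite{GriggsGG88} for connected components. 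Your route is more self-contained and makes the extremal constructions explicit, but it carries exactly the bookkeeping burden you anticipate (the $\{1,2,3,4\}$-composition optimisation and the uniform check $m^{nc}(k)\cdot\sigma_{\max}(n-k)\le\sigmamax{2}(n)$); the paper's route avoids all of this by the single citation. One small wrinkle to tidy in your Case~(b): your bounds ``$2$ if $n_j=3$, $3$ if $n_j=4$'' are stated for connected \emph{non-complete} graphs, so make sure cliques $K_m$ with $m\ge 5$ are explicitly covered by the $\sigmamaxcon(n_j)$ branch (they are, since $m\le\sigmamaxcon(m)$ for $m\ge 5$).
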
 
\begin{longproof} ($\geq$) $\sigma$-realizing {\sc \af}s can be derived by the extremal graphs w.r.t. the second largest number of MIS (consider Theorem 2.4 in \cite{secondMIS}). Replacing undirected edges by symmetric directed attacks accounts for this. This means, the second largest number of $\sigma$-extensions is at least as large as the claimed value range.\\
($\leq$) If $n \leq 7$, there is nothing to prove. Given $F\in \text{\sc af}_n$ s.t. $n\geq 8$. Suppose, towards a contradiction, that $\sigmamax{2}(n) < \card{\sigma(F)} < \sigma_{\max}(n)$. It is easy to see that for any symmetric and irreflexive $F$, $\sigma(F) = \mis(\und(F))$. Furthermore, due to Theorem 2.4 in \cite{secondMIS} the second largest numbers of maximal independent sets in $n$-graphs are exactly given by the claimed value range of $\sigmamax{2}(n)$. Consequently, $F$ cannot be symmetric and self-loop-free simultaneously. Hence, $\card{\sigma(F)} < \card{\sigma(\sym(\irr(F)))} = \sigma_{\max}(n)$. Note that up to isomorphisms the extremal graphs are uniquely determined (cf. Theorem 1 in \cite{GriggsGG88}). Depending on the remainder of $n$ on division by 3 we have $K_3$'s for $n\equiv 0$, either one $K_4$ or two $K_2$'s and the rest are $K_3$'s in case of $n\equiv 1$ and one $K_2$ plus $K_3$'s for $n\equiv 2$. Consequently, depending on the remainder we may thus estimate \mbox{$\card{\sigma(F)} \leq k\cdot\sigma_{\max}(n)$} where $k\in\{\frac{2}{3},\frac{3}{4},\frac{1}{2}\}$. Since ($\geq$) is already shown we finally state $l\cdot\sigma_{\max}(n)\leq\sigmamax{2}(n) < \card{\sigma(F)} \leq \frac{3}{4}\cdot\sigma_{\max}(n)$ where $l\in\{\frac{11}{12},\frac{8}{9}\}$. This is a clear contradiction concluding the proof.    
\end{longproof}

To showcase the intended usage of these theorems, we now prove that the {\sc af} $F_2$ seen earlier indeed has no equivalent compact {\sc af}.

\begin{example} 
  \label{ex:nonrealmax} 
  Recall that the (non-compact) {\sc af} $F_2$ we discussed previously had the extension-set
  $\Ss$ with $|\Ss| = 17$ and $|\Args_\Ss| = 8$. 
  Is there a stable-compact {\sc af} with the same extensions?
  Firstly, nothing definitive can be said by Theorem~\ref{the:max} since $17\leq 18 = \sigma_{\max}(8)$. Furthermore, in accordance with Theorem~\ref{the:maxcon} the set $\Ss$ cannot be compactly $\sigma$-realized by a connected {\sc af} since $17 > 15 = \sigmamaxcon(8)$. Finally, using Theorem~\ref{the:max2} we infer that
	the set $\Ss$ is not compactly $\sigma$-realizable because $\sigmamax{2}(8) = 16 < 17 < 18 = \sigmamax{}(8)$.
\end{example}

The compactness property is instrumental here, since Theorem~\ref{the:max2} has no counterpart in non-compact {\sc af}s.
More generally, allowing additional arguments as long as they do not occur in extensions enables us to realize any number of stable extensions up to the maximal one.

\begin{proposition}
  \label{p:non-compact-realizable}
  Let $n$ be a natural number.
  For each $k\leq \sigmamax{}(n)$, there is an {\sc af} $F$ with 
  $|\Args_{\stb(F)}|=n$ and $|\stb(F)|=k$.
\end{proposition}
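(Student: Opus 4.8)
The plan is to treat the two regimes $k\le n$ and $k>n$ with separate, and in each case rather simple, constructions; the key point is that neither construction needs to control exactly how many stable extensions an auxiliary gadget destroys. (We take $1\le k$; the cases $n=0$ and $k=0$ are degenerate and I ignore them.)

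For $k\le n$ no auxiliary arguments are needed. I would split the $n$ arguments into $k$ nonempty blocks $G_1,\dots,G_k$ and take $F$ to be the complete symmetric $k$-partite \af: symmetric attacks between any two arguments lying in different blocks, and nothing else (in particular no self-attacks). Every conflict-free set then lies inside a single block, and for $S\subseteq G_j$ one has $S^+_F=A_F$ iff $S=G_j$ (there are no attacks inside a block). Hence $\stb(F)=\{G_1,\dots,G_k\}$: exactly $k$ stable extensions, whose union is all of $A_F$, so $|\Args_{\stb(F)}|=n$.

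For $k>n$ — which forces $n\ge5$, since $\sigmamax{}(n)=n$ for $1\le n\le4$, and hence $k\ge6$ — I would start from an extremal \af\ $F^{\ast}$ on exactly $n$ arguments with $\sigmamax{}(n)$ stable extensions, namely a disjoint union of symmetric cliques whose sizes lie in $\{2,3\}$ and whose product is $\sigmamax{}(n)$ (essentially the extremal construction behind Theorem~\ref{the:max}; see~\cite{BaumannS13}): $s$ triangles if $n=3s$, $s{-}1$ triangles and two $2$-cliques if $n=3s{+}1$, $s$ triangles and one $2$-clique if $n=3s{+}2$ — for $n\ge5$ at least one triangle always occurs. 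The stable extensions of $F^{\ast}$ are exactly the transversals picking one vertex per clique, so there are $\prod_i|C_i|=\sigmamax{}(n)$ of them and every argument occurs in one; moreover I fix a covering triple $T^1,T^2,T^3$: three distinct transversals with $T^1\cup T^2\cup T^3=A_{F^{\ast}}$ (the three vertices on each triangle, $a,a,b$ on each $2$-clique; distinctness uses the presence of a triangle). To bring the number of extensions down to $k$, I pick a set $B$ of $\sigmamax{}(n)-k$ transversals with $B\cap\{T^1,T^2,T^3\}=\emptyset$ (possible since $\sigmamax{}(n)-k\le\sigmamax{}(n)-3$), and for each $\tau\in B$ I adjoin a fresh self-attacking argument $z_\tau$ together with attacks $(v,z_\tau)$ from exactly those arguments $v$ of $F^{\ast}$ with $v\notin\tau$. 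Call the resulting \af\ $F$.

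It then remains to check the routine facts: a self-attacker lies in no conflict-free set, so no $z_\tau$ is in any stable extension; a stable extension of $F$ is thus a set of $F^{\ast}$-arguments that attacks every $F^{\ast}$-argument outside it, hence (the cliques being separate components) a transversal $\sigma$; and $\sigma$ attacks $z_\tau$ iff $\sigma$ contains some vertex outside $\tau$, i.e.\ iff $\sigma\ne\tau$. Therefore $\sigma$ is stable in $F$ iff $\sigma\notin B$, and conversely every transversal outside $B$ is stable in $F$, so $|\stb(F)|=\sigmamax{}(n)-|B|=k$; the surviving extensions include $T^1,T^2,T^3$ and hence cover all $n$ original arguments, while no $z_\tau$ is covered, giving $|\Args_{\stb(F)}|=n$. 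The only delicate step is this coverage claim: when $\sigmamax{}(n)-k$ is large the surviving transversals could a priori miss an argument, which is exactly why I protect a fixed covering triple from $B$; the rest is bookkeeping of conflict-freeness and of the defining property of the $z_\tau$.
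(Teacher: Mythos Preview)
Your proof is correct and follows essentially the paper's strategy: start from the extremal \af\ of Theorem~\ref{the:max} and kill $\sigma_{\max}(n)-k$ chosen extensions by adjoining auxiliary self-attacking arguments (the paper defers to Def.~9 of \cite{DunneDLW14} for this gadget, which is exactly your $z_\tau$ construction). Your write-up is more careful than the paper's one-line sketch in two respects: you treat $k\le n$ by a separate direct construction (the complete $k$-partite symmetric \af), and in the subtraction regime you explicitly protect a covering triple $T^1,T^2,T^3$ so that the surviving extensions still witness all $n$ original arguments---the paper merely says ``arbitrary'' extensions may be removed, leaving the coverage requirement $|\Args_{\stb(F)}|=n$ implicit.
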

\begin{longproof}
  To realize $k$ stable extensions with $n$ arguments, we start with the construction for the maximal number from Theorem~\ref{the:max}.
  We then subtract extensions as follows:
  We choose $\sigmamax{}(n)-k$ arbitrary distinct stable extensions of the {\sc af} realizing the maximal number.
  To exclude them, we use the construction of Def.~9 in \cite{DunneDLW14}.
\end{longproof}

\iflong{\ifnmr\else{
  \begin{longcorollary}
    \label{c:non-compact-realizable}
    Let $n$ be a natural number and $\sigma$ among preferred, semi-stable and stage semantics.
    For each $k\leq \sigmamax{}(n)$, there is an {\sc af} $F$ with 
    $|\Args_{\sigma(F)}|=n$ and $\sigma(F)=k$.
  \begin{proof}
    Follows from Lemmata~2.2 and 4.2 in \cite{DunneDLW14}.
  \end{proof}
  \end{longcorollary}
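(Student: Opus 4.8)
The plan is to reduce to Proposition~\ref{p:non-compact-realizable} by checking that the \af\ it produces behaves the same way under all of the semantics in question. Fix $n$, fix $\sigma$ among $\pref$, $\semi$, $\stage$, and fix $k \le \sigmamax{}(n)$; by Theorem~\ref{the:max} the value $\sigmamax{}(n)$ is independent of the particular semantics, so this is exactly the same bound. Proposition~\ref{p:non-compact-realizable} supplies an \af\ $F$ with $|\stb(F)| = k$ and $|\Args_{\stb(F)}| = n$, and it does so by starting from the extremal \af\ underlying Theorem~\ref{the:max} and then applying $\sigmamax{}(n)-k$ copies of the extension-removal construction of Definition~9 in \cite{DunneDLW14}. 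The extremal \af\ is, up to the $K_4$- or $K_2$-components forced by $n \bmod 3$, a disjoint union of symmetric self-loop-free cliques, and on such an \af\ all of $\stb$, $\stage$, $\semi$, $\pref$ and $\naive$ coincide: admissibility reduces to conflict-freeness because conflicts are mutual, every $\subseteq$-maximal or range-maximal conflict-free set is already stable, and $\stage$ and $\semi$ are squeezed in between.

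What remains is to argue that $\sigma(F) = \stb(F)$ for the \af\ $F$ obtained after the removal steps, from which $|\sigma(F)| = |\stb(F)| = k$ and $\Args_{\sigma(F)} = \Args_{\stb(F)}$ follow immediately, giving $|\Args_{\sigma(F)}| = n$ as required. One inclusion is for free: $\stb(F) \subseteq \semi(F) \subseteq \pref(F)$ and $\stb(F) \subseteq \stage(F)$ hold in every \af. For the reverse inclusion one needs that each removal gadget excludes exactly its targeted stable extension and introduces no spurious $\sigma$-extension --- i.e.\ that the gadget acts uniformly across preferred, semi-stable and stage --- which is precisely what Lemmata~2.2 and~4.2 of \cite{DunneDLW14} deliver (Lemma~2.2 handling the general semantic inclusions and the behaviour of the base construction, Lemma~4.2 handling the gadget). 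Chaining these over the $\sigmamax{}(n)-k$ removal steps yields $\sigma(F) \subseteq \stb(F)$, hence equality.

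The step I expect to be the genuine obstacle is this last one: verifying that Definition~9's removal construction does not accidentally create a non-stable preferred, semi-stable or stage extension (nor destroy a surviving one), since $\pref$, $\semi$ and $\stage$ are in general not even sandwiched between $\stb$ and $\naive$, so the coincidence cannot be read off from set inclusions alone and must be traced through the gadget's structure. This is exactly where the argument leans on the cited lemmata rather than on a short self-contained derivation, and a careful reader should confirm that the scope of Definition~9 and Lemma~4.2 in \cite{DunneDLW14} indeed covers all three semantics simultaneously.
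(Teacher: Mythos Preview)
Your proposal is correct and essentially follows the paper's approach: both rely on Lemmata~2.2 and~4.2 of \cite{DunneDLW14}, with your write-up merely unpacking what those citations deliver (the base extremal \af\ has coinciding semantics, and the Definition~9 removal gadget preserves this coincidence across $\pref$, $\semi$, $\stage$). The paper's own proof is the bare one-line citation, so your version is a more explicit rendering of the same argument rather than a different route.
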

}\fi}\fi

%

Now we are prepared to provide possible short cuts when deciding realizability of a given extension-set
by initially simply counting the extensions. 
First some formal definitions.

\begin{definition}
Given an \af\ $F=(A,R)$,
the component-structure
$\KC(F) = \{K_1,\dots,K_n\}$
of $F$
is the set of sets of arguments,
where each $K_i$ coincides with
the arguments of a weakly connected component of the underlying graph;
\mbox{$\KCt(F) = \{K \in \KC(F) \mid |K| \geq 2\}$}.
\end{definition}

\iflong{
\begin{mylongexample}
The \af\ $F=(\{a,b,c\},\{(a,b)\})$ has component-structure 
$\KC(F) =\{\{a,b\},\{c\}\}$.
\end{mylongexample}
}\fi

The component-structure $\KC(F)$
gives information about the number $n$ of components of $F$
as well as the size $|K_i|$ of each component.
Knowing the components of an \af,
computing the $\sigma$-extensions
can be reduced to
computing the $\sigma$-extensions of each component
and building the cross-product.
The \af\ resulting from restricting $F$ to component $K_i$ is given by
$\red{F}{K_i}=(K_i,R_F\cap K_i\times K_i)$.

\begin{lemma}
\label{lemma:cross-product}
Given an \af\ $F$ with component-structure $\KC(F) = \{K_1,\dots,K_n\}$
it holds that the
extensions in $\sigma(F)$ and
the tuples in $\sigma(\red{F}{K_1}) \times \dots \times \sigma(\red{F}{K_n})$
are in one-to-one correspondence.
\end{lemma}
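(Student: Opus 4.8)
The plan is to prove the claimed bijection by constructing the correspondence explicitly in both directions and then verifying that it preserves the relevant semantic property for each $\sigma$ under consideration. Write $F_i = \red{F}{K_i}$ for brevity. Define a map $\Phi$ sending a tuple $(E_1,\dots,E_n)$ with $E_i \in \sigma(F_i)$ to the set $E = E_1 \cup \dots \cup E_n$, and a map $\Psi$ sending a set $E \subseteq A_F$ to the tuple $(E \cap K_1, \dots, E \cap K_n)$. Since the $K_i$ partition $A_F$, these two maps are mutually inverse as maps between $2^{K_1}\times\dots\times 2^{K_n}$ and $2^{A_F}$; the entire content of the lemma is therefore that $\Phi$ restricts to a bijection between $\sigma(F_1)\times\dots\times\sigma(F_n)$ and $\sigma(F)$, i.e.\ that $E \in \sigma(F)$ if and only if $E \cap K_i \in \sigma(F_i)$ for every $i$.

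The first observation I would record is that, because there are no attacks between distinct components, $R_F = \bigcup_i (R_F \cap K_i\times K_i)$, and hence for any $E \subseteq A_F$ and any argument $a \in K_i$, the attackers of $a$ in $F$ all lie in $K_i$, and $E$ attacks $a$ in $F$ iff $E\cap K_i$ attacks $a$ in $F_i$. From this one gets immediately that $E \in \cf(F)$ iff $E\cap K_i \in \cf(F_i)$ for all $i$; that $E$ defends an argument $a\in K_i$ in $F$ iff $E\cap K_i$ defends $a$ in $F_i$, hence $E \in \adm(F)$ iff $E\cap K_i \in \adm(F_i)$ for all $i$; and that the range decomposes as $E_F^+ = \bigcup_i (E\cap K_i)_{F_i}^+$ with the union being disjoint, so $E_F^+ \supseteq D_F^+$ iff $(E\cap K_i)_{F_i}^+ \supseteq (D\cap K_i)_{F_i}^+$ for all $i$, with the inclusion being proper on the left iff it is proper in at least one component.

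With these building blocks the lemma follows by a uniform argument for each semantics, handling the maximality-type conditions via the standard fact that a product of posets ordered componentwise has $(x_1,\dots,x_n)$ maximal iff each $x_i$ is maximal in its factor. For $\naive$: $E$ is a maximal conflict-free set of $F$ iff each $E\cap K_i$ is a maximal conflict-free set of $F_i$, using that conflict-freeness and inclusion both decompose componentwise. For $\stb$: $E$ attacks all of $A_F\setminus E$ iff for each $i$, $E\cap K_i$ attacks all of $K_i\setminus(E\cap K_i)$, again because attacks stay within components; combined with conflict-freeness this gives the claim. For $\pref$ one uses the admissibility decomposition together with componentwise maximality of subsets; for $\stage$ and $\semi$ one uses the range decomposition above together with componentwise maximality of ranges under $\subseteq$ (for $\semi$, additionally restricting to admissible sets). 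In each case the ``$\Rightarrow$'' direction says that an extension of $F$ cuts down to an extension in each component, and ``$\Leftarrow$'' says a product of component extensions reassembles to an extension of $F$; both directions are driven purely by the componentwise decompositions listed above.

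The only mildly delicate point — and the step I would be most careful about — is the maximality direction for $\pref$, $\stage$, and $\semi$: one must argue that if $E\cap K_i$ is $\subseteq$-maximal (resp.\ range-maximal) in $F_i$ for every $i$, then $E$ is maximal in $F$, and conversely. The forward direction of this is the nontrivial half: given a hypothetical $T \in \adm(F)$ (or $\cf(F)$) with $T_F^+ \supsetneq E_F^+$, one restricts $T$ to the component where the range strictly grows and derives a contradiction to the maximality of $E\cap K_i$ there — this needs the disjointness of the component ranges so that strict growth somewhere forces strict growth in a single component while the other components are not shrunk below $E\cap K_j$. I would state this product-poset maximality fact once and invoke it for all three semantics rather than repeating the $\varepsilon$-chase. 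Everything else is bookkeeping.
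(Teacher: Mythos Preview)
Your proof is correct and follows essentially the same approach as the paper: both arguments rest on the componentwise decomposition of conflict-freeness, admissibility, and range, then handle the maximality-based semantics by checking that a strict improvement in $F$ localizes to a strict improvement in some $F_i$. The only difference is presentational --- the paper proceeds by induction on the number of components (reducing to the two-component case), whereas you argue directly for all $n$ at once and factor out the product-poset maximality observation; the underlying content is identical.
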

\begin{long-and-not-nmr-proof}
  By induction on $n$; the base case $n=1$ is trivial.
  For the induction step let $\KC(F) = \{K_1,\dots,K_n,K_{n+1}\}$.

  ``$\subseteq$'':
  Let $S\in\sigma(F)$.
  Define $D_{n+1}=S\cap K_{n+1}$.
  By induction hypothesis, there are sets $D_1,\ldots,D_n$ such that
  each $D_i$ is a $\sigma$-extension of $\red{F}{K_i}$ and
  $S\setminus K_{n+1} = D_1\cup\dots\cup D_n$.
  We have to show that $D_{n+1}$ is a $\sigma$-extension of $\red{F}{K_{n+1}}$.
  $\sigma=\stb$:
  Clearly $D_{n+1}$ is conflict-free, and any $a\in K_{n+1}\setminus D_{n+1}$ is attacked since $S$ is stable and the attacks must come from $D_{n+1}$ due to connectivity.
  $\sigma\in\{\naive,\pref\}$:
  If there is a conflict-free/admissible superset of $D_{n+1}$, then $S$ is not naive/preferred for $F$.
  $\sigma\in\{\stage,\sem\}$:
  If there is a superset of $D_{n+1}$ with greater range, then $S$ is not stage/semi-stable for $F$.

  ``$\supseteq$'':
  Let $D_1,\ldots,D_n,D_{n+1}$ such that
  each $D_i$ is a $\sigma$-extension of $\red{F}{K_i}$.
  Define $S = D_1\cup\dots\cup D_n\cup D_{n+1}$; we show that $S\in\sigma(F)$.
  By induction hypothesis, $D_1\cup\dots\cup D_n \in \sigma(\red{F}{K_1,\dots,K_n})$.
  $\sigma=\stb$:
  Clearly $S$ is conflict-free since all $D_i$ are conflict-free;
  since $D_{n+1}$ is stable for $\red{F}{K_{n+1}}$ it attacks all $a\in K_{n+1}\setminus D_{n+1}$ and thus $S$ is stable for $F$.
  $\sigma\in\{\naive,\prf\}$:
  If $S$ is not naive/preferred for $F$, there is a conflict-free/admissible superset of $S$ in $F$.
  There is at least one additional argument, that is either in $D_1\cup\dots\cup D_n$ or in $D_{n+1}$.
  But the first is impossible due to induction hypothesis, and the second due to presumption.
  $\sigma\in\{\stage,\sem\}$:
  If $S$ is not stage/semi-stable for $F$, there is a conflict-free/admissible set $S'$ with greater range.
  The range difference must manifest itself in $D_1\cup\dots\cup D_n$ or $D_{n+1}$, which leads to a contradiction with the induction hypothesis and the presumption that $D_{n+1}$ is stage/semi-stable for $\red{F}{K_{n+1}}$.
\end{long-and-not-nmr-proof}

Given an extension-set $\Ss$
we want to decide whether $\Ss$ is realizable
by a compact \af\ under semantics $\sigma$.
For an \af\ $F=(A,R)$ with $\sigma(F)=\Ss$
we know that there cannot be a conflict between any pair of arguments in $\Pairs_\Ss$,
hence $R \subseteq \noPairs{\Ss} = (A \times A) \setminus \Pairs_\Ss$.
In the next section, we will show that it is highly non-trivial to decide which of the attacks in $\noPairs{\Ss}$ can be and should be used to realize $\Ss$.
For now, the next proposition implicitly shows that
for argument-pairs $(a,b) \notin \Pairs_\Ss$,
although there is not necessarily a direct conflict between $a$ and $b$,
they are definitely in the same component.

\begin{proposition}
\label{prop:comp-structure}
\label{prop:component-structure}
Given an extension-set $\Ss$,
the component-structure $\KC(F)$
of any \af\ $F$ compactly realizing $\Ss$ under semantics $\sigma$ 
($F \in \CAF_\sigma$, $\sigma(F) = \Ss$)
is 
given by the equivalence classes of the transitive closure of $\noPairs{\Ss}$, $\noPairsCl{\Ss}$.
\end{proposition}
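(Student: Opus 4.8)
The plan is to show that the two partitions of $\Args_\Ss$ that are at stake---the weakly connected components $\KC(F)$ of $F$, and the $\noPairsCl{\Ss}$-equivalence classes (reading in the reflexive closure, so that an argument unrelated to any other counts as a singleton class)---each refine the other; being two partitions of the same set they must then coincide. Two facts will be used globally. Since $F\in\CAF_\sigma$ we have $A_F=\Args_\Ss$, so $\noPairs{\Ss}=(\Args_\Ss\times\Args_\Ss)\setminus\Pairs_\Ss$ makes sense for all arguments of $F$. And since $\sigma(F)\subseteq\cf(F)$ we have $R_F\subseteq\noPairs{\Ss}$: an attack $(a,b)\in R_F$ keeps $\{a,b\}$ out of every conflict-free set, hence out of every $\sigma$-extension, so $(a,b)\notin\Pairs_\Ss$.

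First I would check that each component of $F$ sits inside one $\noPairsCl{\Ss}$-class. Take $a,b$ in the same component; if $a=b$ this is trivial, and otherwise there is an undirected path $a=a_0,a_1,\dots,a_k=b$ in the graph $(A_F,R_F)$. Each step of the path is an attack in one direction or the other, so $(a_i,a_{i+1})\in\noPairs{\Ss}$ for every $i$, and the chain $a_0,\dots,a_k$ then witnesses $(a,b)\in\noPairsCl{\Ss}$. This half needs only $R_F\subseteq\noPairs{\Ss}$, not the cross-product lemma.

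The converse inclusion---each $\noPairsCl{\Ss}$-class sits inside one component---is the crux, and is where compactness together with Lemma~\ref{lemma:cross-product} enters. By transitivity of ``lying in the same component'' it suffices to show that $(a,b)\in\noPairs{\Ss}$ forces $a$ and $b$ into a common component. Suppose not, say $a\in K$ and $b\in L$ with $K\neq L$ in $\KC(F)$. Since $F$ is compact, $a\in\Args_{\sigma(F)}$, so some $E\in\sigma(F)$ contains $a$; by Lemma~\ref{lemma:cross-product}, $E\cap K\in\sigma(\red{F}{K})$ and $a\in E\cap K$. Symmetrically obtain $D_L\in\sigma(\red{F}{L})$ with $b\in D_L$. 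For every remaining component $M\in\KC(F)$ the set $\sigma(\red{F}{M})$ is non-empty: if it were empty, Lemma~\ref{lemma:cross-product} would give $\sigma(F)=\emptyset$ and hence $\Args_\Ss=\emptyset$, contradicting $\emptyset\neq A_F=\Args_\Ss$; so choose some $D_M\in\sigma(\red{F}{M})$. Feeding the tuple consisting of $E\cap K$, $D_L$ and all the $D_M$ back through Lemma~\ref{lemma:cross-product} yields an element of $\sigma(F)$ containing both $a$ and $b$, so $(a,b)\in\Pairs_\Ss$ --- contradicting $(a,b)\in\noPairs{\Ss}$.

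Combining the two inclusions finishes the proof. The fiddly bookkeeping point I anticipate is isolated arguments of $F$: such an argument belongs to every $\sigma$-extension for each of the five semantics, hence appears jointly with every other argument and is $\noPairs{\Ss}$-related to none, so it forms a singleton class exactly as it forms a singleton component---which is why the reflexive closure has to be read into the statement. The substantive obstacle, however, is the non-emptiness of $\sigma(\red{F}{M})$ for every component in the converse direction; this is precisely where compactness does real work, since under stable semantics a component could otherwise be extension-free and the cross-product argument would break.
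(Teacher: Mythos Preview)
Your proof is correct and follows essentially the same approach as the paper's: both directions use the same ideas (attacks witness $\noPairs{\Ss}$-steps for one inclusion, and compactness plus Lemma~\ref{lemma:cross-product} yield a joint extension for the other). Your version is more carefully spelled out---in particular, you explicitly construct the joint extension component by component and justify non-emptiness of each $\sigma(\red{F}{M})$, whereas the paper simply invokes the lemma in one line---and your remark on the reflexive closure for isolated arguments addresses a genuine subtlety the paper glosses over.
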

\begin{longproof}
Consider some extension-set $\Ss$ together with
an \af\ $F \in \CAF_\sigma$ with $\sigma(F) = \Ss$.
We have to show that for any pair of arguments $a,b \in \Args_\Ss$
it holds that
$(a,b) \in \noPairsCl{\Ss}$ iff
$a$ and $b$ are connected in the graph underlying $F$.

If $a$ and $b$ are connected in $F$,
this means that there is a sequence $s_1,\dots,s_n$
such that $a=s_1$, $b=s_n$, and $(s_1,s_2),\dots,(s_{n-1},s_n) \notin \Pairs_\Ss$,
hence $(a,b) \in \noPairsCl{\Ss}$.

If $(a,b) \in \noPairsCl{\Ss}$ then also
there is a sequence $s_1,\dots,s_n$
such that $a=s_1$, $b=s_n$, and $(s_1,s_2),\dots,(s_{n-1},s_n) \in \noPairs{\Ss}$.
Consider some $(s_i,s_{i+1}) \in \noPairs{\Ss}$
and assume, towards a contradiction, that
$s_i$ occurs in another component of $F$ than $s_{i+1}$.
Recall that $F \in \CAF_\sigma$, so each of $s_i$ and $s_{i+1}$ occur in some extension and $\sigma(F) \neq \emptyset$.
Hence, by Lemma~\ref{lemma:cross-product},
there is some $\sigma$-extension $E \supseteq \{s_i,s_{i+1}\}$ of $F$,
meaning that $(s_i,s_{i+1}) \in \Pairs_\Ss$,
a contradiction.
Hence all $s_i$ and $s_{i+1}$ for $1 \leq i < n$ occur in the same component of $F$,
proving that also $a$ and $b$ do so.
\end{longproof}

We will denote the component-structure induced by an extension-set $\Ss$ as $\KC(\Ss)$.
Note that, by Proposition~\ref{prop:comp-structure}, $\KC(\Ss)$ is equivalent
to $\KC(F)$ for every $F \in \CAF_\sigma$ with $\sigma(F) = \Ss$.
Given $\Ss$, the computation of $\KC(\Ss)$ can be done in polynomial time.
With this 
we can use results from graph theory
together with number-theoretical considerations
in order to get impossibility results for compact realizability.

\iflong{
Recall that for a single connected component with $n$ arguments the maximal number of stable extensions is denoted by $\sigmamaxcon(n)$ and its values are given by Theorem~\ref{the:maxcon}.
In the compact setting it further holds for a connected \af\ $F$ with at least $2$ arguments that $\sigma(F) \geq 2$.
}\fi

\begin{proposition}
\label{prop:not_odd}
Given an extension-set $\Ss$ where $|\Ss|$ is odd,
it holds that
if $\exists K \in \KC(\Ss) : |K| = 2$
then $\Ss$ is not compactly realizable under semantics $\sigma$.
\end{proposition}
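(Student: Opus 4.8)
The idea is to use the cross-product decomposition (Lemma~\ref{lemma:cross-product}) together with a parity argument on the factor contributed by the size-2 component. Suppose for contradiction that some $F \in \CAF_\sigma$ realizes $\Ss$, i.e.\ $\sigma(F) = \Ss$. By Proposition~\ref{prop:comp-structure}, the component-structure $\KC(F)$ equals $\KC(\Ss)$, so $F$ has a weakly connected component $K$ with $|K| = 2$. Write $\KC(F) = \{K, K_2, \dots, K_n\}$ and let $F' = \red{F}{K}$ be the sub-\af\ on that component. Then Lemma~\ref{lemma:cross-product} gives $|\Ss| = |\sigma(F')| \cdot \prod_{i=2}^{n} |\sigma(\red{F}{K_i})|$. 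Since $|\Ss|$ is odd, every factor in this product must be odd; in particular $|\sigma(F')|$ is odd.

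First I would argue that, because $F \in \CAF_\sigma$, also $F'$ is $\sigma$-compact: each argument of $K$ occurs in some $\sigma$-extension of $F$, and by the $\subseteq$-direction of Lemma~\ref{lemma:cross-product} its projection onto $K$ is a $\sigma$-extension of $F'$ containing that argument. Hence $\sigma(F')$ covers both arguments of $K$, so $|\sigma(F')| \geq 1$, and moreover $F'$ is not the single-argument \af. The core of the proof is then a case analysis over the possible two-argument \afs\ on $K = \{a,b\}$ up to isomorphism — the relevant conflict-free possibilities (after discarding self-loops on arguments that must nonetheless appear in extensions) are essentially: no attack between $a$ and $b$, a single attack $a \at b$, and the symmetric attack $a \at b$, $b \at a$ (possibly with a self-loop on one of $a,b$, but such an argument cannot appear in any conflict-free set containing it together with anything, and in fact cannot appear in stable/stage/etc.\ extensions at all once compactness is imposed, so those cases are excluded). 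For each of these, one computes $\sigma(F')$ directly for the relevant semantics: if there is no attack, the only $\sigma$-extension (naive, stable, stage, semi-stable, preferred) is $\{a,b\}$, so $|\sigma(F')| = 1$, odd — but then $a$ and $b$ do \emph{not} form a conflict, contradicting the fact that $(a,b) \notin \Pairs_\Ss$ forces them to lie in the same component only via a path of non-pairs, yet here compactness would put $\{a,b\}$ in an extension, contradiction with $K$ being its own component unless... — actually the cleaner statement is: the only way $\{a,b\}$ is a genuine 2-element component with both arguments appearing in extensions is when there is at least one attack between them, and then a short check shows $|\sigma(F')| = 2$ (the two singletons $\{a\}$, $\{b\}$ in the symmetric case; in the one-way case $\{a\}$ is not stable but $\{b\}$, $\{a\}$-containing sets... ) giving $|\sigma(F')|$ even, contradicting oddness.

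The main obstacle is pinning down exactly which two-argument \afs\ are compatible with $F' \in \CAF_\sigma$ and computing $|\sigma(F')|$ in each surviving case, being careful about the differences among the five semantics and about self-loops. I expect the clean outcome to be: once $F'$ is $\sigma$-compact on two arguments, $F'$ must be the symmetric attack $(\{a,b\},\{(a,b),(b,a)\})$, for which $\sigma(F') = \{\{a\},\{b\}\}$ and hence $|\sigma(F')| = 2$; any other two-argument \af\ either fails compactness (an isolated-pair component would be absorbed or would not arise from $\noPairs{\Ss}$ being exactly the component relation) or also yields an even count. Plugging $|\sigma(F')| = 2$ into $|\Ss| = 2 \cdot \prod_{i \geq 2} |\sigma(\red{F}{K_i})|$ makes $|\Ss|$ even, contradicting the hypothesis that $|\Ss|$ is odd. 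This contradiction establishes that no such compact $F$ exists, proving the proposition.
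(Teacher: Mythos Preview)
Your approach is essentially the paper's: assume a compact realization, invoke Proposition~\ref{prop:comp-structure} to identify $\KC(F)=\KC(\Ss)$, then use Lemma~\ref{lemma:cross-product} to factor $|\Ss|$ and derive evenness from the size-$2$ component. The paper dispatches the key step $|\sigma(\red{F}{K})|=2$ in one line (implicitly: a compact connected \af\ on $\geq 2$ arguments has $\geq 2$ extensions, and $\sigmamaxcon(2)=2$), whereas you attempt an explicit case analysis that wanders---note in particular that your ``no attack'' case is vacuous, since $K$ being a weakly connected component of size~$2$ already forces at least one attack between its two arguments, and your claim that $F'$ must be the \emph{symmetric} two-cycle fails for naive semantics (the one-way attack is naive-compact), though it still yields $|\naive(F')|=2$ so your conclusion survives.
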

\begin{longproof}
Assume to the contrary that there is an $F \in \CAF_\sigma$ with $\sigma(F)=\Ss$.
We know that $\KC(F) = \KC(\Ss)$.
By assumption there is a $K\in\KC(\Ss)$ with $|K|=2$, whence $|\sigma(K)|=2$.
Thus by Lemma~\ref{lemma:cross-product} the total number of $\sigma$-extensions is even.
Contradiction.
\end{longproof}

\begin{example}
\label{ex:7_exts}
Consider the extension-set
$\Ss = \{\{a,b,c\},$ $\{a,b',c'\},$ $\{a',b,c'\},$
$\{a',b',c\},$ $\{a,b,c'\},$ $\{a',b,c\},$ $\{a,b',c\}\} = \stb(F_1)$ 
where $F_1$ is the non-compact {\sc af} from the introduction.
There, it took us some effort to argue that $\Ss$ is not compactly $\stb$-realizable.
Proposition~\ref{prop:not_odd} now gives an easier justification:
$\Pairs_\Ss$ yields $\KC(\Ss)  = \{\{a,a'\},\{b,b'\},\{c,c'\}\}$.
Thus $\Ss$ with $|\Ss|=7$ cannot be realized.
\end{example}

We \iflong{ denote the set of possible numbers of $\sigma$-extensions
of a compact \af\ with $n$ arguments as $\PC(n)$;
likewise we }\fi denote the set of possible numbers of $\sigma$-extensions
of a compact and \emph{connected} \af\ with $n$ arguments as $\PC^c(n)$.
\iflong{
Although we know that $p \in \PC(n)$ implies $p \leq \sigmamax{}(n)$,
there may be $q \leq \sigmamax{}(n)$
which are not realizable by a compact \af\ under $\sigma$; likewise for $q\in \PC^c(n)$.
}\else{
Although we know that $p\in\PC^c(n)$ implies $p\leq \sigmamaxcon(n)$,
there might be $q\leq \sigmamaxcon(n)$ with $q\notin \PC^c(n)$.
}\fi
\iflong{

Clearly, any $p\leq n$ is possible by building an undirected graph with $p$ arguments where every argument attacks all other arguments, a $K_p$, and filling up with $k$ isolated arguments ($k$ distinct copies of $K_1$) such that $k+p=n$.
This construction obviously breaks down if we want to realize more extensions than we have arguments, that is, $p > n$.
In this case, we have to use Lemma~\ref{lemma:cross-product} and further graph-theoretic gadgets for addition and even a limited form of subtraction.
Space does not permit us to go into too much detail, but let us show how for $n=7$ any number of extensions up to the maximal number $12$ is realizable.
\newcommand{\sarg}[1]{\draw[fill=black] #1 circle (3pt)}
\def\scale{0.3}
For $12=3\cdot 4$, Theorem~\ref{the:max} yields the realization, a disjoint union of a $K_3$ and a $K_4$ 
(\begin{tikzpicture}[scale=\scale]
  \sarg{(0,0)};
  \sarg{(1,0)};
  \sarg{(0.5,0.866)};
  \draw (0,0)--(0.5,0.866)--(1,0)--(0,0);
  \sarg{(2,0)};
  \sarg{(2,1)};
  \sarg{(3,0)};
  \sarg{(3,1)};
  \draw (2,0) -- (3,0) -- (3,1) -- (2,1) -- (2,0) -- (3,1) -- (3,0) -- (2,1);
\end{tikzpicture}).
For the remaining numbers, we have that \mbox{$8=2\cdot 4\cdot 1$} and so we can combine a $K_2$, a $K_4$ and a $K_1$ (
  \begin{tikzpicture}[scale=\scale]
    \sarg{(0,0)};
    \sarg{(1,0)};
    \draw (0,0)--(1,0);
    \sarg{(2,0)};
    \sarg{(2,1)};
    \sarg{(3,0)};
    \sarg{(3,1)};
    \draw (2,0) -- (3,0) -- (3,1) -- (2,1) -- (2,0) -- (3,1) -- (3,0) -- (2,1);
    \sarg{(4,0)};
  \end{tikzpicture}).
Likewise,
$9=3\cdot 3\cdot 1$ 
(\begin{tikzpicture}[scale=\scale]
  \sarg{(0,0)};
  \sarg{(1,0)};
  \sarg{(0.5,0.866)};
  \draw (0,0)--(0.5,0.866)--(1,0)--(0,0);
  \sarg{(2,0)};
  \sarg{(3,0)};
  \sarg{(2.5,0.866)};
  \draw (2,0)--(2.5,0.866)--(3,0)--(2,0);
  \sarg{(4,0)};
\end{tikzpicture});
$10=3\cdot 3 + 1$
(\begin{tikzpicture}[scale=\scale]
  \sarg{(0,0)};
  \sarg{(1,0)};
  \sarg{(0.5,0.866)};
  \draw (0,0)--(0.5,0.866)--(1,0)--(0,0);
  \sarg{(2,0)};
  \sarg{(3,0)};
  \sarg{(2.5,0.866)};
  \draw (2,0)--(2.5,0.866)--(3,0)--(2,0);
  \sarg{(1.5,1)};
  \draw (0,0) -- (1.5,1) -- (1,0) -- (1.5,1) -- (0.5,0.866) -- (1.5,1) -- (2,0) -- (1.5,1) -- (3,0) -- (1.5,1) -- (2.5,0.866);
\end{tikzpicture}) and finally
$11=3\cdot 4-1$
(\begin{tikzpicture}[scale=\scale]
  \sarg{(0,0)};
  \sarg{(1,0)};
  \sarg{(0.5,0.866)};
  \draw (0,0)--(0.5,0.866)--(1,0)--(0,0);
  \sarg{(2,0)};
  \sarg{(2,1)};
  \sarg{(3,0)};
  \sarg{(3,1)};
  \draw (2,0) -- (3,0) -- (3,1) -- (2,1) -- (2,0) -- (3,1) -- (3,0) -- (2,1);
  \draw (1,0) -- (2,0);
\end{tikzpicture}).
These small examples already show that $\PC$ and $\PC^c$ are closely intertwined and let us deduce some general corollaries:
Firstly, $\PC^c(n)\subseteq \PC(n)$ since connected \afs\ are a subclass of \afs.
Next, $\PC(n)\subseteq\PC(n+1)$ as in the step from
\begin{tikzpicture}[scale=\scale]
  \sarg{(0,0)};
  \sarg{(1,0)};
  \sarg{(0.5,0.866)};
  \draw (0,0)--(0.5,0.866)--(1,0)--(0,0);
  \sarg{(2,0)};
  \sarg{(3,0)};
  \sarg{(2.5,0.866)};
  \draw (2,0)--(2.5,0.866)--(3,0)--(2,0);
\end{tikzpicture}
to
\begin{tikzpicture}[scale=\scale]
  \sarg{(0,0)};
  \sarg{(1,0)};
  \sarg{(0.5,0.866)};
  \draw (0,0)--(0.5,0.866)--(1,0)--(0,0);
  \sarg{(2,0)};
  \sarg{(3,0)};
  \sarg{(2.5,0.866)};
  \draw (2,0)--(2.5,0.866)--(3,0)--(2,0);
  \sarg{(4,0)};
\end{tikzpicture}.
We even know that $\PC(n)\subsetneq\PC(n+1)$ since $\sigmamax{}(n+1)\in\PC(n+1)\setminus\PC(n)$.
Furthermore, whenever $p\in\PC(n)$, then $p+1\in\PC^c(n+1)$, as in the step from 
\begin{tikzpicture}[scale=\scale]
  \sarg{(0,0)};
  \sarg{(1,0)};
  \sarg{(0.5,0.866)};
  \draw (0,0)--(0.5,0.866)--(1,0)--(0,0);
  \sarg{(2,0)};
  \sarg{(3,0)};
  \sarg{(2.5,0.866)};
  \draw (2,0)--(2.5,0.866)--(3,0)--(2,0);
\end{tikzpicture}
to
\begin{tikzpicture}[scale=\scale]
  \sarg{(0,0)};
  \sarg{(1,0)};
  \sarg{(0.5,0.866)};
  \draw (0,0)--(0.5,0.866)--(1,0)--(0,0);
  \sarg{(2,0)};
  \sarg{(3,0)};
  \sarg{(2.5,0.866)};
  \draw (2,0)--(2.5,0.866)--(3,0)--(2,0);
  \sarg{(1.5,1)};
  \draw (0,0) -- (1.5,1) -- (1,0) -- (1.5,1) -- (0.5,0.866) -- (1.5,1) -- (2,0) -- (1.5,1) -- (3,0) -- (1.5,1) -- (2.5,0.866);
\end{tikzpicture}.
The construction that goes from $12$ to $11$ above obviously only works if there are two weakly connected components overall, which underlines the importance of the component structure of the realizing \af.
}\fi
\iflong{
Indeed, multiplication of extension numbers of single components is our only chance to achieve overall numbers that are substantially larger than the number of arguments.
This is what we will turn to next.
}\fi
Having to leave the exact contents of $\PC(n)$ and $\PC^c(n)$ open, 
we can still state the following result:

\begin{proposition}
\label{prop:product_of_components}
Let $\Ss$ be an extension-set that is compactly realizable under semantics $\sigma$ where $\KCt(\Ss)=\{ K_1, \ldots, K_n \}$.
Then for each $1\leq i\leq n$ there is a $p_i\in \PC^c(|K_i|)$ such that $|\Ss| = \prod_{i=1}^n p_i$.
\end{proposition}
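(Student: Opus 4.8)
The plan is to combine the component-decomposition result (Lemma~\ref{lemma:cross-product}) with the structural characterization of components from Proposition~\ref{prop:comp-structure}. Suppose $F \in \CAF_\sigma$ realizes $\Ss$, i.e.\ $\sigma(F) = \Ss$ and $\Args_{\sigma(F)} = A_F$. By Proposition~\ref{prop:comp-structure}, the component-structure $\KC(F)$ equals $\KC(\Ss)$, which is determined purely by $\Ss$ via the transitive closure of $\noPairs{\Ss}$. Write $\KC(F) = \KC(\Ss) = \{K_1, \ldots, K_n\} \cup \{L_1, \ldots, L_m\}$ where the $K_i$ are the components with $|K_i| \geq 2$ (so $\{K_1,\dots,K_n\} = \KCt(\Ss)$) and the $L_j$ are the singleton components.

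Next I would observe that each singleton component $L_j = \{a_j\}$ contributes exactly one $\sigma$-extension to its local framework $\red{F}{L_j}$: for every semantics under consideration, the only conflict-free set in a one-argument framework with no self-loop is $\{a_j\}$ itself (note $a_j$ cannot carry a self-loop, since $F \in \CAF_\sigma$ forces $a_j$ to appear in some extension, hence to be conflict-free). Therefore $|\sigma(\red{F}{L_j})| = 1$. Setting $p_i := |\sigma(\red{F}{K_i})|$ for each $i$, Lemma~\ref{lemma:cross-product} gives a one-to-one correspondence between $\sigma(F)$ and the product $\sigma(\red{F}{K_1}) \times \cdots \times \sigma(\red{F}{K_n}) \times \sigma(\red{F}{L_1}) \times \cdots \times \sigma(\red{F}{L_m})$, so
\[
|\Ss| = |\sigma(F)| = \left(\prod_{i=1}^{n} p_i\right)\cdot\left(\prod_{j=1}^{m} 1\right) = \prod_{i=1}^{n} p_i .
\]

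It remains to argue $p_i \in \PC^c(|K_i|)$ for each $i$. The framework $\red{F}{K_i}$ has exactly $|K_i|$ arguments and is connected (it is a weakly connected component of $F$ by definition). Moreover it is $\sigma$-compact: since $F \in \CAF_\sigma$, every argument of $K_i$ appears in some $\sigma$-extension of $F$, and by the one-to-one correspondence of Lemma~\ref{lemma:cross-product} (projecting to the $K_i$-coordinate) this means every argument of $K_i$ appears in some extension of $\sigma(\red{F}{K_i})$, i.e.\ $\red{F}{K_i} \in \CAF_\sigma$. Hence $p_i = |\sigma(\red{F}{K_i})|$ is by definition a possible number of $\sigma$-extensions of a compact connected \af\ with $|K_i|$ arguments, that is, $p_i \in \PC^c(|K_i|)$.

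I expect the only subtle point to be the bookkeeping around Lemma~\ref{lemma:cross-product}: one must make sure the ``one-to-one correspondence'' there genuinely lets us both (a) multiply the local extension counts to get $|\Ss|$ and (b) transfer the compactness property from $F$ to each $\red{F}{K_i}$ (the projection of a global extension onto a component is a local extension, and conversely). Both directions are already built into the statement of that lemma, so the argument is short; the main conceptual ingredient is simply recognizing that Proposition~\ref{prop:comp-structure} pins down the component sizes $|K_i|$ from $\Ss$ alone, which is what makes the statement meaningful as an obstruction to realizability.
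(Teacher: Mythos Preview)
Your proposal is correct and follows essentially the same approach as the paper: use Lemma~\ref{lemma:cross-product} to factor $|\Ss|$ over the components, discard the singleton components as contributing a factor of~$1$, and observe that each remaining factor lies in $\PC^c(|K_i|)$. Your write-up is in fact more careful than the paper's (very terse) proof, since you explicitly justify why singleton components cannot carry self-loops and why each restricted framework $\red{F}{K_i}$ is itself $\sigma$-compact---points the paper leaves implicit.
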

\begin{longproof}
First note that components of size $1$
can be ignored since they have no impact on the number of $\sigma$-extensions.
Lemma~\ref{lemma:cross-product} also implies that
the number of $\sigma$-extensions of an \af\ with multiple components
is the product of the number of $\sigma$-extensions of each component.
Since the factor of any component $K_i$ must be in $\PC^c(|K_i|)$ the result follows.
\end{longproof}

\begin{example}
\label{ex:4_exts}
Consider the extension-set
$\Ss' = \{\{a,b,c\},\{a,b',c'\},$ $\{a',b,c'\},$ $\{a',b',c\}\}$.
(In fact there exists a (non-compact) \af\ $F$ with $\stb(F)=\Ss'$).
We have the same component-structure $\KC(\Ss') = \KC(\Ss)$ as in Example~\ref{ex:7_exts},
but since now $|\Ss'|=4$ we cannot use Proposition~\ref{prop:not_odd}
to show impossibility of realization in terms of a compact \af.
But with Proposition~\ref{prop:product_of_components}
at hand we can argue in the following way:
$\PC^c(2) = \{2\}$ and since $\forall K \in \KC(\Ss'): |K|=2$
it must hold that $|\Ss| = 2 \cdot 2 \cdot 2 = 8$,
which is obviously not the case.
\end{example}

In particular, we have a straightforward non-realizability criterion whenever $|\Ss|$ is prime:
the \af\ (if any) must have at most one weakly connected component of size greater than two.
Theorem~\ref{the:maxcon} gives us the maximal number of $\sigma$-extensions in a single weakly connected component.
Thus whenever the number of desired extensions is larger than that number and prime, it cannot be realized.
\begin{corollary}
  Let extension-set $\Ss$ with $|\Args_\Ss|=n$ be compactly realizable under $\sigma$.
  If $|\Ss|$ is a prime number, then $|\Ss|\leq\sigmamaxcon(n)$.
\end{corollary}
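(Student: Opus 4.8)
The plan is to combine Proposition~\ref{prop:product_of_components} with the elementary observation that every weakly connected compact component contributes a factor of at least $2$ to the total extension count. First I would fix a witness: since $\Ss$ is compactly realizable under $\sigma$, pick $F \in \CAF_\sigma$ with $\sigma(F) = \Ss$, and write $\KCt(\Ss) = \{K_1, \dots, K_m\}$. Proposition~\ref{prop:product_of_components} then yields numbers $p_i \in \PC^c(|K_i|)$ with $|\Ss| = \prod_{i=1}^m p_i$.

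Next I would argue that each $p_i \geq 2$: a connected compact \af\ on at least two arguments cannot have just a single extension (that extension would have to be the whole argument set by compactness, forcing an empty attack relation and hence disconnectedness), so $\PC^c(k) \subseteq \{2,3,\dots\}$ for every $k \geq 2$ --- this is exactly the remark stated just before Proposition~\ref{prop:product_of_components}. Since $|\Ss|$ is prime it admits no nontrivial factorization, so at most one of the factors $p_i$ can exceed $1$; as all of them are $\geq 2$, this forces $m = 1$ and $|\Ss| = p_1 \in \PC^c(|K_1|)$.

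Finally, I would invoke the definition of $\PC^c$ (equivalently, Theorem~\ref{the:maxcon}) to get $p_1 \leq \sigmamaxcon(|K_1|)$. Since $K_1 \subseteq \Args_\Ss$ we have $|K_1| \leq n$, and since $\sigmamaxcon$ is non-decreasing --- immediate from the closed form in Theorem~\ref{the:maxcon} --- I would conclude $|\Ss| = p_1 \leq \sigmamaxcon(|K_1|) \leq \sigmamaxcon(n)$. I do not expect a genuine obstacle here; the only points that need a moment's care are ruling out the degenerate case $\KCt(\Ss) = \emptyset$ (impossible, since then $|\Ss| = 1$ is not prime) and confirming the monotonicity of $\sigmamaxcon$, both of which are routine.
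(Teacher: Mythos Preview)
Your argument is correct and follows the same line the paper sketches informally in the paragraph preceding the corollary: primality of $|\Ss|$ together with Proposition~\ref{prop:product_of_components} forces a single non-trivial component, whose extension count is then bounded via $\sigmamaxcon$. The paper does not give a formal proof beyond that sketch; your write-up makes explicit two points the paper leaves implicit, namely the exclusion of the degenerate case $\KCt(\Ss)=\emptyset$ and the monotonicity of $\sigmamaxcon$ needed to pass from $|K_1|$ to $n$.
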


\iflong{
\begin{mylongexample}
  Let $\Ss$ be an extension-set with $|\Args_\Ss|=9$ and $|\Ss|=23$.
  We find that 
  \mbox{$\sigmamaxcon(9)=2\cdot 3^{2}+2^2=22<23=|\Ss|$}
  and thus $\Ss$ is not compactly realizable under semantics $\sigma$.
\end{mylongexample}
}\fi

We can also make use of the derived component structure of an extension-set $\Ss$.
Since the total number of extensions of an \af\ is the product of these numbers for its weakly connected components (Lemma~\ref{lemma:cross-product}), each non-trivial component contributes a non-trivial amount to the total.
Hence if there are more components than the factorization of $|\Ss|$ has primes in it, then $\Ss$ cannot be realized.

\begin{corollary}
Let extension-set $\Ss$ be compactly realizable under $\sigma$ and 
\mbox{$f_1^{z_1} \cdot \ldots \cdot f_m^{z_m}$} be the integer factorization of $|\Ss|$, where $f_1,\dots,f_m$ are prime numbers.
Then \mbox{$z_1+\ldots + z_m \geq |\KCt(\Ss)|$}.
\end{corollary}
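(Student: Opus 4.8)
The plan is to read the corollary off Proposition~\ref{prop:product_of_components} together with the simple observation that every \emph{non-trivial} component of a compact \af\ forces at least two $\sigma$-extensions, and then invoke unique factorization. First I would apply Proposition~\ref{prop:product_of_components}: writing $\KCt(\Ss)=\{K_1,\dots,K_k\}$, so that $k=|\KCt(\Ss)|$, there are numbers $p_i\in\PC^c(|K_i|)$ with $|\Ss|=\prod_{i=1}^k p_i$. (This is where the multiplicativity from Lemma~\ref{lemma:cross-product}, packaged into Proposition~\ref{prop:product_of_components}, does the real work: the total count is the product of the per-component counts, and components of size $1$ contribute a factor $1$ and may be ignored.)

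Second, I would show $p_i\geq 2$ for every $i$. Each $p_i$ is the number of $\sigma$-extensions of some compact connected \af\ $F_i$ on $|K_i|\geq 2$ arguments. If $F_i$ had just one $\sigma$-extension $E$, then compactness would force $\Args_{\sigma(F_i)}=A_{F_i}$, i.e.\ $E=A_{F_i}$; but $E$ is conflict-free under every semantics considered, so $F_i$ would contain no attack at all, contradicting connectedness of an \af\ with at least two arguments. Hence $|\sigma(F_i)|\geq 2$, i.e.\ $p_i\geq 2$ (this is exactly the remark already recorded in the long version preceding Proposition~\ref{prop:not_odd}).

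Finally, $|\Ss|$ is a product of $k$ integers each at least $2$; hence its prime factorization $f_1^{z_1}\cdot\ldots\cdot f_m^{z_m}$ contains at least $k$ primes counted with multiplicity, since each factor $p_i\geq 2$ contributes at least one prime to the factorization of the product. Therefore $z_1+\ldots+z_m\geq k=|\KCt(\Ss)|$. I do not expect any genuine obstacle here: the only step needing a word of justification is the bound $p_i\geq 2$, and everything else is bookkeeping with the component decomposition and unique factorization.
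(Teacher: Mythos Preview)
Your proof is correct and follows exactly the approach the paper intends: the paper itself gives no formal proof of this corollary, only the one-sentence justification preceding it (``each non-trivial component contributes a non-trivial amount to the total''), which is precisely your Step~2 bound $p_i\geq 2$ combined with Proposition~\ref{prop:product_of_components} and unique factorization. Your explicit argument for $p_i\geq 2$ (a single extension in a compact \af\ would have to equal the whole argument set, hence be conflict-free, hence the \af\ is edgeless and disconnected) is just an unpacking of the remark the long version records before Proposition~\ref{prop:not_odd}.
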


\iflong{
\begin{mylongexample}
Consider an extension-set $\Ss$
containing $21$ extensions
and $|\KC(\Ss)|=3$.
Since $21 = 3^1*7^1$ and further $1+1 < 3$,
$\Ss$ is not compactly realizable under semantics $\sigma$.
\end{mylongexample}
}\fi

\mysection{Capabilities of Compact AFs}
\label{sec:capabilites}

The results in the previous section made clear
that the restriction to compact \afs\ entails
certain limits in terms of compact realizability.
Here we provide some results
approaching an exact characterization
of the capabilities of compact \afs\
with a focus on stable semantics.

\iflong{\vspace*{3mm}}\else{\vspace*{6mm}}\fi
\mysubsection{C-Signatures}
\label{subsec:signatures}

The \emph{signature} of a semantics $\sigma$
is defined as $\Sigma_\sigma = \{\sigma(F) \mid F \in \allafs\}$
and contains all possible sets of extensions an \af\ can possess under $\sigma$ 
(see
\cite{DunneDLW14}
for 
characterizations of such signatures).
We first 
provide alternative, yet equivalent, characterizations of the signatures
of some the semantics under consideration.
Then we strengthen the concept of signatures to ``compact'' signatures (c-signatures),
which contain 
all extension-sets realizable with compact \af s.



The most central concept when structurally analyzing extension-sets
is captured by the $\Pairs$-relation from Definition~\ref{def:argspairs}.
Whenever two arguments $a$ and $b$ occur jointly in some element $S$ of extension-set $\Ss$
(i.e.\ $(a,b) \in \Pairs_\Ss$)
there cannot be a conflict between those arguments in an {\sc af}
having $\Ss$ as solution under any standard semantics.
$(a,b) \in \Pairs_\Ss$ can be read as ``evidence of no conflict'' between $a$ and $b$ in $\Ss$.
Hence, the $\Pairs$-relation gives rise to sets of arguments
that are conflict-free in any \af\ realizing $\Ss$.

\iflong{\vspace*{2mm}}\fi
\begin{definition}\label{df:x}
Given an extension-set $\Ss$, we define
\iflong
\begin{itemize}
\item $\Ss^\cf = \{S \subseteq \Args_\Ss \mid \forall a,b \in S : (a,b) \in \Pairs_\Ss\}$;
\item $\Ss^+ = \max_\subseteq\; \Ss^\cf$.
\end{itemize}
\else
$
   \Ss^\cf = \{S \subseteq \Args_\Ss \mid \forall a,b \in S : (a,b) \in \Pairs_\Ss\}$, and
$
   \Ss^+ = \max_\subseteq\; \Ss^\cf
$.
\fi
\end{definition}


\iflong{ 
To show that the characterizations of signatures
in Proposition~\ref{prop:signatures} below
are indeed equivalent to the ones given in \cite{DunneDLW14}
we first recall some definitions from there.

\begin{mylongdefinition}
For an extension-set \mbox{$\Ss \subseteq 2^\Aa$},
the \emph{downward-closure} of $\Ss$ is defined as
\mbox{$\dcl(\Ss) = \{S'\subseteq S \mid S \in \Ss\}$}.
Moreover, $\Ss$ is called
\begin{itemize}
\ifnmr{
\item
\emph{incomparable}, if for all $S,S' \in \Ss$,
$S \subseteq S'$ implies $S{=}S'$,
\item
\emph{tight} if
for all \mbox{$S \in \Ss$} and \mbox{$a \in \Args_\Ss$} it holds that
if \mbox{$(S \cup \{a\}) \notin \Ss$}
then there exists an \mbox{$s \in S$} such that \mbox{$(a,s) \notin \Pairs_\Ss$}.
}
\else{
\item
\emph{incomparable}, if for each $S,S' \in \Ss$,
$S \subseteq S'$ implies $S = S'$,
\item
\emph{tight} if
for all $S \in \Ss$ and $a \in \Args_\Ss$ it holds that
if $(S \cup \{a\}) \notin \Ss$
then there exists an $s \in S$ such that $(a,s) \notin \Pairs_\Ss$, and
\item
\emph{adm-closed} if for all $A,B \in \Ss$ the following holds:
if $(a,b) \in \Pairs_\Ss$ for each $a,b \in (A\cup B)$
then also $(A \cup B) \in \Ss$.
}
\fi
\end{itemize}
\end{mylongdefinition}

}\fi 

\begin{proposition}
\label{prop:signatures}
\iflong{
\ifnmr{
$\Sigma_{\naive}\! =\!\{\Ss\neq \emptyset \mid \Ss = \Ss^+ \};$\\
$\Sigma_{\stb}\!=\!\{ \Ss \mid \Ss \subseteq \Ss^+ \};$ 
$\Sigma_{\stage}\!=\!\{ \Ss \neq \emptyset \mid \Ss \subseteq \Ss^+ \}$.
}
\else{
\begin{align*}
 \Sigma_{\naive} &= \{\Ss\neq \emptyset \mid \Ss = \Ss^+ \} \\
 \Sigma_{\stb} &= \{ \Ss \mid \Ss \subseteq \Ss^+ \} \\
 \Sigma_{\stage} &= \{ \Ss \neq \emptyset \mid \Ss \subseteq \Ss^+ \} \\
 \Sigma_{\pref} &= \{ \Ss \neq \emptyset \mid \forall A,B \in \Ss \;(A\neq B)\; \forall C \in \Ss^+ : (A \cup B) \not\subseteq C \} \\
 \Sigma_{\semi} &= \{ \Ss \neq \emptyset \mid \forall A,B \in \Ss \;(A\neq B)\; \forall C \in \Ss^+ : (A \cup B) \not\subseteq C \}.
\end{align*}
}
\fi
}
\else{
$\Sigma_{\naive}\! =\!\{\Ss\neq \emptyset \mid \Ss = \Ss^+ \};$\ \
$\Sigma_{\stb}\!=\!\{ \Ss \mid \Ss \subseteq \Ss^+ \};$ 
$\Sigma_{\stage}\!=\!\{ \Ss \neq \emptyset \mid \Ss \subseteq \Ss^+ \}$.
}\fi
\end{proposition}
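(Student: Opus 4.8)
The plan is to prove each of the three equalities by its two inclusions --- \emph{soundness} (every $\sigma(F)$ satisfies the stated right-hand condition) and \emph{completeness} (every $\Ss$ satisfying the condition equals $\sigma(F)$ for some $F$). I would first record some purely combinatorial facts about the operators of Definition~\ref{df:x}: $\Ss^\cf$ is downward closed, $\Args_{\Ss^\cf}=\Args_\Ss$, $\Ss\subseteq\Ss^\cf$, and by finiteness $S\in\Ss^\cf$ iff $S$ is contained in some member of $\Ss^+$. The workhorse is the \emph{canonical framework} $F_\Ss=(\Args_\Ss,\{(a,b)\mid a\neq b,\ (a,b)\notin\Pairs_\Ss\})$: since conflict-freeness is a pairwise property, $\cf(F_\Ss)=\Ss^\cf$, and because $F_\Ss$ is symmetric and self-loop-free, its maximal conflict-free sets are exactly its stable ones (each attacks everything outside it) and exactly its stage ones (all have range $\Args_\Ss$); hence $\naive(F_\Ss)=\stb(F_\Ss)=\stage(F_\Ss)=\Ss^+$. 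The degenerate extension-sets $\emptyset$ and $\{\emptyset\}$ (where $\Args_\Ss=\emptyset$) I would dispatch by hand using a single self-attacking argument.

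For soundness under $\naive$: if $\naive(F)=\Ss$ then every conflict-free set extends to a naive one, so $\cf(F)=\dcl(\Ss)$; moreover a pair $\{a,b\}$ is conflict-free in $F$ iff it extends to a naive extension iff $(a,b)\in\Pairs_\Ss$, so pairwiseness gives $\cf(F)=\Ss^\cf$ and thus $\Ss=\max_\subseteq \cf(F)=\Ss^+$. For soundness under $\sigma\in\{\stb,\stage\}$: every member of $\Ss=\sigma(F)$ is conflict-free, so $\Ss^\cf\subseteq\cf(F)$ (each pair of $\Pairs_\Ss$ is conflict-free in $F$); and each $E\in\Ss$ is $\subseteq$-maximal in $\Ss^\cf$, because a conflict-free proper superset $S\supsetneq E$ would contradict $E\in\stb(F)$ (it would contain an argument attacked by $E$) resp.\ $E\in\stage(F)$ (the arguments of $S\setminus E$ are not attacked by $E$, being in a conflict-free set with it, so $S$ has range strictly larger than $E_F^+$). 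Hence $\Ss\subseteq\Ss^+$; the ``$\neq\emptyset$'' side conditions match the fact that $\stage(F)$ is always non-empty whereas $\stb(F)$ may be empty.

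For completeness: if $\Ss=\Ss^+\neq\emptyset$, then $F_\Ss$ realizes $\Ss$ under $\naive$. If $\Ss\subseteq\Ss^+$, then $F_\Ss$ realizes $\Ss^+$ under both $\stb$ and $\stage$, which already equals $\Ss$ when $\Ss=\Ss^+$; in general one must still delete the finitely many surplus sets in $\Ss^+\setminus\Ss$ from the extensions. This surplus is genuinely non-empty in general --- e.g.\ $\Ss=\{\{1,2,5\},\{3,4,6\},\{1,3,7\},\{1,4,8\},\{2,3,9\},\{2,4,10\}\}$ is an antichain satisfying $\Ss\subseteq\Ss^+$, yet $\{1,2,3,4\}\in\Ss^+\setminus\Ss$ --- so the deletion step is unavoidable. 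I would remove the surplus by iterating the extension-removal gadget of \cite{DunneDLW14} (its Definition~9 for $\stb$, and the analogous construction behind their Lemmata~2.2 and~4.2 for $\stage$): given an {\sc af} realizing $\Ss'$ and a target $T\in\Ss'$, it yields an {\sc af} realizing $\Ss'\setminus\{T\}$ by adding fresh arguments that occur in no extension. Since $\Ss^+\setminus\Ss$ is finite and the added arguments never enter an extension, iterating produces an {\sc af} with exactly the $\sigma$-extensions $\Ss$; the empty extension-set is realized under $\stb$ by a lone self-loop.

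The main obstacle is exactly this completeness step for $\stb$ and $\stage$: realizing $\Ss$ \emph{precisely}, rather than its ``conflict-closure'' $\Ss^+$, which forces suppression of the surplus maximal conflict-free sets and a check that the gadgets create no unintended extensions and leave the component structure on $\Args_\Ss$ intact. The remaining work --- verifying that ``$\Ss=\Ss^+$'' and ``$\Ss\subseteq\Ss^+$'' are the incomparability/tightness-style conditions characterizing $\Sigma_{\naive}$ and $\Sigma_{\stb}$ in \cite{DunneDLW14}, the forward directions using that $\Ss^+$ is an antichain and that a tightness failure at some $S\in\Ss$ produces a pairwise-$\Pairs_\Ss$-consistent proper superset of $S$ --- is routine once the facts about $\Ss^\cf$ are in place, modulo the usual care with the two degenerate extension-sets on which the ``$\neq\emptyset$'' conditions turn.
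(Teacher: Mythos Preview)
Your proposal is correct but takes a different route from the paper. You argue \emph{directly}: show soundness (every $\sigma(F)$ satisfies the stated condition) and completeness (realize any such $\Ss$ via the canonical symmetric framework $F_\Ss$, then iterate the extension-removal gadgets of \cite{DunneDLW14} to kill the surplus $\Ss^+\setminus\Ss$). The paper instead proves the proposition \emph{by translation}: it takes for granted the signature characterizations of \cite[Theorem~1]{DunneDLW14} (in terms of ``incomparable and tight'' for $\stb/\stage$, and ``incomparable and $\dcl(\Ss)$ tight'' for $\naive$) and shows these are equivalent, respectively, to $\Ss\subseteq\Ss^+$ and $\Ss=\Ss^+$. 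That equivalence---which you relegate to a closing remark as ``routine''---is the entire content of the paper's proof; no soundness/realizability argument is repeated.

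What each buys: the paper's reduction is shorter and avoids re-engineering the extension-removal constructions (and the associated checks that no unintended extensions appear, which you correctly flag as the delicate step for $\stb$ and especially $\stage$). Your direct approach is more self-contained on the soundness side---your range argument for $\stage$ maximality is a nice one-liner---but on the completeness side you still depend on \cite{DunneDLW14}, just on its gadgets rather than on its main theorem, so the dependency is not really eliminated. If you want to follow the paper's line, the substantive step is the $\Leftarrow$ direction of ``incomparable $+$ $\dcl(\Ss)$ tight $\Leftrightarrow$ $\Ss=\Ss^+$'': given $S\in\Ss^+\setminus\Ss$, pick a maximal $S'\subsetneq S$ in $\dcl(\Ss)$ and any $a\in S\setminus S'$ to witness a tightness failure of $\dcl(\Ss)$.
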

\begin{longproof}
Being aware of Theorem~1 from \cite{DunneDLW14}
we have to show that,
given an extension-set $\Ss\subseteq 2^\Aa$ the following hold:
\begin{enumerate}
\item
$\Ss$ is incomparable and tight iff $\Ss \subseteq \Ss^+$,
\item
$\Ss$ is incomparable and $\dcl(\Ss)$ is tight iff $\Ss = \Ss^+$\ifnmr{.
}
\else{
,
\item
$\Ss$ is incomparable and adm-closed iff
$\forall A,B \in \Ss (A\neq B) \forall C \in \Ss^+ : (A\cup B) \not\subseteq C$.
}
\fi
\end{enumerate}

\noindent (1)
$\Rightarrow$:
Consider an incomparable and tight extension-set $\Ss$
and assume that $\Ss \not\subseteq \Ss^+$.
To this end let $S \in \Ss$ with $S \notin \Ss^+$.
Since $S \in \Ss^\cf$ by definition,
there must be some
$S' \supset S$ with $S' \in \Ss^+$.
$S' \notin \Ss$ holds by incomparability of $\Ss$.
But $S' \in \Ss^+$ means that there is some $a \in (S' \sm S)$
such that $\forall s \in S : (a,s) \in \Pairs_\Ss$,
a contradiction to the assumption that $\Ss$ is tight.

\noindent
$\Leftarrow$:
Let $\Ss$ be an extension-set such that $\Ss \subseteq \Ss^+$.
Incomparability is clear.
Now assume, towards a contradiction,
that are some $S\in\Ss$ and $a \in \Args_\Ss$
such that $(S \cup \{a\}) \notin \Ss$ and $\forall s \in S : (a,s) \in \Pairs_\Ss$.
Then there is some $S' \supseteq (S \cup \{a\})$ with $S' \in \Ss^+$,
a contradiction to $S \in \Ss^+$.

\noindent (2)
$\Rightarrow$:
Consider an incomparable extension-set $\Ss$
where $\dcl(\Ss)$ is tight and
assume that $\Ss \neq \Ss^+$.
Note that $\Pairs_\Ss = \Pairs_{\dcl(\Ss)}$.
Since $\dcl(\Ss)$ being tight implies that $\Ss$ is tight
(cf.\ Lemma~2.1 in \cite{DunneDLW14}),
$\Ss \subseteq \Ss^+$ follows by (1).
Now assume there is some $S \in \Ss^+$
with $S \notin \Ss$.
Note that $|S|\geq 3$.
Now let $S' \subset S$ and $a \in (S \sm S')$
such that $S' \in \dcl(\Ss)$ and $(S' \cup \{a\}) \notin \dcl(\Ss)$.
Such an $S'$ exists since
for each pair of arguments $a,b \in S'$, $(a,b) \in \Pairs_\Ss$ holds
as $S \in \Ss^+$.
Since also $\forall s \in S' : (a,s) \in \Pairs_\Ss$,
we get a contradiction to the assumption that $\dcl(\Ss)$ is tight.

\noindent
$\Leftarrow$:
Consider an extension-set $\Ss$ with $\Ss = \Ss^+$.
Incomparability is straight by definition.
Now assume, towards a contradiction,
that are some $S\in\dcl(\Ss)$ and $a \in \Args_\Ss$
such that $(S \cup \{a\}) \notin \dcl(\Ss)$ and $\forall s \in S : (a,s) \in \Pairs_\Ss$.
Then $(S \cup \{a\}) \in \Ss^\cf$, and
moreover there is some $S' \supseteq (S\cup \{a\})$ with $S' \in \Ss^+$ and $S' \notin \Ss$,
a contradiction to $\Ss = \Ss^+$.
\ifnmr{}
\else{

\noindent (3)
Consider an extension-set $\Ss$
and let $A,B \in \Ss$ with $A\neq B$.
It is easy to see that $\Ss$ is incomparable and adm-closed iff
there exists two arguments $a,b \in (A \cup B)$ with $(a,b) \notin \Pairs_\Ss$ iff
$\forall C \in \Ss^+ : (A \cup B) \not\subseteq C$ holds.
}
\fi
\end{longproof}





Let us now turn to signatures for compact \af s.

\begin{definition}
\label{def:caf,cs}
The \emph{c-signature} $\Sigma_\sigma^c$ of a semantics $\sigma$ is defined as 
\iflong
\[
 \Sigma_\sigma^c = \{\sigma(F) \mid F \in \CAF_\sigma \}.
\]
\else
$
 \displaystyle \Sigma_\sigma^c = \{\sigma(F) \mid F \in \CAF_\sigma \}.
$
\fi
\end{definition}

It is clear that $\Sigma_\sigma^c \subseteq \Sigma_\sigma$
holds for any semantics.
The following result is mainly by the fact that the canonical \af\
%
%
%
\vspace{-2pt}
\[
F^\cf_\Ss = (A^\cf_\Ss,R^\cf_\Ss) = (\Args_\Ss, (\Args_\Ss \times \Args_\Ss) \sm \Pairs_\Ss)
\vspace{-1pt}
\]
%
%
%
%
%
%
%
has $\Ss^+$ as extensions under all semantics under consideration and
by extension-sets obtained from non-compact \afs\ which definitely cannot
be transformed to equivalent compact \afs.

\iflong{
The following technical lemma makes this clearer.

\begin{mylonglemma}
\label{lemma:realizability}
Given a non-empty extension-set $\Ss$, it holds that
$\sigma(F^\cf_\Ss) = \Ss^+$ where $\sigma \in \{\naive,\stb,\stage,\prf,\semi\}$.
\end{mylonglemma}
\begin{longproof}
%
%
$\naive$: The set $\naive(F^\cf_\Ss)$ contains the $\subseteq$-maximal
elements of $\cf(F^\cf_\Ss)$ just as $\Ss^+$ does of $\Ss^\cf$.
Therefore $\naive(F^\cf_\Ss) = \Ss^+$ follows directly from
the obvious fact that $\cf(F^\cf_\Ss) = \Ss^\cf$.

\noindent
$\stb,\stage,\prf,\sem$: Follow from the fact that
for the symmetric {\sc af} $F^\cf_\Ss$,
$\naive(F^\cf_\Ss) = \stb(F^\cf_\Ss) = \stage(F^\cf_\Ss) = \prf(F^\cf_\Ss) = \sem(F^\cf_\Ss)$
\cite{Coste-MarquisDM05}.
\end{longproof}
}\fi

\begin{proposition}
\label{prop:cf_naive_signature}
It holds that (1) $\Sigma^c_\naive = \Sigma_\naive$; and (2) $\Sigma^c_\sigma \subset \Sigma_\sigma$ for $\sigma \in \{\stb,\stage,\semi,\pref\}$.
\end{proposition}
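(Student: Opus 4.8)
The plan is to establish the two claims separately, using the canonical framework $F^\cf_\Ss$ and Lemma~\ref{lemma:realizability}.

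For (1), $\Sigma^c_\naive = \Sigma_\naive$: the inclusion $\Sigma^c_\naive \subseteq \Sigma_\naive$ is immediate from $\CAF_\naive \subseteq \allafs$. For the converse, let $\Ss \in \Sigma_\naive$. By Proposition~\ref{prop:signatures} this means $\Ss = \Ss^+$ (and $\Ss \neq \emptyset$). First I would apply Lemma~\ref{lemma:realizability} to get $\naive(F^\cf_\Ss) = \Ss^+ = \Ss$, so $F^\cf_\Ss$ realizes $\Ss$ under naive semantics. It remains to check that $F^\cf_\Ss$ is in fact $\naive$-compact, i.e.\ $\Args_{\naive(F^\cf_\Ss)} = A^\cf_\Ss = \Args_\Ss$. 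Since $\naive(F^\cf_\Ss) = \Ss$, we have $\Args_{\naive(F^\cf_\Ss)} = \Args_\Ss$ directly from the definition of $\Args$. Hence $F^\cf_\Ss \in \CAF_\naive$ and $\Ss \in \Sigma^c_\naive$.

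For (2), $\Sigma^c_\sigma \subsetneq \Sigma_\sigma$ for $\sigma \in \{\stb,\stage,\semi,\pref\}$: the inclusion $\subseteq$ is again trivial. For properness I need a witness extension-set $\Ss \in \Sigma_\sigma \setminus \Sigma^c_\sigma$ for each of the four semantics. The natural choice is $\Ss = \stb(F_1)$ from the introduction (equivalently the set $\Ss$ of Example~\ref{ex:7_exts}), which has $|\Ss| = 7$ and $\KC(\Ss) = \{\{a,a'\},\{b,b'\},\{c,c'\}\}$. For stable semantics, Proposition~\ref{prop:not_odd} already shows $\Ss$ is not compactly $\stb$-realizable, while $F_1$ witnesses $\Ss \in \Sigma_\stb$; this settles $\sigma = \stb$. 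For $\sigma = \stage$, note $F_1$ also has $\Ss$ as its stage extensions (since in $F_1$ every stable extension has range $A_{F_1}$ and no strictly larger conflict-free range is possible), so $\Ss \in \Sigma_\stage$; and the same component/parity argument of Proposition~\ref{prop:not_odd} applies verbatim via Lemma~\ref{lemma:cross-product} because a size-$2$ component contributes exactly factor $2$ under stage as well, so no $\stage$-compact realization exists. For $\sigma \in \{\pref,\semi\}$, since in $F_1$ every admissible set extends to a stable one, $\pref(F_1) = \semi(F_1) = \Ss$, giving $\Ss \in \Sigma_\sigma$; and again the component structure of any compact realization is forced to be $\KC(\Ss)$ by Proposition~\ref{prop:component-structure}, with each size-$2$ component contributing factor $2$, so $|\sigma(F)|$ would be even --- contradicting $|\Ss| = 7$. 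Thus $\Ss \notin \Sigma^c_\sigma$ in all four cases.

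The main obstacle is not the parity argument itself but making sure the single witness $\Ss$ genuinely lies in $\Sigma_\sigma$ for \emph{every} $\sigma$ in the list simultaneously --- that is, verifying that $F_1$ (or a suitable variant) has the same extension-set $\Ss$ under stable, stage, preferred, and semi-stable semantics. This reduces to the observation that in $F_1$ each stable extension already has full range $A_{F_1}$ and every conflict-free (resp.\ admissible) set is contained in some stable extension; once that is checked, Proposition~\ref{prop:not_odd}, Lemma~\ref{lemma:cross-product}, and Proposition~\ref{prop:component-structure} do the rest uniformly. If one prefers, a cleaner route for $\stage$ and $\semi$ is to exhibit a tailored extension-set with an odd number of extensions and a size-$2$ component that is explicitly realizable under that semantics, but reusing $\Ss = \stb(F_1)$ keeps the argument short.
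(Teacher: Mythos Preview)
Your argument for part~(1) is correct and essentially identical to the paper's.

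For part~(2), your approach diverges from the paper's and contains a genuine error in the preferred case. The claim that ``in $F_1$ every admissible set extends to a stable one'' is false: the set $\{a',b',c'\}$ is conflict-free in $F_1$, each of its members defends itself via the symmetric attacks, and it is $\subseteq$-maximal among conflict-free (hence admissible) sets; yet it does not attack $x$ and so is not stable. Consequently $\{a',b',c'\}\in\pref(F_1)$ and $\pref(F_1)$ has eight elements, not seven. Thus $F_1$ does \emph{not} witness $\Ss\in\Sigma_\pref$, and your uniform-witness plan breaks down there. (Your conclusion for $\semi$ and $\stage$ happens to survive, but for the wrong stated reason: the correct justification is that $\stb(F_1)\neq\emptyset$ forces $\semi(F_1)=\stage(F_1)=\stb(F_1)$, not that every admissible set extends to a stable one.)

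The paper avoids this trap by using \emph{different} witnesses: for $\stb$ and $\stage$ it takes the same seven-element $\Ss$ and argues directly (not via parity) that any compact realization must contain all six attacks between primed and unprimed arguments, whence $\{a',b',c'\}$ would appear as an extra extension; for $\pref$ and $\semi$ it uses a separate three-element extension-set $\{\{a,b\},\{a,c,e\},\{b,d,e\}\}$, exhibits a non-compact realizing \af, and observes that in any compact realization the argument $e$ would be unattacked and hence in every extension. Your component/parity route is a legitimate alternative for the non-realizability half, and is arguably cleaner once Proposition~\ref{prop:not_odd} is in hand; but you still need a valid membership witness for $\Sigma_\pref$. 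The easiest repair is to invoke the known inclusion $\Sigma_\stb\setminus\{\emptyset\}\subset\Sigma_\pref$ to conclude $\Ss\in\Sigma_\pref$ without ever claiming $\pref(F_1)=\Ss$.
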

\begin{longproof}
$\Sigma^c_\naive = \Sigma_\naive$
follows directly from the facts
that $\naive(F^\cf_\Ss) = \Ss^+$
(cf.\ Lemma~\ref{lemma:realizability})
and $F^\cf_\Ss \in \CAF_\naive$.

\noindent
$\stb,\stage$: Consider the extension-set
$
\Ss=\{
\{a,b,c\},$
$\{a,b,c'\},$
$\{a,b',c\},$
$\{a,b',c'\},$
$\{a',b,c\},$
$\{a',b,c'\},$
$\{a',b',c\}
\}  
$
from the example in the introduction.
It is easy to verify that $\Ss \subseteq \Ss^+$,
thus $\Ss\in\Sigma_\stb$ and $\Ss\in\Sigma_\stage$.
The \af\ realizing $\Ss$ under $\stb$ and $\stage$ is $F_1$ from the introduction.
We now show that there is no  {\sc af} $F=(\Args_\Ss,R)$
such that 
$\stb(F)=\Ss$ or $\stage(F)=\Ss$.
First, given that the sets in $\Ss$ must be conflict-free the only possible attacks in $R$ are 
$(a,a'),$ $(a',a),$ $(b,b'),$ $(b',b),$ $(c,c'),$ $(c',c)$. 
We next argue that all of them must be in $R$.
First consider the case of $\stb$. 
As $\{a,b,c\}\in\stb(F)$  it attacks $a'$ and the only chance to do so is $(a,a') \in R$ and
similar as $\{a',b,c\}\in\stb(F)$ it attacks $a$ and the only chance to do so is $(a',a) \in R$.
By symmetry we obtain $\{(b,b'),(b',b),(c,c'),(c',c)\} \subseteq R$.
Now let us consider the case of $\stage$.
As $\{a,b,c\}\in\stage(F)\subseteq \naive(F)$ either  $(a,a') \in R$ or  $(a',a) \in R$.
Consider  $(a,a') \not\in R$ then $\{a',b,c\}_F^+ \supset \{a,b,c\}_F^+$, contradicting that 
$\{a,b,c\}$ is a stage extension. The same holds for pairs $(b,b')$ and $(c,c')$; thus 
for both cases we obtain
$R=\{(a,a'),$ $(a',a),$ $(b,b'),$ $(b',b),$ $(c,c'),$ $(c',c)\}$.
However, for the resulting framework $F=(A,R)$,
we have that $\{a',b',c'\} \in \stb(F) = \stage(F)$, but
$\{a',b',c'\} \not\in \Ss$.
Hence we know that $\Ss \notin \Sigma^c_\stb$.

$\prf,\semi$:
Let $\sigma \in \{\pref,\semi\}$
and consider
$\Ss=
\{\{a,b\},$
$\{a,c,e\},$
$\{b,d,e\}\}$.  
\ifnmr{
The figure below shows an \af\ (with additional arguments)
realizing $\Ss$ under $\pref$ and $\semi$.
Hence $\Ss\in\Sigma_\sigma$ holds.
}
\else{
$\Ss\in\Sigma_\sigma$ holds
since $\forall A,B \in \Ss \;(A\neq B)\; \forall C \in \Ss^+ : (A \cup B) \not\subseteq C$.
Indeed,
the figure below shows
an {\sc af} 
(with additional arguments)
realizing $\Ss$ as its semi-stable, and respectively,
preferred extensions. 
}
\fi

\begin{center}
\vspace{-5pt}
\begin{tikzpicture}[scale=1,>=stealth]
		\path 	node[arg](a'){$a'$}
			++(0,-1) node[arg](b'){$b'$}
			++(1,1) node[arg](a){$a$}
			++(0,-1) node[arg](b){$b$}
			++(1,0) node[arg](c){$c$}
			++(0,1) node[arg](d){$d$}
			++(1,0) node[arg](e){$e$}
			++(0,-1) node[arg](f){$f$}
			;

		\path [left,<->, thick]
			(a') edge (a)
			(b') edge (b)
			(a) edge (d)
			(b) edge (c)
			(c) edge (d)
			(d) edge (c)
			[->]
			(c) edge (f)
			(d) edge (f)
			(f) edge (e)
			[loop,thick, distance=0.4cm,out=-145, in=145,->]
			(a') edge (a')
			(b') edge (b')
			[loop,thick, distance=0.4cm,out=-35, in=35,->]
			(f) edge (f)
			;
\end{tikzpicture}
\vspace{-5pt}
\end{center}

Now  suppose there exists an \af\ $F=(\Args_\Ss,R)$
such that $\sigma(F)=\Ss$. Since $\{a,c,e\},\{b,d,e\}\in\Ss$, 
it is clear that $R$ must not contain an edge involving $e$. 
But then, $e$ is contained in each $E\in\sigma(F)$.
It follows
that $\sigma(F)\neq \Ss$.
\end{longproof}


For ordinary signatures it holds that
$\Sigma_\naive \subset \Sigma_\stage = (\Sigma_\stb \sm \{\emptyset\}) \subset
\Sigma_\sem = \Sigma_\pref$ \cite{DunneDLW14}.
This picture changes when considering the relationship of c-signatures.


\begin{proposition}
$\Sigma_\pref^c \not\subseteq \Sigma_\stb^c$;
$\Sigma_\pref^c \not\subseteq \Sigma_\stage^c$;
$\Sigma_\pref^c \not\subseteq \Sigma_\semi^c$;
$\Sigma_\naive^c \subset \Sigma_\sigma^c$ for $\sigma \in \{\stb,\stage,\semi\}$;
$\Sigma_\stb^c \subseteq \Sigma_\semi^c$;
$\Sigma_\stb^c \subseteq \Sigma_\stage^c$.
\end{proposition}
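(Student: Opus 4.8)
The six assertions split naturally into two easy inclusions and a batch of separating examples. For $\Sigma_\stb^c \subseteq \Sigma_\semi^c$ and $\Sigma_\stb^c \subseteq \Sigma_\stage^c$ I would show that a given $F \in \CAF_\stb$ already witnesses membership on the right: compactness forces $\stb(F) \neq \emptyset$ (else $\Args_{\stb(F)} = \emptyset \neq A_F$), and once some $E \in \stb(F)$ exists we have $E_F^+ = A_F$, so the largest range attained by a conflict-free (resp.\ admissible) set is $A_F$; but a conflict-free (resp.\ admissible) set of range $A_F$ attacks every other argument and is therefore stable. Hence $\stage(F) = \semi(F) = \stb(F)$, so $\Args_{\stage(F)} = \Args_{\semi(F)} = A_F$, i.e.\ $F \in \CAF_\stage \cap \CAF_\semi$ and it realizes $\stb(F)$ under both semantics.

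For $\Sigma_\naive^c \subseteq \Sigma_\sigma^c$ with $\sigma \in \{\stb,\stage,\semi\}$ I would use Proposition~\ref{prop:cf_naive_signature}(1), $\Sigma_\naive^c = \Sigma_\naive$, so any $\Ss$ here satisfies $\Ss = \Ss^+$. By Lemma~\ref{lemma:realizability} the canonical symmetric {\sc af} $F^\cf_\Ss$ then has $\sigma(F^\cf_\Ss) = \Ss^+ = \Ss$, and since every argument of $\Args_\Ss$ lies in some element of $\Ss^+ = \Ss$ we get $F^\cf_\Ss \in \CAF_\sigma$, hence $\Ss \in \Sigma_\sigma^c$. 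This also records a constraint for the rest: since $\Sigma_\naive^c = \Sigma_\naive$ is contained in every $\Sigma_\theta^c$, any extension-set witnessing a non-inclusion $\Sigma_\pref^c \not\subseteq \Sigma_\theta^c$, and any witness of properness of $\Sigma_\naive^c \subset \Sigma_\theta^c$, must lie outside $\Sigma_\naive$, i.e.\ satisfy $\Ss \subsetneq \Ss^+$.

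For the non-inclusions and properness I would exhibit concrete {\sc af}s. Since $\Sigma_\stb^c \subseteq \Sigma_\stage^c$, a single extension-set in $\Sigma_\pref^c \setminus \Sigma_\stage^c$ settles both $\Sigma_\pref^c \not\subseteq \Sigma_\stage^c$ and $\Sigma_\pref^c \not\subseteq \Sigma_\stb^c$, and a (possibly different) one in $\Sigma_\pref^c \setminus \Sigma_\semi^c$ settles the last. The $\pref$-compact side I would build from odd-cycle / $K_3$-group gadgets in the style of Figure~\ref{fig:caf_relations} (the {\sc af}s of Proposition~\ref{prop:sup_naive_stage_stb}(3), which are $\pref$- and $\semi$-compact but not $\stage$-compact, and their stage-side analogue, being the natural candidates). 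To block compact realizability on the target side I would invoke Section~\ref{sec:numbers}: choosing $\Ss$ with $\Ss \subsetneq \Ss^+$ puts it outside $\Sigma_\naive$, and one arranges that the forced component structure $\KC(\Ss)$ (the classes of $\noPairsCl{\Ss}$, Proposition~\ref{prop:component-structure}) makes $\Ss$ violate the parity criterion (Proposition~\ref{prop:not_odd}) or the component-product criterion (Proposition~\ref{prop:product_of_components}) for $\theta \in \{\stb,\stage\}$, while for $\theta = \semi$ one exploits that in any {\sc af} on exactly $\Args_\Ss$ realizing $\Ss$ under $\semi$ an argument occurring in no conflict is forced into every extension (the mechanism of the $\pref/\semi$ part of the proof of Proposition~\ref{prop:cf_naive_signature}). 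For properness of $\Sigma_\naive^c \subset \Sigma_\stb^c$ — whence, via the first paragraph, also $\subset \Sigma_\stage^c$ and $\subset \Sigma_\semi^c$ — I need a $\stb$-compact {\sc af} $F$ with $\stb(F) \subsetneq (\stb(F))^+$: the $F_1$-pattern of the introduction realized by a larger gadget in which the argument playing the role of $x$ does occur in a stable extension (so compactness holds) without that extension filling the gap between $\stb(F)$ and $(\stb(F))^+$, keeping all non-trivial components of size $\geq 3$ to dodge Proposition~\ref{prop:not_odd}.

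The inclusions and the positive realizations (reading extensions off the gadgets) are routine; the real work is in the \emph{negative} realizability proofs, and this is where I expect the main obstacle. As in the proof of Proposition~\ref{prop:cf_naive_signature}, for a fixed target $\Ss$ one first determines $\KC(\Ss)$ and the non-edges forced by $\Pairs_\Ss$, then case-analyses which of the remaining attacks may occur, using the signature characterizations of Proposition~\ref{prop:signatures}, the cross-product Lemma~\ref{lemma:cross-product}, and the counting bounds of Theorems~\ref{the:maxcon} and~\ref{the:max2} to derive a contradiction. A point to watch throughout is that the separating extension-sets cannot be too small: on small {\sc af}s one typically gets $\Ss = \Ss^+$, and such sets — by the observation in the second paragraph — can never separate the c-signatures.
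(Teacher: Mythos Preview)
Your approach to the inclusions $\Sigma_\stb^c \subseteq \Sigma_\semi^c$, $\Sigma_\stb^c \subseteq \Sigma_\stage^c$ and the $\subseteq$-part of $\Sigma_\naive^c \subseteq \Sigma_\sigma^c$ is fine and matches the paper. The problem is with the non-inclusions $\Sigma_\pref^c \not\subseteq \Sigma_\stb^c$ and $\Sigma_\pref^c \not\subseteq \Sigma_\stage^c$.

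Your plan there is to exhibit some $\Ss \in \Sigma_\pref^c$ and then rule out compact $\stb$/$\stage$-realizability via the counting and component-structure criteria of Section~\ref{sec:numbers} (Propositions~\ref{prop:not_odd} and~\ref{prop:product_of_components}, the bounds from Theorems~\ref{the:maxcon} and~\ref{the:max2}). This cannot work. All of those results are stated for the placeholder semantics $\sigma$ ranging over \emph{all} of $\naive,\stb,\stage,\semi,\pref$; they depend only on $|\Ss|$ and $\KC(\Ss)$, which are purely combinatorial invariants of $\Ss$. Thus any extension-set that violates them is out of \emph{every} c-signature, in particular out of $\Sigma_\pref^c$. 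So the moment your criterion excludes $\Ss$ from $\Sigma_\stb^c$ or $\Sigma_\stage^c$, it simultaneously excludes it from $\Sigma_\pref^c$, and you have no witness. The same remark applies to your suggestion of using the {\sc af}s from Proposition~\ref{prop:sup_naive_stage_stb}(3): those {\sc af}s are not $\stage$-compact, but that says nothing about whether their preferred extension-sets can be realized by \emph{some other} $\stage$-compact {\sc af}.

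The paper sidesteps all of this by exploiting the ordinary signature characterizations of Proposition~\ref{prop:signatures} directly: it exhibits a concrete $\pref$-compact {\sc af} whose extension-set $\Ss$ satisfies $\Ss \not\subseteq \Ss^+$ (there $\{a,b\}\in\Ss$ but $\{a,b,s_i\}\in\Ss^\cf$). Since $\Sigma_\stb = \{\Ss \mid \Ss \subseteq \Ss^+\}$ and $\Sigma_\stage = \{\Ss\neq\emptyset \mid \Ss \subseteq \Ss^+\}$, such an $\Ss$ is not even in $\Sigma_\stb$ or $\Sigma_\stage$, hence a fortiori not in $\Sigma_\stb^c$ or $\Sigma_\stage^c$. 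This is the idea you are missing: the separation between $\pref$ and $\stb/\stage$ already lives at the level of the \emph{full} signatures, so no delicate compactness argument on the target side is needed at all. For $\Sigma_\pref^c \not\subseteq \Sigma_\semi^c$ this shortcut is unavailable (since $\Sigma_\pref = \Sigma_\semi$), and the paper indeed does a bespoke case analysis on a specific $\Tt$, tracking which attacks are forced by admissibility and range-maximality until a contradiction with $\{a,b\}\in\semi(F)$ appears; your sketch via ``an argument with no conflict is in every extension'' does not apply, because any such $\Tt$ with $\Tt \not\subseteq \Tt^+$ would again fall out of $\Sigma_\semi$ and hence out of $\Sigma_\pref$. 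For the properness of $\Sigma_\naive^c \subset \Sigma_\sigma^c$ the paper simply gives a six-argument {\sc af} whose $\sigma$-extensions form an $\Ss$ with $\Ss \neq \Ss^+$; no large gadget or parity considerations are needed.
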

\begin{proof}
$\Sigma^c_\pref \not\subseteq \Sigma^c_\stb$,
$\Sigma_\pref^c \not\subseteq \Sigma_\stage^c$:
For the extension-set
$\Ss =$ $\{\{a,b\},$
$\{a,x_1,s_1\},$
$\{a,y_1,s_2\},$
$\{a,z_1,s_3\},$
$\{b,x_2,s_1\},$
$\{b,y_2,s_2\},$
$\{b,z_2,s_3\}\}$ it does not hold
that $\Ss \subseteq \Ss^+$
(as $\{a,b,s_1\},$ $\{a,b,s_2\},$ $\{a,b,s_3\} \in \Ss^\cf$, hence $\{a,b\} \notin \Ss^+$),
but there is a compact {\sc af} $F$ realizing $\Ss$
under the preferred semantics,
namely the one depicted in Figure~\ref{fig:prf_comp_real}.
Hence $\Sigma^c_\pref \not\subseteq \Sigma^c_\stb$
and $\Sigma^c_\pref \not\subseteq \Sigma^c_\stage$.

\begin{figure}[t]
\centering
\begin{tikzpicture}[scale=0.87,>=stealth]
	\tikzstyle{arg}=[draw, thick, circle, fill=gray!15,inner sep=2pt]
		\path (2,0)node[arg,inner sep=3pt](b){$b$}
			++(1,0)	node[arg,inner sep=3pt](a){$a$}
			;
		\path (0,-1.2)    node[arg](x1){$x_1$}
			++(1,0) node[arg](x2){$x_2$}
			++(1,0) node[arg](y1){$y_1$}
			++(1,0) node[arg](y2){$y_2$}
                        ++(1,0) node[arg](z1){$z_1$}
                        ++(1,0) node[arg](z2){$z_2$}
			;
		\path 	(0.5,-2.3) node[arg](s3){$s_3$}
			++(2,0) node[arg](s1){$s_1$}
			++(2,0) node[arg](s2){$s_2$}
			;
		\path [<->, thick] 
		        (a) edge (x2)
			(a) edge (y2)
			(a) edge (z2)
			(b) edge (x1)
			(b) edge (y1)
			(b) edge (z1)
			 ;	
		\path [->, thick]
		        (x1) edge (s3)
			(x2) edge (s3)
			(y1) edge (s1)
			(y2) edge (s1)
			(z1) edge (s2)
			(z2) edge (s2)

                        (s3) edge (s1)
                        (s1) edge (s2)
                        [bend left,out=20,in=160](s2) edge (s3)
			 ;
                \draw[<->,rounded corners=4pt, thick]
				(x1) -- (0,-1.8) -- (5,-1.8) -- (z2);
                \draw[->,thick] (1,-1.8) -- (x2);
                \draw[->,thick] (2,-1.8) -- (y1);
                \draw[->,thick] (3,-1.8) -- (y2);
                \draw[->,thick] (4,-1.8) -- (z1);
\end{tikzpicture}
\vspace{-11pt}
\caption{{\sc af} compactly realizing an extension-set $\Ss \not\subseteq \Ss^+$ under $\prf$.}
\label{fig:prf_comp_real}
\vspace{-0.1cm}
\end{figure}

{ 
\iflong
$\Sigma_\pref^c \not\subseteq \Sigma_\semi^c$:
Let $\Tt =$
$(\Ss \cup \{
\{x_1,x_2,s_1\},$
$\{y_1,y_2,s_2\},$
$\{z_1,z_2,s_3\}\})$
and assume there is some $F=(\Args_\Tt,R)$
compactly realizing $\Tt$ under the semi-stable semantics.
Consider the extensions $S = \{a,x_1,s_1\}$ and $T= \{x_1,x_2,s_1\}$.
There must be a conflict between $a$ and $x_2$,
otherwise $(S \cup T) \in \semi(F)$.
If $(a,x_2) \in R$ then,
since $T$ must defend itself and $(s_1,a),(x_1,a) \in \Pairs_\Tt$,
also $(x_2,a)\in R$.
On the other hand if $(x_2,a) \in R$ then,
since $\{a,b\}$ must defend itself and $(b,x_2) \in \Pairs_\Tt$,
also $(a,x_2) \in R$.
Hence, by all symmetric cases we get
$\{(a,\alpha_1),(\alpha_1,a),(b,\alpha_2),(\alpha_2,b)\mid \alpha\in\{x,y,z\}\} \subseteq R$.
Now as $U = \{a,b\} \in \Tt$ and $U$ must not be in conflict with any of
$s_1$, $s_2$, and $s_3$,
each $s_i$ must have an attacker which is not attacked by any $a$, $b$, or $s_i$.
Hence wlog.\ $\{(s_1,s_2),(s_2,s_3),(s_3,s_1)\} \subseteq R$.
Again consider extension $S$ and observe that $s_1$ must be defended from $s_3$,
hence $(x_1,s_3) \in R$.
We know that $S_F^+ \supseteq (\Args_\Tt \sm \{y_1,z_1\})$.
Now we observe that $S$ has to attack both $y_1$ and $z_1$ since
otherwise either $S$ would not defend itself
or $y_1$ (resp.\ $z_1$) would have to be part of $S$.
But this leads us to a contradiction because $S_F^+ = \Args_\Tt$,
but $U_F^+ \subset \Args_\Tt$, meaning that $U$ cannot be a semi-stable extension of $F$.
$\Sigma_\pref^c \not\subseteq \Sigma_\semi^c$ now follows from the fact that
$\pref(F') = \Tt$ for
$F'=(A_F,R_F \sm \{(\alpha_1,\alpha_2),(\alpha_2,\alpha_1) \mid \alpha\in\{x,y,z\}\})$
where $F$ is the \af\ depicted in Figure~\ref{fig:prf_comp_real}.
\else
$\Sigma_\pref^c \not\subseteq \Sigma_\semi^c$:
Let $\Tt =$
$(\Ss \cup \{
\{x_1,x_2,s_1\},$
$\{y_1,y_2,s_2\},$
$\{z_1,z_2,s_3\}\})$
and assume there is some $F=(\Args_\Tt,R)$
compactly realizing $\Tt$ under $\semi$.
Let $S = \{a,x_1,s_1\}$, $T= \{x_1,x_2,s_1\}$, and $U=\{a,b\}$.
There must be a conflict between $a$ and $x_2$,
otherwise $(S \cup T) \in \semi(F)$.
Since each $T$ and $U$ must defend itself,
necessarily both $(x_2,a),(a,x_2) \in R$.
By symmetry we get
$\{(a,\alpha_1),(\alpha_1,a),(b,\alpha_2),(\alpha_2,b)\mid \alpha\in\{x,y,z\}\} \subseteq R$.
Now as $U$ must not be in conflict with any of
$s_1$, $s_2$, and $s_3$,
each $s_i$ must have an attacker which is not attacked by $U$ or $s_i$.
Hence wlog.\ $\{(s_1,s_2),(s_2,s_3),(s_3,s_1)\} \subseteq R$.
Now observe that $S$ must defend $s_1$ from $s_3$,
therefore $(x_1,s_3) \in R$.
Since now $S_F^+ \supseteq (\Args_\Tt \sm \{y_1,z_1\})$,
$S$ has to attack both $y_1$ and $z_1$,
a contradiction to $U \in \semi(F)$,
as $U_F^+ \subset S_F^+$.
$\Sigma_\pref^c \not\subseteq \Sigma_\semi^c$ now follows from the fact that
$\pref(F') = \Tt$ for
$F'=(A_F,R_F \sm \{(\alpha_1,\alpha_2),(\alpha_2,\alpha_1) \mid \alpha\in\{x,y,z\}\})$
where $F$ is the \af\ depicted in Figure~\ref{fig:prf_comp_real}.
\fi
} 

$\Sigma_\naive^c \subset \Sigma_\sigma^c$ for $\sigma \in \{\stb,\stage,\semi\}$:
First of all note that any extensions-set
compactly realizable under $\naive$
is compactly realizable under $\sigma$
(by making the \af\ symmetric).
Now consider the extension-set
$\Ss = \{
\{a_1,b_2,b_3\},
\{a_2,b_1,b_3\},
\{a_3,b_1,b_2\}\}$.
$\Ss \neq \Ss^+$ since $\{b_1,b_2,b_3\} \in\Ss^+$,
hence $\Ss \notin \Sigma^c_\naive$.
{ 
\iflong
$\Sigma^c_\naive \subset \Sigma^c_\sigma$
follows from the fact that the \af\ 
below
compactly realizes $\Ss$ under $\sigma$.

\begin{center}
\vspace{-5pt}
\begin{tikzpicture}[scale=0.75,>=stealth]
	\tikzstyle{arg}=[draw, thick, circle, fill=gray!15,inner sep=2pt]
		\path 	  node[arg](a1){$a_1$}
			++(2,0) node[arg](a2){$a_2$}
			++(2,0) node[arg](a3){$a_3$}
			(0,1.1)node[arg](b1){$b_1$}
			++(2,0) node[arg](b2){$b_2$}
			++(2,0) node[arg](b3){$b_3$}
			;
		\path [<->, thick]
			(a1) edge (a2)
			(a2) edge (a3)
			[bend left,out=20,in=160](a3) edge (a1)
			;
		\path [->, thick] 
			(a1) edge (b1)
			(a2) edge (b2)
			(a3) edge (b3)
			 ;	
\end{tikzpicture}
\vspace{-5pt}
\end{center}
\else
$\Sigma^c_\naive \subset \Sigma^c_\sigma$
follows from the fact that $\Ss$
is compactly realizable under $\sigma$ \cite{DunneDLW14}.
\fi
} 

$\Sigma_\stb^c \subseteq \Sigma_\semi^c$, $\Sigma_\stb^c \subseteq \Sigma_\stage^c$:
Follow from the fact that
$\stage(F) = \semi(F) = \stb(F)$ for any $F \in \CAF_\stb$ \cite{CaminadaCD12}.
\end{proof}



\mysubsection{The Explicit-Conflict Conjecture}
\label{sec:ecc}

So far we only have exactly characterized c-signatures
for 
the naive semantics (Proposition~\ref{prop:cf_naive_signature}).
Deciding membership of an extension-set in the c-signature
of the other semantics is more involved.
In what follows we focus on stable semantics in order to illustrate
difficulties and subtleties in this endeavor.

Although there are, as Proposition~\ref{prop:sup_naive_stage_stb} showed,
more compact {\sc af}s for $\naive$ than for $\stb$,
one can express a greater diversity of outcomes with the stable semantics,
i.e.\ $\Ss = \Ss^+$ does not necessarily hold.
Consider some \af\ $F$ with $\Ss = \stb(F)$.
By Proposition~\ref{prop:signatures} we know that
$\Ss\subseteq\Ss^+$ must hold.
Now we want to compactly realize extension-set $\Ss$ under $\stb$.
If $\Ss = \Ss^+$,
then we can obviously find a compact {\sc af}
realizing $\Ss$ under $\stb$,
since $F^\cf_\Ss$ 
will do so.
On the other hand, if $\Ss \neq \Ss^+$ we have to find
a way to handle the argument-sets in 
%
$\Ss^- = \Ss^+ \sm \Ss$.
In words, each $S \in \Ss^-$ is a $\subseteq$-maximal set with evidence of no conflict,
which is not contained in $\Ss$.




Now consider some {\sc af} $F' \in \CAF_\stb$ having $\Ss \subsetneq \Ss^+$ as its stable extensions.
Further take some $S \in \Ss^-$.
There cannot be a conflict within $S$ in $F'$,
hence we must be able to map $S$ to some argument $t \in (\Args_\Ss \sm S)$
not attacked by $S$ in $F'$.
Still, the collection of these mappings must fulfill certain conditions
in order to preserve a justification for all $S \in \Ss$ to be a stable extension
and not to give rise to other stable extensions.
We make these things more formal.

\lvspace{2mm}
\begin{definition}
Given an extension-set $\Ss$,
an \emph{exclusion-mapping} is the set
\iflong
\[
    \RX_\Ss = \bigcup_{S \in \Ss^-} \{ (s,\ff_\Ss(S)) \mid s \in S \text{ s.t. } (s,\ff_\Ss(S)) \notin \Pairs_\Ss \}
\]
\else
$
    \RX_\Ss = \bigcup_{S \in \Ss^-} \{ (s,\ff_\Ss(S)) \mid s \in S \text{ s.t. } (s,\ff_\Ss(S)) \notin \Pairs_\Ss \}
$
\fi
where \mbox{$\ff_\Ss : \Ss^- \to \Args_\Ss$}
is a function with \mbox{$\ff_\Ss(S)  \in (\Args_\Ss \sm S)$}.\!
\end{definition}

\begin{definition}
\label{def:independent}
A set $\Ss \subseteq 2^\Aa$ is called \emph{independent}
if 
there exists an antisymmetric exclusion-mapping $\RX_\Ss$
such that it holds that
\iflong
\[
    \forall S \in \Ss \forall a \in (\Args_\Ss \sm S) :
    \exists s \in S : (s,a) \notin (\RX_\Ss \cup \Pairs_\Ss).
\]
\else
$
    \forall S \in \Ss\ \forall a \in (\Args_\Ss \sm S) :
    \exists s \in S : (s,a) \notin (\RX_\Ss \cup \Pairs_\Ss).
$
\fi
\end{definition}

The concept of independence suggests that
the more separate the elements of some extension-set $\Ss$ are,
the less critical is $\Ss^-$.
An independent $\Ss$
allows to find the required orientation of attacks
to exclude sets from $\Ss^-$ from the stable extensions
without interferences.

\begin{theorem}\label{thm:mainstable}
For every independent extension-set $\Ss$ with $\Ss \subseteq \Ss^+$
it holds that $\Ss \in \Sigma^c_\stb$.
%
%
\lvspace{-2mm}
\end{theorem}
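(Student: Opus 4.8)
We must exhibit an {\sc af} $F\in\CAF_\stb$ with $\stb(F)=\Ss$. The plan is to take the canonical symmetric framework $F^\cf_\Ss$ introduced above and \emph{orient} a few of its attacks, namely exactly those singled out by an exclusion-mapping witnessing independence. Concretely, fix an antisymmetric exclusion-mapping $\RX_\Ss$ (with underlying function $\ff_\Ss$) satisfying the condition of Definition~\ref{def:independent} and set
\[
  F \;=\; \bigl(\Args_\Ss,\ R^\cf_\Ss \sm \RX_\Ss\bigr)
\]
(assuming, as needed, $\Args_\Ss\neq\emptyset$), so that $F$ arises from $F^\cf_\Ss$ by deleting exactly the attacks in $\RX_\Ss$; note that $\RX_\Ss\subseteq R^\cf_\Ss$, that $F$ has no self-loops (since $\ff_\Ss(S)\notin S$), and that $R_F\cap\Pairs_\Ss=\emptyset$. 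The first observation is $\cf(F)=\Ss^\cf$: deleting attacks can only enlarge $\cf$, so $\cf(F)\supseteq\cf(F^\cf_\Ss)=\Ss^\cf$; conversely, a set $S\notin\Ss^\cf$ contains arguments $a,b$ with $(a,b)\notin\Pairs_\Ss$, i.e.\ with both $(a,b),(b,a)\in R^\cf_\Ss$, and for $S$ to become conflict-free in $F$ both of these would have to belong to $\RX_\Ss$, contradicting antisymmetry. Hence $\naive(F)=\max_\subseteq\cf(F)=\Ss^+$, and since $\stb(F)\subseteq\naive(F)$ we already get $\stb(F)\subseteq\Ss^+$.

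It then remains to decide which elements of $\Ss^+$ are stable in $F$. For $S\in\Ss$: we have $S\in\Ss^\cf$, and the independence condition says precisely that for every $a\in\Args_\Ss\sm S$ there is an $s\in S$ with $(s,a)\notin\Pairs_\Ss\cup\RX_\Ss$, i.e.\ $(s,a)\in R^\cf_\Ss\sm\RX_\Ss=R_F$; thus $S$ attacks its whole complement, so $S\in\stb(F)$. For $S\in\Ss^-=\Ss^+\sm\Ss$: put $a=\ff_\Ss(S)\in\Args_\Ss\sm S$; every $s\in S$ either satisfies $(s,a)\in\Pairs_\Ss$, whence $(s,a)\notin R_F$, or $(s,a)\notin\Pairs_\Ss$, in which case $(s,a)\in\RX_\Ss$ by the very definition of $\RX_\Ss$, whence again $(s,a)\notin R_F$; so $S$ does not attack $a$ and $S\notin\stb(F)$. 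Combining with $\stb(F)\subseteq\Ss^+$ yields $\stb(F)=\Ss$, and since $A_F=\Args_\Ss=\Args_{\stb(F)}$ we conclude $F\in\CAF_\stb$, so $\Ss\in\Sigma^c_\stb$.

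I do not expect any single step to be genuinely difficult; the real work sits in the definitions of $F^\cf_\Ss$, of exclusion-mappings, and of independence, which were set up exactly for this. The step that deserves the most care is the identity $\cf(F)=\Ss^\cf$: it rests on antisymmetry of $\RX_\Ss$, which is precisely what prevents the deletions used to spoil the sets in $\Ss^-$ from creating new conflict-free — and hence potentially new stable — sets. Dually, the independence condition is calibrated so that each $S\in\Ss$ still attacks its complement in $F$; verifying that this condition is used at full strength, and only there, is the crux of the argument.
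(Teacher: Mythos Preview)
Your proposal is correct and follows essentially the same approach as the paper: you build the same framework $F=(\Args_\Ss,R^\cf_\Ss\sm\RX_\Ss)$, use antisymmetry of $\RX_\Ss$ to conclude $\stb(F)\subseteq\Ss^+$, use the definition of $\RX_\Ss$ via $\ff_\Ss$ to rule out each $S\in\Ss^-$, and use the independence condition to keep every $S\in\Ss$ stable. Your explicit verification of $\cf(F)=\Ss^\cf$ is a slightly more detailed version of the paper's one-line remark that ``one direction of each symmetric attack of $F^\cf_\Ss$ is still in $F$'', but the structure of the argument is identical.
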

\begin{proof}
Consider,
given an independent extension-set $\Ss$ and
an antisymmetric ex\-clu\-sion-mapping $\RX_\Ss$ fulfilling the independence-condition
(cf.\ Definition~\ref{def:independent}),
the {\sc af}
$F_\Ss^\stb = (Args_\Ss, R^\stb_\Ss) \text{ with } R^\stb_\Ss = (R^\cf_\Ss \sm \RX_\Ss)$.
We show that $\stb(F_\Ss^\stb) = \Ss$.
First note that $\stb(F^\cf_\Ss) = \Ss^+ \supseteq \Ss$. 
As $\RX_\Ss$ is antisymmetric, one direction of each symmetric attack of $F_\Ss^\cf$ is still in $F_\Ss^\stb$.
Hence $\stb(F^\stb_\Ss) \subseteq \Ss^+$.

\noindent
$\stb(F^\stb_\Ss) \subseteq \Ss$:
Consider some $S \in \stb(F^\stb_\Ss)$
and assume that $S \notin \Ss$,
i.e.\ $S \in \Ss^-$.
Since $\RX_\Ss$ is an exclusion-mapping fulfilling the independence-condition by assumption,
there is an argument $\ff_\Ss(S) \in (\Args_\Ss \sm S)$
such that $\{(s,\ff_\Ss(S)) \mid s \in S, (s,\ff_\Ss(S)) \notin \Pairs_\Ss\} \subseteq \RX_\Ss$.
But then, by construction of $F_\Ss^\stb$,
there is no $a \in S$ such that $(a,\ff_\Ss(S)) \in R_\Ss^\stb$, 
a contradiction to $S \in \stb(F^\stb_\Ss)$.

\noindent
$\stb(F^\stb_\Ss) \supseteq \Ss$:
Consider some $S \in \Ss$
and assume that $S \notin \stb(F^\stb_\Ss)$.
We know that 
$S$ is conflict-free in $F^\stb_\Ss$.
Therefore there must be some $t \in (\Args_\Ss \sm S)$
with $S \not\at_{F^\stb_\Ss} t$.
Hence $\forall s \in S : (s,t) \in (\Pairs_\Ss \cup \RX_\Ss)$,
a contradiction to the assumption that $\Ss$ is independent.
\end{proof}


\begin{corollary}\label{cor:3}
For every 
$\Ss\in\Sigma_\stb$,
with $|\Ss| \leq 3$,
$\Ss \in \Sigma^c_\stb$.
%
%
\end{corollary}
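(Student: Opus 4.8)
The plan is to derive this from Theorem~\ref{thm:mainstable}: since $\Ss\in\Sigma_\stb$ means exactly $\Ss\subseteq\Ss^+$ (Proposition~\ref{prop:signatures}), it is enough to show that every non-empty extension-set $\Ss$ with $\Ss\subseteq\Ss^+$ and $|\Ss|\le 3$ is \emph{independent} in the sense of Definition~\ref{def:independent}. The first thing I would prove is a structural lemma: if $|\Ss|=3$, say $\Ss=\{S_1,S_2,S_3\}$, then $\Ss^-:=\Ss^+\sm\Ss$ is either empty or the singleton $\{C\}$ with $C=(S_1\cap S_2)\cup(S_1\cap S_3)\cup(S_2\cap S_3)$; and for $|\Ss|\le 2$ one simply has $\Ss^-=\emptyset$.

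To establish this I would view $\Pairs_\Ss$ as an undirected graph $G$ on $\Args_\Ss$; then $G$ is the union of the cliques on $S_1,S_2,S_3$, and $\Ss^\cf$ is the set of cliques of $G$ while $\Ss^+$ is the set of its maximal cliques. Take a maximal clique $M$. If $M\subseteq S_i$ for some $i$, maximality forces $M=S_i$. Otherwise, any vertex of $M$ lying in only one $S_i$ would drag all of $M$ into $S_i$ (its $G$-neighbours all lie in $S_i$), so every vertex of $M$ lies in at least two of the three sets, i.e.\ $M\subseteq C$. Since any two vertices each lying in two of three sets must share one of them, $C$ is itself a clique of $G$, so maximality gives $M=C$. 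Hence $\Ss^+\subseteq\{S_1,S_2,S_3,C\}$, which together with $\Ss\subseteq\Ss^+$ gives the lemma; the case of one or two cliques is the same, only easier.

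Then I would construct the required antisymmetric exclusion-mapping. If $\Ss^-=\emptyset$, take $\RX_\Ss=\emptyset$; antisymmetry is free, and the independence condition holds because each $S\in\Ss$, being $\subseteq$-maximal in $\Ss^\cf$, has for every $a\notin S$ some $s\in S$ with $(s,a)\notin\Pairs_\Ss\cup\RX_\Ss$ (as $\RX_\Ss=\emptyset$). If $\Ss^-=\{C\}$ (so $|\Ss|=3$), I would first check that each private set $U_i:=S_i\sm(S_j\cup S_k)$ is non-empty---otherwise $S_i\subseteq C\in\Ss^\cf$ would force $S_i=C\notin\Ss$, a contradiction---then pick any $t\in\Args_\Ss\sm C=U_1\cup U_2\cup U_3$, say $t\in U_1$, set $\ff_\Ss(C)=t$, and let $\RX_\Ss=\{(s,t)\mid s\in C,\ (s,t)\notin\Pairs_\Ss\}$. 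Antisymmetry is immediate, since every pair of $\RX_\Ss$ ends in $t$ but none starts with $t$. For the independence condition, fix $S_i$ and $a\in\Args_\Ss\sm S_i$: if $a\ne t$, maximality of $S_i$ supplies $s\in S_i$ with $(s,a)\notin\Pairs_\Ss$, and $(s,a)\notin\RX_\Ss$ because $a\ne t$; if $a=t$ then $t\notin S_i$ forces $i\ne 1$, and any $s\in U_i$ works, as $s$ sits in no $S_m$ but $S_i$ (so $(s,t)\notin\Pairs_\Ss$) and $s\notin C$ (so $(s,t)\notin\RX_\Ss$). At that point Theorem~\ref{thm:mainstable} applies and yields $\Ss\in\Sigma^c_\stb$.

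The hard part is the structural lemma---seeing that three cliques spawn at most one extra maximal clique and identifying it as the union of the pairwise intersections; after that, noticing that the private sets $U_i$ cannot be empty when this extra clique exists makes the mapping almost write itself. One caveat worth a sentence: $\emptyset\in\Sigma_\stb$ but $\emptyset\notin\Sigma^c_\stb$ (an {\sc af} has a non-empty argument set), and Theorem~\ref{thm:mainstable} does not cover it since $\emptyset^-=\{\emptyset\}$ admits no exclusion-mapping; so the statement should be read for non-empty $\Ss$.
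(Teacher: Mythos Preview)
Your proof is correct and follows essentially the same route as the paper's: both establish $|\Ss^-|\le 1$, pick $t$ in some $T\in\Ss$ with $t\notin S$ (the unique element of $\Ss^-$), and use the resulting single-target exclusion-mapping; you simply make explicit what the paper leaves as ``easy to see'' (identifying the extra maximal clique as the union of pairwise intersections and checking the independence condition via the non-emptiness of the private parts $U_i$). Your caveat about $\Ss=\emptyset$ is well taken---the paper's statement literally covers it but neither proof does, since no exclusion-mapping exists when $\Args_\Ss=\emptyset$ and $\Ss^-=\{\emptyset\}$.
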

\begin{longproof}
It is easy to see that for an extension-set $\Ss$
with $|\Ss| \leq 3$
it holds that $|\Ss^-| \leq 1$.
If $\Ss^- = \emptyset$ we are done;
if $\Ss^- = \{S\}$ observe
that by $\Ss \subseteq \Ss^+$
for each $T \in \Ss$ there is some $t \in T$ with $t \notin S$.
Hence choosing arbitrary $T \in \Ss$ and $t \in T$ with $t \notin S$
yields the antisymmetric exclusion-mapping
$\RX_\Ss = \{(s,t) \mid s \in S \text{ s.t. } (s,t) \notin \Pairs_\Ss\}$
which fulfills the independence-condition from Definition~\ref{def:independent}.
\end{longproof}


Theorem~\ref{thm:mainstable} gives a sufficient condition
for an extension-set to be contained in $\Sigma^c_\stb$.
Section~4
provided
necessary conditions with respect to number of extensions.
As these conditions do not match,
we have not arrived at an exact 
characterization of the c-signature for stable 
semantics yet.
\ifnmr{
In what follows, we identify the missing
step which we have to leave open but, as we will see, results
in an interesting problem of its own. 
}\else{
In what follows, we identify the missing
step which has to be left open but, as we will see, results
in an interesting problem of its own.
}\fi
Let us first define a further class of 
frameworks.

\begin{definition}
We call an \af\ $F=(A,R)$ conflict-explicit under semantics $\sigma$
iff for each $a,b\in A$ such that $(a,b) \notin \Pairs_{\sigma(F)}$, we find 
$(a,b)\in R$ or $(b,a)\in R$ (or both).
\end{definition}


{ 
\iflong
In words, a framework is conflict-explicit under $\sigma$ 
if any two arguments of the framework which do not occur 
together in any $\sigma$-extension are explicitly conflicting, i.e.\
they are linked via the attack relation. 
\fi
} 

As a simple example consider the \af\ 
$F=(\{a,b,c,d\},$ $\{(a,b),$ $(b,a),$ $(a,c),$ $(b,d)\})$
which has
$\Ss=\stb(F)=\{\{a,d\},\{b,c\}\}$. Note that $(c,d)\notin\Pairs_\Ss$
but $(c,d)\notin R$ as well as $(d,c)\notin R$. Thus $F$
is not conflict-explicit under stable semantics. However,
if we add attacks $(c,d)$ or $(d,c)$ we obtain an equivalent
(under stable semantics) conflict-explicit (under stable 
semantics) \af.


\begin{theorem}
For each compact \af\ $F$ which is conflict-explicit under $\stb$,
it holds that
$\stb(F)$ is independent.
\end{theorem}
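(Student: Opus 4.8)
The plan is to construct explicitly an antisymmetric exclusion-mapping witnessing independence of $\Ss = \stb(F)$, reading off the function $\ff_\Ss$ from the attack structure of $F$ itself. Write $A = A_F$; compactness gives $A = \Args_\Ss$, and by Proposition~\ref{prop:signatures} we have $\Ss \subseteq \Ss^+$, so $\Ss^- = \Ss^+ \sm \Ss$ is defined. First I would observe that every $S \in \Ss^-$ is conflict-free in $F$: since $S \in \Ss^+ \subseteq \Ss^\cf$, all pairs from $S$ lie in $\Pairs_\Ss$, and no pair that occurs together in some stable extension of $F$ can be joined by an attack. As $S$ is conflict-free but $S \notin \stb(F)$, it must fail to attack some argument; I would pick any $t_S \in A \sm S$ with $S \not\at_F t_S$ and set $\ff_\Ss(S) = t_S$, obtaining an exclusion-mapping $\RX_\Ss$.

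The crucial step — and the only place the conflict-explicit hypothesis is used — is verifying that $\RX_\Ss$ is antisymmetric. Suppose $(a,b),(b,a) \in \RX_\Ss$. Unfolding the definition yields $S,S' \in \Ss^-$ with $a \in S$, $\ff_\Ss(S) = b$, $b \in S'$, $\ff_\Ss(S') = a$, and $(a,b) \notin \Pairs_\Ss$. From $a \in S$ and $S \not\at_F b$ I get $(a,b) \notin R$, and symmetrically from $b \in S'$ and $S' \not\at_F a$ I get $(b,a) \notin R$. But $(a,b) \notin \Pairs_{\stb(F)}$ together with conflict-explicitness of $F$ forces $(a,b) \in R$ or $(b,a) \in R$ — a contradiction. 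Hence $\RX_\Ss$ is antisymmetric.

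Finally I would check the independence condition. Let $S \in \Ss$ and $a \in A \sm S$. Stability of $S$ gives some $s \in S$ with $s \at_F a$, i.e.\ $(s,a) \in R$. Such $(s,a)$ is not in $\Pairs_\Ss$ (a conflicting pair never co-occurs in a conflict-free extension), and it is not in $\RX_\Ss$ either: otherwise some $S' \in \Ss^-$ would satisfy $s \in S'$ and $\ff_\Ss(S') = a$, whence $S' \at_F a$ via $s$, contradicting the choice $\ff_\Ss(S') = t_{S'}$ with $S' \not\at_F t_{S'}$. Thus $(s,a) \notin \RX_\Ss \cup \Pairs_\Ss$, which is exactly what Definition~\ref{def:independent} demands, so $\Ss$ is independent.

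I expect the antisymmetry argument to be the delicate part: it is the sole consumer of the conflict-explicit hypothesis, and one must be careful that the two ``$\notin R$'' derivations are both valid and that $\Pairs_\Ss$ is used symmetrically (so $(b,a) \notin \Pairs_\Ss$ follows from $(a,b) \notin \Pairs_\Ss$). Everything else is a routine consequence of $F$ realizing $\Ss$ and of stable extensions being conflict-free, so no machinery beyond Proposition~\ref{prop:signatures} should be needed.
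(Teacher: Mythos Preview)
Your proof is correct, but it is organized dually to the paper's. You build the exclusion-mapping directly from a choice function $\ff_\Ss(S)=t_S$ (so it is an exclusion-mapping by construction) and then spend the conflict-explicit hypothesis on antisymmetry. The paper instead defines $\RX_\Ee = \{(b,a)\notin R \mid (a,b)\in R\}$ globally from the attack relation, so antisymmetry is immediate from the definition; it then uses conflict-explicitness to show that for each $E\in\Ee^-$ the required pairs $(e,t)$ with $(e,t)\notin\Pairs_\Ee$ actually land in $\RX_\Ee$ (because $(t,e)\in R$ must hold). The independence clause is argued the same way in both: an attack $(s,a)\in R$ witnessing stability lies neither in $\Pairs_\Ss$ nor in $\RX$. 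Your route has the minor advantage that $\RX_\Ss$ is literally an exclusion-mapping in the sense of the definition, whereas the paper's $\RX_\Ee$ is in general a proper superset of one (though this is harmless, since both antisymmetry and the independence condition are downward-closed in $\RX$). So your remark that antisymmetry is ``the sole consumer of the conflict-explicit hypothesis'' is true for your proof but not for the paper's — there the hypothesis is consumed in the exclusion-mapping step instead.
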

\begin{proof}
%
Consider some $F \in \CAF_\stb$
which is conflict-explicit under $\stb$
and let $\Ee = \stb(F)$.
Observe that $\Ee \subseteq \Ee^+$.
{ 
\iflong
We have to show that there exists an antisymmetric exclusion-mapping $\RX_\Ss$
fulfilling the independence-condition from Definition~\ref{def:independent}.
Let
\else
Further let
\fi
} 
$\RX_\Ee = \{(b,a) \notin R \mid (a,b) {\in} R\}$ and
consider the {\sc af} $F^s = (A_F, R_F \cup \RX_\Ee)$ being the symmetric version of $F$.
Now let $E \in \Ee^-$.
Note that $E \in \cf(F) = \cf(F^s)$.
But as $E \notin \Ee$ there must be some $t \in (A \sm E)$
such that for all $e \in E$,
$(e,t) \notin R_F$.
For all such $e \in E$ with $(e,t) \notin \Pairs_\Ee$
it holds, as $F$ is conflict-explicit under $\stb$,
that $(t,e) \in R_F$, hence $(e,t) \in \RX_\Ee$,
showing that $\RX_\Ee$ is an exclusion-mapping.
%

It remains to show that $\RX_\Ee$ is antisymmetric and
$\forall E \in \Ee \forall a \in \Args_\Ss \sm E :
\exists e \in E : (e,a) \notin (\RX_\Ee \cup \Pairs_\Ee)$ holds.
As some pair $(b,a)$ is in $\RX_\Ee$ iff $(a,b) \in R$ and $(b,a) \notin R$,
$\RX_\Ee$ is antisymmetric.
Finally consider some $E \in \Ee$ and $a \in \Args_\Ss \sm E$
and assume that $\forall e \in E : (e,a) \in \RX_\Ee \Or (e,a) \in \Pairs_\Ee$.
This means that $e \not\at_F a$,
a contradiction to
$E$ being a stable extension of $F$.
\end{proof}

Since our characterizations
of signatures completely abstract away from the actual structure of 
\afs\ but only focus on the set of extensions, 
our problem would be solved if the following was true.


\medskip
\noindent
{\bf EC-Conjecture.}\;
For each \af\ $F=(A,R)$
there exists an \af\ $F'=(A,R')$ 
which is conflict-explicit under the stable semantics
such that $\stb(F) = \stb(F')$.
\medskip

{ 
\iflong
Note that the EC-conjecture implies that 
for each compact \af, there exists a stable-equivalent 
conflict-explicit (under stable) \af.
\fi
} 

\begin{theorem}
\label{thm:stb_c-signature}
Under the assumption that the EC-conjecture holds,
\iflong
\[
  \Sigma^c_\stb = \{ \Ss \mid \Ss \subseteq \Ss^+ \And \Ss \text{ is independent}\}.
\]
\else
$
  \Sigma^c_\stb = \{ \Ss \mid \Ss \subseteq \Ss^+ \And \Ss \text{ is independent}\}.
$
\fi
\end{theorem}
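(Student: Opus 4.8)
The plan is to establish the two set inclusions separately, leaning almost entirely on results already proved, so that the only role of the EC-conjecture is isolated to a single step.

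First, the inclusion $\{\Ss \mid \Ss\subseteq\Ss^+ \text{ and } \Ss \text{ independent}\} \subseteq \Sigma^c_\stb$ needs no new argument and no appeal to the EC-conjecture whatsoever: it is precisely the content of Theorem~\ref{thm:mainstable}, which already constructs a compact realizing {\sc af} $F_\Ss^\stb$ from an antisymmetric exclusion-mapping witnessing independence. So I would simply cite it.

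For the converse, $\Sigma^c_\stb \subseteq \{\Ss \mid \Ss\subseteq\Ss^+ \text{ and } \Ss \text{ independent}\}$, I would fix an arbitrary $\Ss\in\Sigma^c_\stb$, i.e.\ a compact {\sc af} $F=(A,R)$ with $\stb(F)=\Ss$. That $\Ss\subseteq\Ss^+$ is immediate from the trivial inclusion $\Sigma^c_\stb\subseteq\Sigma_\stb$ together with Proposition~\ref{prop:signatures}. To obtain independence of $\Ss$, I would invoke the EC-conjecture on $F$ to get an {\sc af} $F'=(A,R')$ on the \emph{same} argument set $A$ that is conflict-explicit under $\stb$ with $\stb(F')=\stb(F)=\Ss$. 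Since $F'$ and $F$ share both $A$ and their stable extensions, and $F$ is compact, we get $\Args_{\stb(F')}=\Args_{\stb(F)}=A=A_{F'}$, so $F'\in\CAF_\stb$ as well. The immediately preceding theorem — every compact {\sc af} that is conflict-explicit under $\stb$ has an independent set of stable extensions — then yields that $\Ss=\stb(F')$ is independent, completing the inclusion.

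The only point that needs a moment's care is the observation that the witness $F'$ supplied by the EC-conjecture inherits compactness from $F$; but this is immediate, precisely because the conjecture is phrased so as to preserve both the argument set and the set of stable extensions. Apart from that, the whole proof is bookkeeping over Theorem~\ref{thm:mainstable}, Proposition~\ref{prop:signatures}, and the conflict-explicit characterization, so I do not expect any genuine obstacle inside this argument — the entire difficulty of the statement is deliberately concentrated in, and conditional on, the EC-conjecture itself.
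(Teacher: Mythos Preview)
Your proposal is correct and follows exactly the intended route: the paper does not spell out a proof of this theorem, but leaves it as an immediate consequence of Theorem~\ref{thm:mainstable} (for the $\supseteq$ direction), Proposition~\ref{prop:signatures}, the preceding theorem on conflict-explicit compact {\sc af}s, and the EC-conjecture (for the $\subseteq$ direction). Your observation that the {\sc af} $F'$ produced by the EC-conjecture inherits compactness from $F$ is precisely the remark the paper makes just before stating the theorem (``the EC-conjecture implies that for each compact {\sc af}, there exists a stable-equivalent conflict-explicit (under stable) {\sc af}'').
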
 

Unfortunately, the question whether an equivalent conflict-explicit
\af\ exists is not as simple as the example above suggests.
We provide a
few examples showing that proving the conjecture 
includes some subtle issues. 
Our first example shows that for adding missing attacks, 
the orientation of the attack needs to be carefully chosen.

\ifnmr{
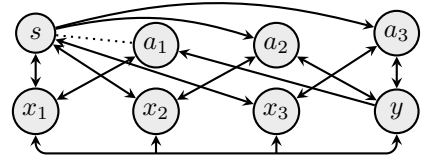
\begin{figure}[t]
\centering
\begin{tikzpicture}[scale=0.8,>=stealth]
	\tikzstyle{arg}=[draw, thick, circle, fill=gray!15,inner sep=2pt]
		\path (0,0)node[arg,inner sep=3pt](s){$s$}
			++(2,-0.2)node[arg](a1){$a_1$}
                        ++(2,0) node[arg](a2){$a_2$}
                        ++(2,0.2) node[arg](a3){$a_3$}
			;
		\path (0,-1.3)    node[arg](x1){$x_1$}
			++(2,0) node[arg](x2){$x_2$}
			++(2,0) node[arg](x3){$x_3$}
			++(2,0) node[arg,,inner sep=3pt](y){$y$}
                        ;
		\path [<->, thick] 
		        (x1) edge (a1)
			(x1) edge (s)
			(x2) edge (a2)
			(x2) edge (s)
			(x3) edge (a3)
			(x3) edge (s)
                        (y) edge (a2)
                        (y) edge (a3)
			 ;	
		\path [->, thick]
		        (y) edge (a1)
                        [bend left,out=15,in=165](s) edge (a2)
                        [bend left,out=15,in=165](s) edge (a3)
			 ;
                \draw[<->,rounded corners=4pt, thick]
				(x1) -- (0,-2.0) -- (6,-2.0) -- (y);
                \draw[->,thick] (2,-2) -- (x2);
                \draw[->,thick] (4,-2) -- (x3);
                \path[dotted,thick] (s) edge (a1); 
\end{tikzpicture}
\vspace{-6pt}
\caption{Orientation of non-explicit conflicts matters.}
\label{fig:orientation}
\vspace{-0.1cm}
\end{figure}
}
\fi

\begin{example}
\label{ex:orientation}
\ifnmr{
Consider the \af\ $F$ in Figure~\ref{fig:orientation}
and observe $\stb(F)=$
$\{\{a_1,a_2,x_3\},$
$\{a_1,a_3,x_2\},$
$\{a_2,a_3,x_1\},$
$\{s,y\}\}$.
}
\else{
Consider \af\ $F$
below
and observe $\stb(F)=$
$\{\{a_1,a_2,x_3\},$
$\{a_1,a_3,x_2\},$
$\{a_2,a_3,x_1\},$
$\{s,y\}\}$.

\begin{center}
\vspace{-5pt}
\begin{tikzpicture}[scale=0.8,>=stealth]
	\tikzstyle{arg}=[draw, thick, circle, fill=gray!15,inner sep=2pt]
		\path (0,0)node[arg,inner sep=3pt](s){$s$}
			++(2,-0.2)node[arg](a1){$a_1$}
                        ++(2,0) node[arg](a2){$a_2$}
                        ++(2,0.2) node[arg](a3){$a_3$}
			;
		\path (0,-1.3)    node[arg](x1){$x_1$}
			++(2,0) node[arg](x2){$x_2$}
			++(2,0) node[arg](x3){$x_3$}
			++(2,0) node[arg,,inner sep=3pt](y){$y$}
                        ;
		\path [<->, thick] 
		        (x1) edge (a1)
			(x1) edge (s)
			(x2) edge (a2)
			(x2) edge (s)
			(x3) edge (a3)
			(x3) edge (s)
                        (y) edge (a2)
                        (y) edge (a3)
			 ;	
		\path [->, thick]
		        (y) edge (a1)
                        [bend left,out=15,in=165](s) edge (a2)
                        [bend left,out=15,in=165](s) edge (a3)
			 ;
                \draw[<->,rounded corners=4pt, thick]
				(x1) -- (0,-2.0) -- (6,-2.0) -- (y);
                \draw[->,thick] (2,-2) -- (x2);
                \draw[->,thick] (4,-2) -- (x3);
                \path[dotted,thick] (s) edge (a1); 
\end{tikzpicture}
\vspace{-5pt}
\end{center}
}
\fi

\noindent
$\Pairs_{\stb(F)}$ yields one pair of arguments $a_1$ and $s$ whose conflict is not explicit by $F$,
i.e.\ $(a_1,s) \notin \Pairs_{\stb(F)}$, but $(a_1,s),(s,a_1) \notin R_F$.
Now adding the attack $a_1 \at_F s$ to $F$ would reveal the additional stable extension 
$\{a_1,a_2,a_3\} \in (\stb(F))^+$.
On the other hand by adding the attack $s \at_F a_1$ we get the conflict-explicit \af\ $F'$
with $\stb(F) = \stb(F')$.

Finally recall the role of the arguments $x_1$, $x_2$, and $x_3$.
Each of these arguments enforces exactly one extension (being itself part of it)
by attacking (and being attacked by) all arguments not in this extension.
We will make use of this construction-concept in Example~\ref{ex:more_than_orientation}.
\end{example}


Even worse, it is sometimes necessary to not only add the missing
conflicts but also change the orientation of existing attacks
such that the missing attack ``fits well''.

\begin{example}
\label{ex:more_than_orientation}
\lvspace{-1mm}
Let $X = \{
x_{s,t,i},
x_{s,u,i},
x_{t,u,i} \mid 1 \leq i \leq 3 \} \cup$
$\{ 
x_{a,1,2},
x_{a,1,3},
x_{a,2,3} \}$  and
$\Ss=
\{\{s_i,t_i,x_{s,t,i}\},$ $\{s_i,u_i,x_{s,u,i}\},$ $\{t_i,u_i,x_{t,u,i}\} \mid $ $i \in \{1,2,3\}\}
\cup$
$\{\{a_1,a_2,x_{a,1,2}\},$ $\{a_1,a_3,x_{a,1,3}\},$ $\{a_2,a_3,x_{a,2,3}\}\}$.
Consider the \af\ 
$F = (A' \cup X, R' \cup \bigcup_{x\in X}\{(x,b),(b,x) \mid b \in (A' \sm \Ss_x) \}\cup\{(x,x')\mid x,x'\in X, x\neq x'\})$,
where 
the essential part $(A',R')$
is depicted in
Figure~\ref{fig:more_than_orientation} and
$\Ss_x$ is the unique set $X\in\Ss$ with $x \in X$.
We have $\stb(F) =  \Ss$.
Observe that $F$ contains three non-explicit conflicts under the stable semantics,
namely the argument-pairs $(a_1,s_1)$, $(a_2,s_2)$, and $(a_3,s_3)$.
Adding any of $(s_i,a_i)$ to $R_F$ would turn $\{s_i,t_i,u_i\}$
into a stable extension;
adding all $(a_i,s_i)$ to $R_F$ would yield $\{a_1,a_2,a_3\}$ as additional stable extension.
Hence there is no way of making the conflicts explicit without changing other parts of $F$
and still getting a stable-equivalent \af.
Still, we can realize $\stb(F)$ by a compact and conflict-explicit \af,
for example by
$G = (A_F, (R_F \cup \{(a_1,s_1),(a_2,s_2),(a_3,s_3)\}) \sm \{(a_1,x_{a,2,3}),(a_2,x_{a,1,3}),(a_3,x_{a,1,2})\})$.
\end{example}

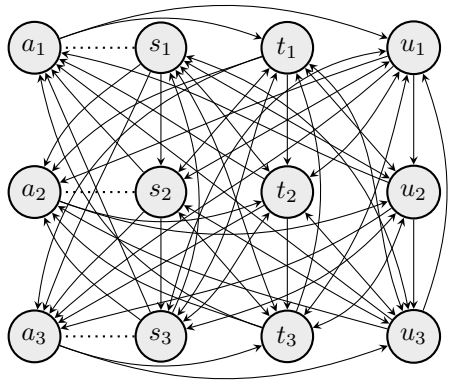
\begin{figure}[t]
\centering
\begin{tikzpicture}[scale=0.6,>=stealth]
	\tikzstyle{arg}=[draw, thick, circle, fill=gray!15,inner sep=3pt]
		\path 	(0,0)node[arg](a1){$a_1$}
                        ++(2.8,0) node[arg](s1){$s_1$}
                        ++(2.8,0) node[arg](t1){$t_1$}
                        ++(2.8,0) node[arg](u1){$u_1$}
			;
		\path (0,-3.2)    node[arg](a2){$a_2$}
			++(2.8,0) node[arg](s2){$s_2$}
			++(2.8,0) node[arg](t2){$t_2$}
			++(2.8,0) node[arg](u2){$u_2$}
                        ;
		\path (0,-6.4)    node[arg](a3){$a_3$}
			++(2.8,0) node[arg](s3){$s_3$}
			++(2.8,0) node[arg](t3){$t_3$}
			++(2.8,0) node[arg](u3){$u_3$}
                        ;
		\path [<->]

			(s1) edge (u2)
			
                        (t1) edge (s2)
                        
                        (t1) edge (u2)
                        
                        (u1) edge (s2)

			(s2) edge (t3)

                        (t2) edge (u3)
                        [bend left,out=20,in=167](u2) edge (s3)
                        
			[bend left]
                        (s1) edge (u3)
                        [bend right,out=-40]
                        [bend right, in=-155, out=-42] (u1) edge (s3)
                        [bend left, in=-180, out=30](t1) edge (u3)
                        [bend left, out=20, in=150](u2) edge (t3)
                        [in=165,out=10](u1) edge (t3)
                        [out=10](u1) edge (t2)
                        [bend right,out=0,in=165](t2) edge (s3)
                        [bend right,out=0,in=165](t1) edge (s3)
                        [bend right,out=5,in=180](s1) edge (t2)
                        [bend right,out=10,in=180](s1) edge (t3)
                        [bend left,out=2,in=-180](u3) edge (s2)
                        ;	
		\path [->]
                        (s1) edge (s2)
                        (s2) edge (s3)
                        (t1) edge (t2)
                        (t2) edge (t3)
                        (u1) edge (u2)
                        (u2) edge (u3)
                        [bend left,out=-26,in=-160](s3) edge (s1)
                        [bend right,out=-20,in=-160](t3) edge (t1)
                        [bend left,out=-20,in=-160](u3) edge (u1)
			 ;
                \path[dotted,thick] (s1) edge (a1)
				     (s2) edge (a2) 
				     (s3) edge (a3); 
                \path[->]

                        (s3) edge (a1)

                        (t2) edge (a3)

                        (u3) edge (a1)
                        [bend left] (t3) edge (a1)
                        [out=-30,in=-145](t1) edge (a3)
                        [bend left,out=20,in=160]
                         (a1) edge (t1)
                         (a1) edge (u1)
                        [bend right,out=-20,in=-160]
                         (a2) edge (t2)
                         (a2) edge (u2)
                         (a3) edge (t3)
                         (a3) edge (u3)
                         [bend left,out=15,in=176](u2) edge (a3)
                         [bend right,out=0,in=-176](s1) edge (a3)
                         [bend right,out=0,in=-172](s2) edge (a3)
                         [bend right,out=5,in=-175](u1) edge (a3)
                         [bend right,out=-3,in=-180](u1) edge (a2)
                         [bend right,out=0,in=-170](u2) edge (a1)
                         [bend right,out=0,in=-175](t2) edge (a1)
                         [bend left,out=0,in=177](s2) edge (a1)
                         [bend left,out=10,in=160](s3) edge (a2)
                         [bend left,out=10,in=160](t3) edge (a2)
                         [bend left,out=5,in=170](u3) edge (a2)
                         [bend right,out=-10,in=-160](s1) edge (a2)
                         [bend right,out=-10,in=-160](t1) edge (a2)
                        ;
\end{tikzpicture}
\lvspace{-3mm}
\caption{Guessing the orientation of non-explicit conflicts is not enough.}
\label{fig:more_than_orientation}
\lvspace{-1mm}
\end{figure}

This is another indicator, yet far from a proof,
that the EC-conjecture holds and
by that 
Theorem~\ref{thm:stb_c-signature} describes the exact characterization of
the c-signature under stable semantics.

\lvspace{2mm}
\mysection{Discussion}

We introduced and studied the novel class of $\sigma$-compact argumentation frameworks for $\sigma$ among naive, stable, stage, semi-stable and preferred semantics.
We provided the full relationships between these classes, and showed that the extension verification problem is still $\coNP$-hard for stage, semi-stable and preferred semantics.
We next addressed the question of compact realizability: 
Given a set of extensions, is there a compact {\sc af} with this set of extensions under semantics $\sigma$?
Towards this end, we first used and extended recent results on maximal numbers of extensions to provide shortcuts for showing non-realizability.
Lastly we studied signatures, sets of compactly realizable extension-sets, and provided sufficient conditions for compact realizability.
This culminated in the explicit-conflict conjecture, a deep and interesting question in its own right:
Given an {\sc af}, can all implicit conflicts be made explicit?

Our work bears considerable potential for further research.
First and foremost, the explicit-conflict conjecture is an interesting research question.
But the EC-conjecture (and compact {\sc af}s in general) should not be mistaken for a mere theoretical exercise.
There is a fundamental computational significance to compactness:
When searching for extensions, arguments span the search space, since extensions are to be found among the subsets of the set of all arguments.
Hence the more arguments, the larger the search space.
Compact {\sc af}s are argument-minimal since none of the arguments can be removed without changing the outcome, thus leading to a minimal search space.
The explicit-conflict conjecture plays a further important role in this game:
implicit conflicts are something that {\sc af} solvers have to deduce on their own, paying mostly with computation time.
If there are no implicit conflicts in the sense that all of them have been made explicit, solvers have maximal information to guide search.



\smaller


\end{document}
